\def\BibTeX{{\rm B\kern-.05em{\sc i\kern-.025em b}\kern-.08em
    T\kern-.1667em\lower.7ex\hbox{E}\kern-.125emX}}
\newcommand{\removelatexerror}{\let\@latex@error\@gobble}
\newtheorem{theorem}{Theorem}
\newtheorem{corr}{Corollary}
\newtheorem{remark}{Remark}
\newtheorem{lemma}{Lemma}
\newtheorem{definition}{Definition}
\newcommand{\norm}[1]{\left\lVert#1\right\rVert}
\newcommand{\defeq}{\vcentcolon=}
\newcommand{\hatvect}[1]{\bm{\hat{#1}}}
\newcommand{\vecthat}[1]{\bm{\hat{#1}}}
\newcommand{\vectcheck}[1]{\bm{\check{#1}}}
\newcommand{\vect}[1]{\bm{#1}}
\DeclareMathOperator*{\argmin}{arg\,min}
\newcommand{\expectation}{\mathbb{E}}
\newcommand{\independent}{\perp \!\!\! \perp}
\newcommand{\paramstep}[2]{\ensuremath{{#1}}^{(#2)}}
\newcommand{\innerprod}[2]{\ensuremath{\langle #1,#2 \rangle}}
\newcommand{\floor}[1]{\ensuremath{\lfloor #1 \rfloor}}
\newcommand{\Lloss}{\ensuremath{\mathcal{L}}}
\newcommand{\maxone}[1]{\ensuremath{(#1)^{++}}}
\newlength\mylen
\begin{document}

\title{A Machine Learning Framework for Distributed Functional Compression over Wireless Channels in IoT}

\author{
	\IEEEauthorblockN{Yashas Malur Saidutta, Afshin Abdi, and Faramarz Fekri}
		\vspace{-10mm}
		\thanks{Yashas Malur Saidutta was with the Department of Electrical and Computer Engineering at Geogia Institute of Technology, Atlanta, GA 30318 USA. He is now with the Samsung Research America, Mountain View, CA-94043, USA. All work was performed while with Georgia Institute of Technology. (e-mail: yashas.saidutta@gatech.edu). Afshin Abdi was with the Department of Electrical and Computer Engineering at Geogia Institute of Technology, Atlanta, GA-30318, USA. He is now with the Qualcomm Technologies, Inc., San Diego, CA-92121, USA (e-mail:abdi@gatech.edu). Faramarz Fekri is with the Department of Electrical and Computer Engineering, Geogia Institute of Technology, Atlanta, GA-30318, USA (e-mail: faramarz.fekri@gatech.edu).}
}

\maketitle

\begin{abstract}
	IoT devices generating enormous data and state-of-the-art machine learning techniques together will revolutionize cyber-physical systems. In many diverse fields, from autonomous driving to augmented reality, distributed IoT devices compute specific target functions without simple forms like obstacle detection, object recognition, etc. Traditional cloud-based methods that focus on transferring data to a central location either for training or inference place enormous strain on network resources. To address this, we develop, to the best of our knowledge, the first machine learning framework for distributed functional compression over both the Gaussian Multiple Access Channel (GMAC) and orthogonal AWGN channels. Due to the Kolmogorov-Arnold representation theorem, our machine learning framework can, by design, compute any arbitrary function for the desired functional compression task in IoT. Importantly the raw sensory data are never transferred to a central node for training or inference, thus reducing communication. For these algorithms, we provide theoretical convergence guarantees and upper bounds on communication. Our simulations show that the learned encoders and decoders for functional compression perform significantly better than traditional approaches, are robust to channel condition changes and sensor outages. Compared to the cloud-based scenario, our algorithms reduce channel use by two orders of magnitude.
	
\end{abstract}

\begin{IEEEkeywords}
Internet of Things, Distributed Functional Compression, Noisy Wireless Channels, Deep Learning.
\end{IEEEkeywords}

\section{Introduction}
Internet of Things (IoT), set to revolutionize cyber-physical systems, is underpinned by the recent breakthroughs in computing, communication, and machine learning. A majority of the 75 billion IoT devices will be connected over wireless networks and will be collecting close to two exabytes of data per day. Combining such staggering level of data with machine learning can lead to unprecedented applications. In many diverse areas like autonomous driving \cite{baek2020vehicle}, chemical/nuclear power plant monitoring \cite{li2017using}, environment monitoring \cite{ullo2020advances}, and augmented reality \cite{Choudhary20cvpr}, distributed IoT devices collectively compute specific target functions without simple known forms like failure prediction, obstacle detection, etc. Traditional cloud-based methods focus on transferring the edge data to a central location for model training, which places tremendous stress on the network resources. Alternatively, we develop a machine learning framework where we leverage distributed training data to learn models that compute a specific function in such a way that the raw data never leaves the IoT device where it is collected.

We focus on the fundamental question, ``How to train a neural network model for a specific function $\mathcal{F}\left(\vect{x}_1,\dots,\vect{x}_N\right)$ without explicitly communicating the massive training data $(\vect{x}_1^{(b)},\dots,\vect{x}_N^{(b)})_{b=1}^{B}$ that is split among the $N$ nodes in a distributed wireless network?". In this setup shown in \Cref{fig:jscc_fc_awgn_all}, we will leverage the distributed training data to learn a global collaborative model that will help to compute a specific target function at a fusion center. Unlike classical ML applications, the model itself is distributed among the various nodes. Specifically, we use an edge router and sensor/edge nodes setup. The sensor nodes observe the data in a distributed manner and perform processing without coordination with other sensor nodes. Following which, the edge router serves as a fusion center for this processed data and approximates the value of some target function. The goal is to send relevant information to the edge router in a communication efficient manner.

\subsection{Related Work}

\noindent \textbf{Distributed Functional Compression:} There are many works in distributed functional compression. Works over orthogonal channels assume that the source observations are independent given the function value (like the CEO problem) \cite{berger1996ceo, viswanathan1997quadratic, prabhakaran2004rate, oohama1998rate, he2015lower, uugur2020vector}, employ asymptotic methods \cite{doshi2007distributed, doshi2010functional, feizi2014network}, or consider only linear functions \cite{krithivasan2009lattices, lalitha2013linear, wagner2010distributed}. Works over GMAC focus on simple target functions like linear functions, geometric mean, etc. \cite{nazer2007computation, kortke2014analog, goldenbaum2013harnessing, goldenbaum2013robust, goldenbaum2012analog}. The works leveraging distributed data for learning \cite{hanna2020distributed, jankowski2020joint, aguerri2017distributed, zaidi2020distributed} do not focus on communication efficiency during training.

\noindent \textbf{Distributed and Federated Learning:} In the general context of machine learning, \cite{xu2020acceleration, gupta2020training} studied model parallelism which, is more useful in a cluster environment rather than wireless channels. In split learning, works focus on reducing communication by using GMAC instead of orthogonal channels \cite{krouka2021communication} or architectural changes \cite{koda2020communication}.

\subsection{Contributions}
\begin{enumerate}
	\item To the best of our knowledge, we develop the first machine learning framework for distributed functional compression over GMAC and orthogonal AWGN channels that leverage distributed training data collected using smart IoT devices.
	\item We develop a three-stage algorithm for distributed training where the raw sensory data never leaves the IoT devices, either for training or inference. 
	\item We exploit the channel structure and the classification nature of the target function to reduce communication from sensor node to edge router by completely removing end-to-end training. We show that this algorithm converges to a stationary point of the non-convex loss function and the number of communication rounds $T< \mathcal{O}\left(\frac{N}{\delta^2}\right)$, where $\delta$ is the minimum norm of the gradient encountered, and $N$ is the number of nodes.
\end{enumerate}

\noindent \emph{Notations:} Upper case letters denote random variables, and bold upper case denotes random vectors.

\section{Distributed Wireless-ML Engine for Functional Compression}
\label{sec:prob_def}
\begin{figure}[h]
	\vspace{-4mm}
	\centering
	\includegraphics[width=0.5\linewidth]{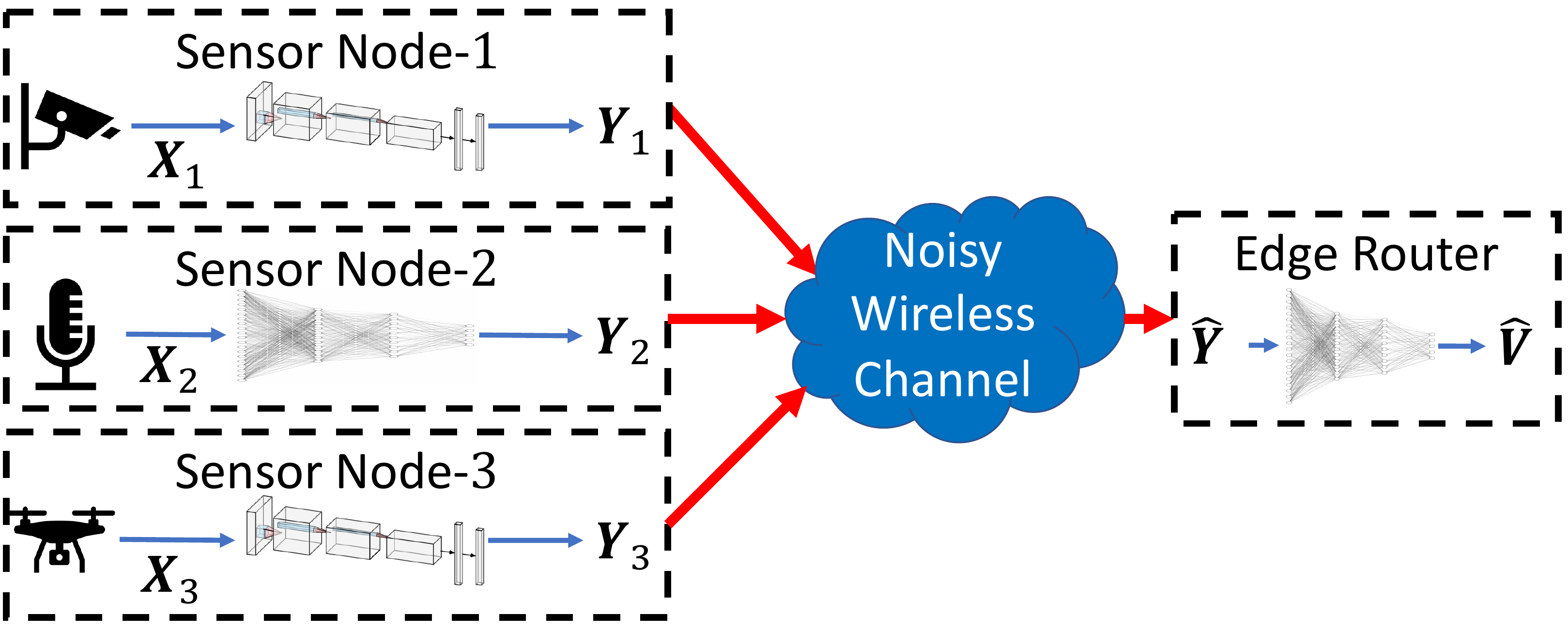}
	\caption{Distributed wireless-ML engine for functional compression with $N=3$ sensor (edge) nodes.}
	\label{fig:jscc_fc_awgn_all}
	\vspace{-8mm}
\end{figure}

Consider a setup with $N$ spatially separated sensor nodes and an edge router, as shown in \Cref{fig:jscc_fc_awgn_all}. We use $n \in \{1,\dots,N\}$ to index the sensor nodes. Then, $\vect{X}_n \in \mathcal{X}_n \subseteq \mathbb{R}^{M_n}$ is the random variable that represents the information source observed by node-$n$. The edge router attempts to recover some target function, $\vect{v}=\mathcal{F}(\vect{x}_1,\dots,\vect{x}_N)$. In fact, except for the discussion in \cref{subsec:distributed_training_classification}, our methods can be used for the more general problem of approximating the value of a random variable $\vect{V}$ where $I(\vect{X}^N;\vect{V})\geq 0$. 

Each sensor node-$n$ has an encoding function $g_e^{(n)}: \mathcal{X}_n \rightarrow \mathcal{Y}_n$,  where $\mathcal{Y}_n \subseteq \mathbb{R}^{K_n}$. $K_n$ represents the number of channel uses for node-$n$. Further, each sensor node-$n$ is subject to power constraint $\frac{1}{K_n}\expectation_{\vect{X}_n} \left[\norm{\vect{Y}_n}_2^2\right] \leq P_T^{(n)}$, where $\expectation$ represents expectation. The noisy channel $h$ maps $\{\mathcal{Y}_1,\dots,\mathcal{Y}_N,\mathcal{Z}\} \rightarrow \mathcal{\hat{Y}}$, where $\vect{Z} \in \mathcal{Z}$ is an independent random vector that represents the randomness introduced by the channel and $\mathcal{\vecthat{Y}} \subseteq \mathbb{R}^K$. The random variable received at the edge router is $\vecthat{Y}$. The edge router employs a decoding function $g_d:\mathcal{\vecthat{Y}} \rightarrow \mathcal{\vecthat{V}}$ to give an estimate of $\mathcal{F}$.
Neural networks parametrize the encoding function at node-$n$ and the decoding function at the router with parameters $\vect{\Theta}_n$ and $\vect{\Theta}_{N+1}$ respectively.

Let us denote the set of encoding functions by $g_e \defeq \{g_e^{(1)},\dots,g_e^{(N)}\}$, $p(v,\vect{x}^N)$ be the joint distribution of $\left(V,\vect{X}^N\right)$, and $p(\vect{z})$ the distribtuion of $\vect{Z}$. The overall objective of this setup can be formalized as a constrained optimization problem of the form
\begin{equation}
		\argmin_{g_e,g_d} \expectation_{\vect{V},\vect{X}^N,\vect{Z}} \left[\mathcal{D}_{\vect{V}} \left( \vect{v}, \vecthat{v} \right) \right]
		\, \textrm{s.t.} \, \expectation_{\vect{X}_n} \left[\norm{\vect{Y}_n}_2^2\right] \leq K_n P_T^{(n)}. 
	\label{eqn:overall_opti}
\end{equation}
Here $\mathcal{D}_{\vect{V}}$ represents some distortion measure between $\vect{v}$ and $\vecthat{v}$, and $\vecthat{v} = g_d(h(g_e^{(1)}(\vect{x}_1),\dots,g_e^{(N)}(\vect{x}_N),\vect{z}))$. In this formulation, the power constraint implicitly enforces the rate constraint.  In this paper, the joint distribution of $V,\vect{X}^N$ is unknown, and instead, we use a set of i.i.d. samples $(v^{(b)},\vect{x}_1^{(b)},\dots,\vect{x}_N^{(b)})_{b=1}^{B}$, where $B$ is the number of samples.

We assume that the channel mapping is $h(\vect{Y}_1,\dots,\vect{Y}_N,\vect{Z}) = h_1 (\vect{Y}_1,\dots,\vect{Y}_N)+\vect{Z}$. 
\subsubsection{Orthogonal AWGN channel} The received $\vecthat{Y}$ is
\begin{equation}
	\vecthat{Y} = 
	\begin{bmatrix}
		\vecthat{Y}_1 \\
		\vdots \\
		\vecthat{Y}_N \\
	\end{bmatrix}
	=
	\begin{bmatrix}
		\vect{Y}_1 \\
		\vdots \\
		\vect{Y}_N \\
	\end{bmatrix}
	+
	\vect{Z}
	\label{eqn:awgn_def}
\end{equation}
Here, $\vect{Z} \thicksim \mathcal{N}\left(\vect{0}, \sigma^2_z \vect{I}_{K}\right)$ denotes the AWGN noise component and $\vect{I}_{K}$ is the identity matrix of dimension $K$, and $K=\sum_{n=1}^{N}K_n$. 

\subsubsection{Gaussian MAC} The received $\vecthat{Y}$ is
\begin{equation}
	\vecthat{Y} = \sum_{n=1}^{N} \vect{Y}_n + \vect{Z}.
\end{equation}
Here $\vect{Z} \thicksim \mathcal{N}\left(\vect{0}, \sigma^2_z \vect{I}_{K}\right)$ and $K=K_1=\dots=K_N$.

Interestingly for both channel models, when there is no channel noise, the setup in \Cref{fig:jscc_fc_awgn_all} can realize any \emph{arbitrary multivariate continuous function}. This follows from Hilbert's thirteenth problem and the Kolmogorov-Arnold representation theorem \cite{kolmogorov1961representation, arnold2009functions}, which showed that any function $\mathcal{F}(\vect{x}_1,\dots,\vect{x}_n)$ has a nomographic representation $\psi(\sum_{n=1}\phi_n (\vect{x}_n))$. This applies trivially to the case of noiseless GMAC. In the noiseless orthogonal channel setup, we can see this by decomposing the output as $g_d(\vect{y}^N) = {g}_d' (\sum_{n=1} W_d^{(n)} g_e^{(n)}(\vect{x}_n))$ where $[W_d^{(1)},\dots,W_d^{(N)}]^T$ is the decoder network's first layer's weight and $g_d'$ is the rest of the network. However, both channels need $\sum_n M_n$ transmissions \cite{ostrand1965dimension}. In other words, our ML framework, by design, does not lose any optimality in terms of realizing the function $\mathcal{F}(\cdot)$.

\section{A tale of three loss functions}
\label{sec:tale_of_three_loss_functions}
We consider three loss functions to learn the encoding and decoding functions, as described in the following \footnote{For a more detailed treatment of this section, please refer to our conference paper \cite{saidutta2021analog}}.

\subsection{Method 1: Autoencoder based learning}
If we can ensure that the power constraint is satisfied, then the optimization problem simply reduces to minimizing the distortion. To address the former requirement, we can normalize each $\vect{y}_n \; \forall \vect{x}_n \in \mathcal{X}_n$ i.e., $\norm{g_e^{(n)}(\vect{x}_n)}_2^2=K_n P_T^{(n)}$. Thus the minimization objective can be written as
\begin{equation}
	\mathcal{L}_{\textrm{A}} = \mathbb{E}_{\vect{V},\vect{X}^N,\vect{Z}} \left[ \mathcal{D}_{V}\left(\vect{v}, g_d\left( h\left( \{ g_e^{(n)}(\vect{x}_n) \}_{n=1}^{N}, \vect{z} \right) \right) \right) \right].
	\label{eqn:au_loss_function}
\end{equation}

\subsection{Method 2: Unconstrained Optimization}
We can also convert the constrained optimization problem in \eqref{eqn:overall_opti} to an unconstrained optimization problem using Lagrange multipliers. This gives us a minimization objective of the form
\begin{equation}
	\mathcal{L}_{\textrm{L}} = \mathbb{E}_{\vect{V},\vect{X}^N,\vect{Z}} \left[ \mathcal{D}_{V}\left(\vect{v}, g_d\left( h\left( \{ g_e^{(n)}(\vect{x}_n) \}_{n=1}^{N}, \vect{z} \right) \right) \right) \right. + \left. \sum_{n=1}^{N} \lambda_n \norm{g_e^{(n)}(\vect{x}_n)}_2^2 \right].
	\label{eqn:sl_loss_function}
\end{equation}
Here $\lambda_n \; \forall n \in \{1,\dots,N\}$ are the Lagrange multipliers.

\subsection{Method 3: Variational Information Botteleneck}
\label{subsec:vib}
In \cite{tishby2000information}, Tishby et. al. proposed the Information Bottleneck (IB) theory as a generalization to the Rate-Distortion theory of Shannon \cite{cover1999elements}. They let another RV of interest $\vect{V}$ dictate what features are conserved in the compressed representation $\vecthat{Y}$ of the source $\vect{X}^N$. Finding the compressed representation $p(\vecthat{y}|\vect{x}^N)$ is formulated as an unconstrained optimization $-I(\vect{V}; \vecthat{Y}) + \lambda I(\vect{X}^N;\vecthat{Y})$, where $\lambda$ is the Lagrange mulitplier and $I$ is the mutual information. Since the distributions involved in the mutual information computation do not have closed forms, we use variational approximations. Let $q(\vect{v}, \vecthat{y})$ and $r(\vecthat{y})$ be the variational approximation of $p(\vect{v}, \vecthat{y})$ and $p(\vecthat{y})$, respectively. Then we get an upper bound on the IB objective of the form
\begin{multline}
	\mkern-18mu -I(\vect{V}; \vecthat{Y}) + \lambda I(\vect{X}^N;\vecthat{Y}) \\
	\mkern-18mu \mkern-18mu \mkern-18mu \mkern-18mu \mkern-18mu \mkern-18mu \mkern-18mu \mkern-18mu \mkern-18mu \leq -\mathbb{E}_{\vect{V},\hatvect{Y}} \left[ \log 	\left( q(\vect{v} \mid \hatvect{y}) \right) \right]  -H(\vect{V}) + \lambda \mathbb{E}_{\vect{X}^N, \hatvect{Y}} \left[ \log 	p(\hatvect{y} \mid \vect{x}^N) \right] - \lambda \mathbb{E}_{\vect{X}^N, \hatvect{Y}} \left[ \log r(\hatvect{y}) \right].
	\label{eqn:vib_simplification}
\end{multline}
Here $H(\vect{V})$ represents the entropy of the random variable $\vect{V}$. Since the noisy channels considered are of the form $\vecthat{y} = h_1(\vect{y}_1,\dots,\vect{y}_N) + \vect{z}$ where $h_1$ is deterministic, $\mathbb{E}_{\vect{X}^N, \hatvect{Y}} \left[ \log 	p(\hatvect{y} \mid \vect{x}^N) \right]$ simplifies to $\mathbb{E}_{\vect{Z}} \left[ \log 	p(\vect{z}) \right]$, which is independent of the encoding and decoding functions. By modeling $q(\vect{v} | \hatvect{y})\propto \exp\left(-\mathcal{D}_{\vect{V}}\left(\vect{v}, g_d\left( \vecthat{y}  \right) \right) \right)$, we can write the minimization objective as
\begin{equation}
	\mathcal{L}_{I} = \mathbb{E}_{\vect{V},\vect{X}^N, \vect{Z}} \left[ \mathcal{D}_{\vect{V}}\left(\vect{v}, g_d\left( h\left( \{ g_e^{(n)}(\vect{x}_n) \}_{n=1}^{N}, \vect{z} \right) \right) \right) \right. -  \left. \lambda\log \left( r\left( h\left( \{ g_e^{(n)}(\vect{x}_n) \}_{n=1}^{N}, \vect{z} \right) \right) \right)\right].
	\label{eqn:ib_loss_func}
\end{equation}

\subsection{Theoretical comparison}
We can show that all the loss functions \eqref{eqn:au_loss_function}, \eqref{eqn:sl_loss_function}, and \eqref{eqn:ib_loss_func} are variational approximations of the Indirect Rate-Distortion problem's minimization objective. In the Indirect Rate-Distortion problem \cite{dobrushin1962information, witsenhausen1980indirect}, a node observes the source $\vect{V}$ through some noisy channel whose output is $\vect{X}^N$. The node uses $\vect{X}^N$ to send a codeword across another rate-limited channel such that the receiver can recover $\vect{V}$. The approximation of $\vect{V}$ at the receiver is $\vecthat{V}$. In the asymptotic case (where the rate $R\leq C$ and $C$ is the channel capacity of the noisy channel), this results in an optimization problem of the form,
\begin{equation}
	\argmin_{p\left(\vecthat{v} \mid \vect{x}^N\right)} \expectation_{\vect{V},\vecthat{V}} \left[\mathcal{D}(\vect{v},\vecthat{v})\right] + \lambda I(\vect{X}^N;\vecthat{V}).
	\label{eqn:IRD_objective}
\end{equation}
Here $\lambda$ is the Lagrange multiplier chosen such that $R \leq C$. One can use \eqref{eqn:IRD_objective} to obtain the optimal encoder and decoder.

The connection between our distributed functional compression framework and the indirect rate distortion problem is presented by \Cref{theo:IB_SL_AU_IRD_objective}. The theorem is derived by first defining two constants $A_1$ and $A_2$ and using the deterministic nature of the encoding and decoding functions. $A_1$ is defined as
\begin{equation}
	A_1 \defeq \sum_{n=1}^N \log (S_{K_n}),
\end{equation}
where $S_{K_n}$ represents the surface area of a $K_n$ dimensional hypersphere with radius $\sqrt{K_n P_T^{(n)}}$. $A_2$ is defined as
\begin{equation}
	A_2 \defeq \sum_{n=1}^N \left(\frac{K_n}{2} \log \left(2\pi \right)+\frac{1}{2} \log \left(K_n P_T^{(n)}\right)\right).
\end{equation}

\begin{theorem}
	If the encoding functions $\{g_e^{(n)}(\cdot)\}_{n=1}^N$ and the decoding function $g_d(\cdot)$ are all deterministic, then for a fixed $\lambda$, the IB based loss function
	\begin{equation}
		\mathbb{E}_{\vect{V},\vect{X}^N, \vect{Z}} \left[ \mathcal{D}_{\vect{V}}\left(\vect{v}, g_d\left( h\left( \{ g_e^{(n)}(\vect{x}_n) \}_{n=1}^{N}, \vect{z} \right) \right) \right) \right. -  \left. \lambda\log \left( r\left( h\left( \{ g_e^{(n)}(\vect{x}_n) \}_{n=1}^{N}, \vect{z} \right) \right) \right)\right] - \lambda H(\vect{Z})
		\label{eqn:theo_IB}
	\end{equation}
	is the variational approximation to a tigher upper bound on \eqref{eqn:IRD_objective}
	than the autoencoder based loss function
	\begin{subequations}
		\begin{equation}
			\mathbb{E}_{\vect{V},\vect{X}^N,\vect{Z}} \left[ \mathcal{D}_{\vect{V}}\left(\vect{v}, g_d\left( h\left( \{ g_e^{(n)}(\vect{x}_n) \}_{n=1}^{N}, \vect{z} \right) \right) \right) \right] + \lambda A_1,
			\label{eqn:theo_AU}
		\end{equation}
		and the lagrangian based loss function
		\begin{equation}
			\mathbb{E}_{\vect{V},\vect{X}^N,\vect{Z}} \left[ \mathcal{D}_{{V}}\left(\vect{v}, g_d\left( h\left( \{ g_e^{(n)}(\vect{x}_n) \}_{n=1}^{N}, \vect{z} \right) \right) \right) \right] + \sum_{n=1}^{N} \frac{\lambda}{K_n P_T^{(n)}}\mathbb{E}_{\vect{X}} \left[ \norm{g_e^i \left(\vect{x}_i \right)}_2^2 \right] + \lambda A_2.
			\label{eqn:theo_SL}
		\end{equation}
	\end{subequations}
	\label{theo:IB_SL_AU_IRD_objective}
\end{theorem}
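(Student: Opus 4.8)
The plan is to exploit the fact that all three objectives \eqref{eqn:theo_IB}, \eqref{eqn:theo_AU}, \eqref{eqn:theo_SL} carry the identical distortion term $\mathbb{E}_{\vect{V},\vect{X}^N,\vect{Z}}[\mathcal{D}_{\vect{V}}(\vect{v},\vecthat{v})]$, so the whole comparison collapses onto their ``rate'' terms, i.e.\ onto how each objective upper bounds the penalty $\lambda I(\vect{X}^N;\vecthat{V})$ in \eqref{eqn:IRD_objective}. First I would record the variational-decoder identity already invoked in \cref{subsec:vib}: modelling $q(\vect{v}\mid\vecthat{y})\propto\exp(-\mathcal{D}_{\vect{V}}(\vect{v},g_d(\vecthat{y})))$ makes the shared distortion term the common surrogate for the distortion part of \eqref{eqn:IRD_objective}, so this piece need not be re-examined and the argument reduces to a statement about entropies.

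Second, I would prove the single master inequality from which everything follows. Because the encoders are deterministic and $\vecthat{Y}=h_1(\vect{Y}_1,\dots,\vect{Y}_N)+\vect{Z}$, conditioning on $\vect{X}^N$ leaves only the channel noise, so $H(\vecthat{Y}\mid\vect{X}^N)=H(\vect{Z})$ and hence $I(\vect{X}^N;\vecthat{Y})=H(\vecthat{Y})-H(\vect{Z})$. Combining this with the data-processing inequality $I(\vect{X}^N;\vecthat{V})\le I(\vect{X}^N;\vecthat{Y})$ and the Gibbs/cross-entropy bound $H(\vecthat{Y})=-\mathbb{E}[\log p(\vecthat{y})]\le-\mathbb{E}[\log r(\vecthat{y})]$, valid for any density $r$ with equality iff $r=p(\vecthat{y})$, yields
\begin{equation}
	\lambda I(\vect{X}^N;\vecthat{V}) \;\le\; -\lambda\,\mathbb{E}\!\left[\log r(\vecthat{y})\right]-\lambda H(\vect{Z}).
	\label{eqn:master}
\end{equation}
The right-hand side is exactly the rate term of \eqref{eqn:theo_IB}, so \eqref{eqn:theo_IB} is a valid upper bound on \eqref{eqn:IRD_objective} for \emph{every} $r$, and it is tightest precisely when $r$ equals the true marginal $p(\vecthat{y})$, which is what the learned variational distribution is trained to approximate.

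Third, I would recover \eqref{eqn:theo_AU} and \eqref{eqn:theo_SL} as two other valid upper bounds on $I(\vect{X}^N;\vecthat{Y})$ obtained from fixed, encoder-independent maximum-entropy surrogates for the representation, and then invoke Gibbs' inequality to conclude that \eqref{eqn:theo_IB} dominates both. For the autoencoder loss the normalization confines each $\vect{Y}_n$ to the sphere of radius $\sqrt{K_n P_T^{(n)}}$, whose entropy is maximized by the uniform law with surface value $\log(S_{K_n})$; summing over $n$ produces $A_1$. For the Lagrangian loss the natural surrogate is the Gaussian maximum-entropy bound driven by the second moment, the power term supplying $\sum_n \frac{\lambda}{K_n P_T^{(n)}}\mathbb{E}[\norm{g_e^{(n)}(\vect{x}_n)}_2^2]$ and the remaining deterministic pieces accumulating into $A_2$. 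Since each of these fixes $r$ to a suboptimal surrogate while \eqref{eqn:theo_IB} drives $r$ toward $p(\vecthat{y})$, the inequality $-\mathbb{E}[\log p(\vecthat{y})]\le-\mathbb{E}[\log r(\vecthat{y})]$ immediately makes \eqref{eqn:theo_IB} no larger than either \eqref{eqn:theo_AU} or \eqref{eqn:theo_SL}, i.e.\ the tighter bound.

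I expect the main obstacle to be the rigorous bookkeeping in the autoencoder case. Because the normalized $\vect{Y}_n$ lives on a measure-zero sphere while $\vecthat{Y}$ has a full-dimensional density after convolution with $\vect{Z}$, the uniform-on-sphere surrogate is singular with respect to Lebesgue measure on $\mathbb{R}^{K_n}$, so $-\mathbb{E}[\log r(\vecthat{y})]$ must be read as a pre-noise surface entropy rather than an honest cross-entropy; turning the surface constant $A_1$ into a legitimate upper bound on $I(\vect{X}^N;\vecthat{Y})$ across noise levels is the delicate step. Confirming that the Gaussian argument reproduces $A_2$ together with the exact coefficient $\tfrac{1}{K_nP_T^{(n)}}$ once the noise convolution is accounted for is the analogous check for \eqref{eqn:theo_SL}. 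The mutual-information manipulations and the orientation of every inequality, all three bounding \eqref{eqn:IRD_objective} from above, are otherwise routine.
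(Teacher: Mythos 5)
Your treatment of the IB bound is exactly the paper's: data processing gives $I(\vect{X}^N;\vecthat{V})\le I(\vect{X}^N;\vecthat{Y})$, additivity and independence of the noise give $I(\vect{X}^N;\vecthat{Y})=H(\vecthat{Y})-H(\vect{Z})$, and the cross-entropy (Gibbs) bound with the learned $r(\vecthat{y})$ yields \eqref{eqn:theo_IB}. The gap is in how you try to recover \eqref{eqn:theo_AU} and \eqref{eqn:theo_SL}: you plug fixed maximum-entropy surrogates into the \emph{same} bound on $H(\vecthat{Y})$, i.e.\ into $-\mathbb{E}[\log r(\vecthat{y})]$ for the \emph{post-noise} marginal. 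As you yourself flag, this fails for the autoencoder case: after convolution with the Gaussian noise, $\vecthat{Y}$ has a full-dimensional density, the uniform-on-sphere surrogate is singular with respect to it, and $-\mathbb{E}[\log r(\vecthat{y})]$ is $+\infty$/ill-defined. You label this ``the delicate step'' but offer no resolution, and within your framework there is none; the analogous post-noise Gaussian computation for \eqref{eqn:theo_SL} also does not cleanly produce the stated coefficient. So the proposal as written does not establish the theorem for the two comparison losses.

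The missing idea is one further application of data processing along the Markov chain $\vect{X}^N\leftrightarrow\vect{Y}^N\leftrightarrow\vecthat{Y}\leftrightarrow\vecthat{V}$: the paper bounds $I(\vect{X}^N;\vecthat{Y})\le I(\vect{X}^N;\vect{Y}^N)=H(\vect{Y}^N)$ (deterministic encoders) and applies the surrogates $\prod_{n} r(\vect{y}_n)$ to the \emph{clean, pre-noise} codewords. There the autoencoder's normalization puts each $\vect{Y}_n$ exactly on the sphere of radius $\sqrt{K_n P_T^{(n)}}$, so the uniform-on-sphere law is a legitimate density with respect to the surface measure and contributes the constant $\log(S_{K_n})$, summing to $A_1$; the Gaussian surrogate $\mathcal{N}(\vect{0},K_nP_T^{(n)}I)$ produces the power penalty $\frac{\lambda}{K_nP_T^{(n)}}\mathbb{E}\left[\norm{g_e^{(n)}(\vect{x}_n)}_2^2\right]$ plus the constants in $A_2$. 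This also changes the logic of ``tighter'': it is not your Gibbs comparison $-\mathbb{E}[\log p(\vecthat{y})]\le-\mathbb{E}[\log r(\vecthat{y})]$ between an optimal and a suboptimal $r$ for the same marginal, but rather that \eqref{eqn:theo_IB} variationally approximates the smaller quantity $I(\vect{X}^N;\vecthat{Y})$ in the chain, while \eqref{eqn:theo_AU} and \eqref{eqn:theo_SL} are (fixed-surrogate) upper bounds on the strictly looser quantity $H(\vect{Y}^N)$. With that one extra step, the rest of your argument, including the reduction to the rate terms via the shared distortion surrogate, goes through.
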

\vspace{-10mm}
A sketch of the proof is given in \cref{subsec:proof_theo_IB_SL_AU_IRD_objective}. The above theorem conveys that the training objective \eqref{eqn:ib_loss_func} is likely to be a tighter upper bound on the optimal IRD objective than \eqref{eqn:au_loss_function} and \eqref{eqn:sl_loss_function}.

\section{Distributed Training}
The sensor nodes collect the training data in a distributed manner. Transferring the raw sensory data to a central location to train the system can be communication-intensive. Alternatively, we can train over the communication channel as long as the channel is additive. However, it would be suitable to reduce the communication burden during training, as much as possible. To address this we propose two alternative frameworks based on the IB-based loss function \eqref{eqn:ib_loss_func}.

\subsection{Three-stage training of distributed functional compression}
\label{subsec:three_stage_gen}
One way to train the system is to perform end-to-end training over the channel. In this setup, the encoders encode the data and transmit it across the noisy channel to the edge router in the forward pass. In the backward pass, the edge router computes the gradient w.r.t. the loss $\Lloss$ to update $\vect{\Theta}_{N+1}$. To compute the gradients of $\frac{\partial\Lloss}{\partial \vect{\Theta}_n}$ it is sufficient to obtain $\frac{\partial\Lloss}{\partial \vecthat{y}}$ from the edge router. Then, by exploiting the chain rule of differentiation, we can train the encoders.

However, especially during the initial part of the training, the encoder transmissions $g_e^{(n)}(\vect{x}_n)$ are not informative about the input $\vect{x}_n$. Thus, we waste a lot of communication bandwidth in these initial training iterations. By assuming that the functional value for the training dataset is available at all nodes, we propose a novel three-stage training framework to overcome this waste. In the first stage, each sensor node trains the encoder independently without any communication cost. This stage ensures that the transmission from node-$n$ is maximally informative about the function value $\vect{v}$ before any actual communication to the edge router can occur. In the second stage, the edge decoder is trained independently with a one-time communication cost of transmitting the training dataset in the \emph{encoded} form. This stage ensures that the gradients transmitted to the sensor nodes in the later stage are maximally informative about $\Lloss$. In the third stage, the entire system is fine-tuned for optimal performance.

\subsubsection{Stage 1} We can write the objective in this stage by using the information bottleneck principle as
\begin{equation}
	\min -I(\vect{V};\vecthat{Y}_n) + \lambda I(\vect{X}_n,\vecthat{Y}_n).
\end{equation}
This ensures that the transmitted signal retains information about $\vect{V}$ while removing any unnecessary information about $\vect{X}_n$. Similar to the simplifications in \cref{subsec:vib}, we use variational approximations. To approximate the first term, we use a local helper decoder $g_h^{(n)}: \mathcal{\vecthat{Y}}_n \rightarrow \mathcal{\vecthat{V}}$ with parameters $\vect{\Phi}_n$. The objective function for minimization is
\begin{equation}
	\mathcal{L}_{I}^{(n)} = \mathbb{E}_{\vect{X},\vect{V}, \vectcheck{Z}} \left[ \mathcal{D}_{\vect{V}}\left(\vect{v}, g_h^{(n)}\left( g_e^{(n)}(\vect{x}_n)+\vectcheck{z}_n \right) \right) \right. -  \left. \lambda\log \left( r^{(n)}\left( g_e^{(n)}(\vect{x}_n)+\vectcheck{z}_n \right) \right)\right].
	\label{eqn:ib_loss_func_sensorn_stage1}
\end{equation}
Here $\vectcheck{z}_n$ is a simulated noisy channel, and $r^{(n)}(\cdot)$ refers to the variational approximation to the distribution of the simulated noisy received signal at node-$n$.

\emph{Gaussian MAC:} In communication over GMAC, the independent encoder training can lead to a scenario where the superposition of signals will destroy all meaningful information. To overcome this, we propose an idea to embed the transmitted values from different sensor nodes in orthogonal subspaces of $\mathbb{R}^K$. The encoding function decomposes as
\begin{equation}
	g_e^{(n)}(\vect{x}_n) = T_n W_n g_e^{(n,1)}(\vect{x}_n).
\end{equation}
Here $g_e^{(n,1)}(\vect{x}_n) \in \mathbb{R}^{d_n}$, $W_n \in \mathbb{R}^{r \times d_n}$, and $T_n \in \mathbb{R}^{K \times r}$.  We use a fixed orthonormal matrix $T_n$ for embedding the $r$-dimensional vector into a $k$-dimensional space. $T_m^T T_n=\delta(m-n)\vect{I}_r$, where $\delta(0)=1$ and $0$ otherwise. Let us denote $\vect{i}_n = W_n g_e^{(n,1)}(\vect{x}_n)$. Then, $\vecthat{y} = \sum_{n=1}^N T_n \vect{i}_n + \vect{z}$. By the orthonormal property, the edge router can recover a noisy approximation $\vecthat{i}_n = \vect{i}_n + T_n^T \vect{z}$. This structural constraint ensures that there is actionable information at the edge router for stage-2. We remove the constraints in stage-3.

\subsubsection{Stage 2} In this stage, the edge decoder is trained based on the encoded data for the entire dataset transmitted once by the sensor nodes after stage-1. This ensures that the gradients from the decoder in the subsequent stage will be maximally informative about the loss function $\Lloss$. Thus we can write the training loss function as
\begin{equation}
	\mathcal{L}_{I}^{(ER)} = \expectation_{\vecthat{Y},{V}} \left[\mathcal{D}_{{V}}(\vect{v},g_d(\vecthat{y}))\right].
	\label{eqn:loss_func_fusion_center}
\end{equation}

\subsubsection{Stage 3} The training in stage-1 for the encoders is independent, and thus, the encoders learn greedy encoding functions that are maximally informative about $\vect{v}$. In our setup, we are interested in collaboratively computing the target function value. Thus, we perform end-to-end fine-tuning of all the encoders and the decoder using the loss function \eqref{eqn:ib_loss_func}.

\subsection{Distributed training of classifiers}
\label{subsec:distributed_training_classification}
Even though the previous discussion in \cref{subsec:three_stage_gen} reduces the amount of communication during training, we can exploit the channel structure and the classification nature of the function to eliminate the need for any end-to-end iterations completely. Moreover, this algorithm only needs communication once in some $E \gg 1$ iterations, unlike the end-to-end mechanism, which needs to communicate every iteration. We assume that $\mathcal{V}$ is the set of class labels and the class labels are $1,\dots,|\mathcal{V}|$.

We reformulate the optimization problem in stage-3 into a variant of asynchronous block coordinate descent \cite{liu2014asynchronous} as
\begin{subequations}
	\begin{equation}
		\vect{\Theta}_n^* = \argmin_{\vect{\Theta}_n} 	\Lloss_I(\vect{\Theta}_1^*,\dots,\vect{\Theta}_n,\dots,\vect{\Theta}_N^*,\vect{\Theta}_{N+1}^*),
		\label{eqn:enc_stage3a_opt_prob}
	\end{equation}
	\begin{equation}
		\vect{\Theta}_{N+1}^* = \argmin_{\vect{\Theta}_n} 		\Lloss_I(\vect{\Theta}_1^*,\dots,\vect{\Theta}_N^*,\vect{\Theta}_{N+1}).
		\label{eqn:dec_stage3a_opt_prob}
	\end{equation}
\end{subequations}
Here $n \in \{1,\dots,N\}$ and $\Lloss_I$ is defined in \eqref{eqn:ib_loss_func}. We alternate between optimization problems \eqref{eqn:enc_stage3a_opt_prob} and \eqref{eqn:dec_stage3a_opt_prob} till convergence. The equation \eqref{eqn:enc_stage3a_opt_prob} is a set of optimization problems that is carried out asynchronously at the sensor nodes. In the following we provide a methodology for approximating the loss function for \eqref{eqn:enc_stage3a_opt_prob} without exchanging neural network parameters. The optimization in \eqref{eqn:dec_stage3a_opt_prob} can be carried out in a similar manner to stage-2.

\subsubsection{Gaussian MAC}
\label{subsubsec:gmac_stage3} 
To understand how to approximate the loss function locally, let us revisit \eqref{eqn:vib_simplification}. We can modify it as
\begin{multline}
	-I({V}; \vecthat{Y}) + \lambda I(\vect{X}^N;\vecthat{Y}) \\
	\leq -\mathbb{E}_{\vect{V},\hatvect{Y}} \left[ \log \left( q_h^{(n)}({v}| \hatvect{y}) \right) \right] + \expectation_{{V},\hatvect{Y}} \left[\log \left( \frac{q_h^{(n)}({v}| \hatvect{y})}{q_d({v}|\hatvect{y})} \right)\right] - \lambda \mathbb{E}_{\vect{X}^N, \hatvect{Y}} \left[ \log r(\hatvect{y}) \right] + A_3.
	\label{eqn:vib_simplification_again}
\end{multline}
Here $A_3$ represents all the constant terms in \eqref{eqn:vib_simplification}, $\vect{q}_d({v}|\hatvect{y})$ represents the probability of class-$v$ as predicted by the edge decoder, and $\vect{q}_h^{(n)}({v}|\hatvect{y})$ represents the same but as predicted by a local helper decoder at node-$n$. The second term is minimized when the predictive probability of the correct class labels from both the helper decoder and the edge decoder match. This is very similar to the knowledge distillation problem where a teacher classifier helps guide the training of a student classifier \cite{hinton2015distilling}. Further, knowledge distillation has shown excellent results by exploiting the \emph{dark knowledge} in the classifier output \cite{xu2018interpreting}. This dark knowledge refers to the implicit information contained in the predictive distribution of a classifier. For example, an image classified as a boat with probability $0.9$ and as a car with probability $0.1$ has to be encoded differently from an image that is classified as a boat with probability $0.9$ and plane $0.1$. So we modify the training loss function and replace the second term in \eqref{eqn:vib_simplification_again} with $\beta D_{KL}(\vect{q}_h^{(n)}(w| \hatvect{y}; \gamma)||\vect{q}_d(w|\hatvect{y}); \gamma)$, where $\beta$ is some weighting factor. Here $\vect{q}_d(w|\vecthat{y})$ represents the output distribution over the set of class labels $\mathcal{V}$ from the edge decoder, $\gamma$ is the temperature used in the softmax function \cite{hinton2015distilling}, and $D_{KL}$ is the KL divergence term.

\begin{figure}[h]
	\centering
	\includegraphics[width=0.9\linewidth]{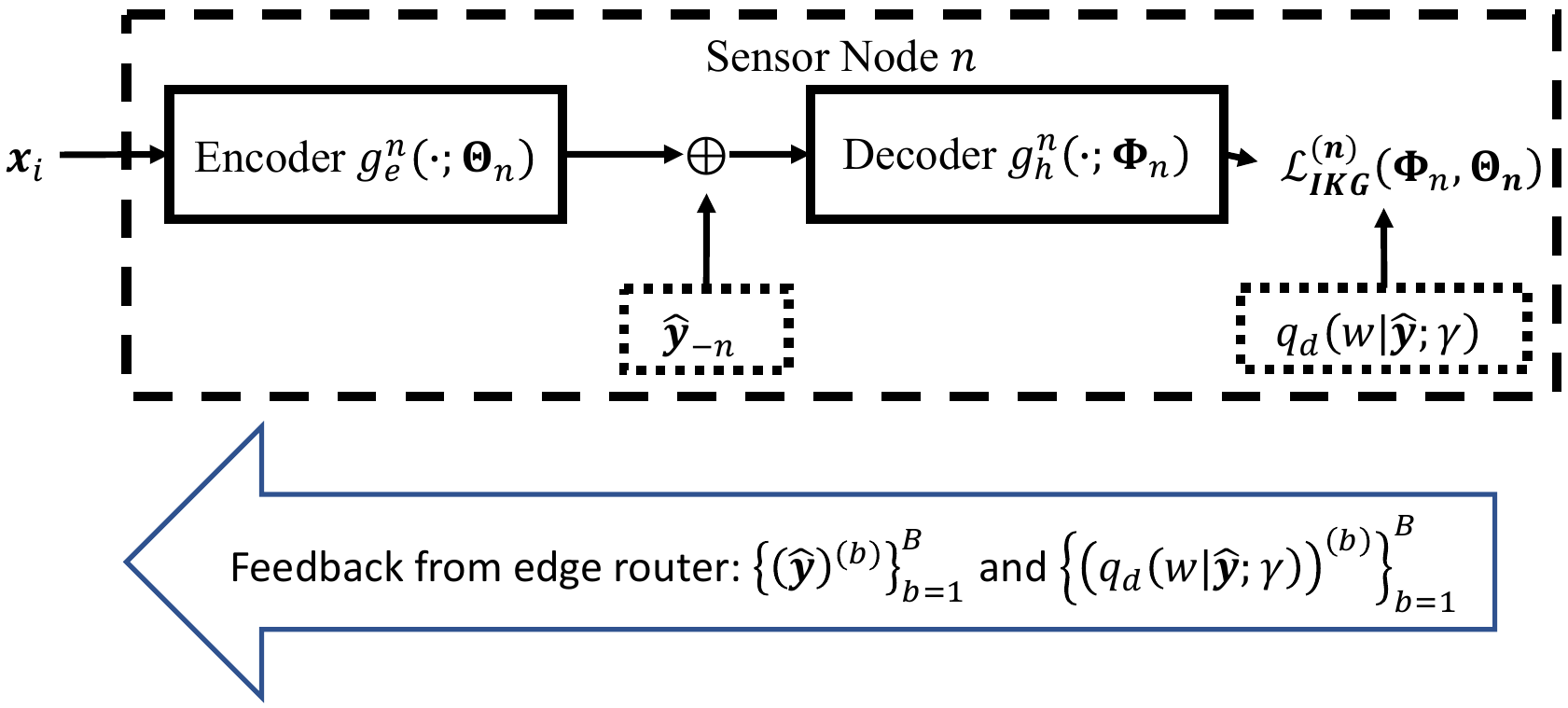}
	\caption{Stage 3: Training the encoder at the sensor node-$n$ for distributed classification over GMAC.}
	\label{fig:gmac_stage3_classifier}
\end{figure}
\Cref{fig:gmac_stage3_classifier} shows the training of the encoder and the helper decoder at node-$n$. We represent the feedback as $\{(\cdot)^{(b)}\} _{b=1}^{B}$, which denotes that the value of the term $(\cdot)$ is computed and sent as feedback for every example in the training dataset. We define $\vecthat{y}_{-n} \defeq \vecthat{y} - \vect{y}_n$. Note that the feedback happens every $E\gg 1$ iterations, and $\vecthat{y}_{-n}$ is held constant during that time. $r^{(n)}(\vecthat{y})$ is the local variational approximation of $p(\vecthat{y})$ at node-$n$. Thus we can write the final training loss function as
\begin{multline}
	\mathcal{L}_{IKG}^{(n)} = \mathbb{E}_{\vect{X}_n,{V}, \vecthat{Y}_{-n}} \left[ \mathcal{D}_{{V}}\left({v}, g_h^{(n)}\left( g_e^{(n)}(\vect{x}_n) + \vecthat{y}_{-n} \right) \right) \right. \\
	-  \left. \lambda\log \left( r^{(n)}\left( g_e^{(n)}(\vect{x}_n) + \vecthat{y}_{-n} \right) \right)\right.\\
	+ \left. \beta D_{KL}\left(\vect{q}^{(n)}_h(w|\vect{\tilde{y}}_n; \gamma) \mid \mid \vect{q}_d(w|\vecthat{y}; \gamma) \right) \right].
	\label{eqn:ikg_loss_func}
\end{multline}

\subsubsection{Orthogonal AWGN channel using Product of Experts}
\label{subsubsec:awgn_stage3}
 In AWGN channels, the edge router receives each encoder's transmission independently. The noisy received signal from node-$n$ is denoted as $\vecthat{y}_n$. Instead of using a standard neural network decoder whose input is $\vecthat{y}$, we use a Product of Experts (PoE) based decoder \cite{hinton2002training} which processes each $\vecthat{y}_n$ separately. This, as we shall show in \cref{subsubsec:awgn_sensor_dropout} provides great performance benefit during sensor outage with no loss in performance compared to a standard decoder.

In PoE, we assume that $\vect{q}_d(w|\vecthat{y}) \defeq\frac{1}{Z(\vecthat{y})} \prod_{n=1}\vect{q}^{(n)}_d({w}|\vecthat{y}_n)$ where $Z(\vecthat{y}) = \sum_{w \in \mathcal{V}} \prod_{n=1}\vect{q}^{(n)}_d({w}|\vecthat{y}_n)$. Using similar derivation as in \eqref{eqn:vib_simplification_again} we can show that
\begin{multline}
	\mkern-18mu -I({V}; \vecthat{Y}) + \lambda I(\vect{X}^N;\vecthat{Y}) \\
	\leq -\sum_{n=1}^{N} \left( \mathbb{E}_{{V},\hatvect{Y}_n} \left[ \log \left( q_h^{(n)}({v}| \hatvect{y}_n) \right) \right] \right. - \left. \mathbb{E}_{{V},\hatvect{Y}_n} \left[  \log \left( \frac{q_h^{(n)}({v}| \hatvect{y}_n)}{q_d^{(n)}({v}|\hatvect{y}_n)} \right) \right] + \lambda \mathbb{E}_{\vect{X}_n, \hatvect{Y}_n} \left[ \log r^{(n)}(\hatvect{y}_n) \right] \right)\\
	+\mathbb{E}_{\hatvect{Y}} \left[\log(Z(\vecthat{y}))\right] + A_3.
	\label{eqn:vib_simplification_awgn}
\end{multline}
Here $r^{(n)}(\vecthat{y}_n)$ is the local variational approximation for $p(\vecthat{y}_n)$. The summation term is of the form $\sum_n \Lloss_n$, and the gradients of $\vect{\Theta}_n$ only depend on the component $\Lloss_n$.
Unfortunately, due to the $\sum_{w \in \mathcal{V}}$ inside the $\log$ in the $\mathbb{E}_{\hatvect{Y}} \left[\log(Z(\vecthat{y}))\right]$ term, it cannot be decomposed. Instead, we use a local approximation of the form,
\begin{equation}
	Z_n = \sum_{w \in \mathcal{V}} \vect{q}^{(n)}_h({w}|\vecthat{y}_n) \prod_{m \neq n}\vect{q}^{(m)}_d({w}|\vecthat{y}_m).
	\label{eqn:compute_z_n}
\end{equation}
The product term represents the contributions of all other sensor nodes, other than node-$n$, to the output distribution. The term $Z_n$ ensures that the training of encoders is collaborative and accounts for the contribution from the other sensors' encoder. Thus using knowledge distillation, we can write the training loss function as
\begin{multline}
	\mathcal{L}_{IKA}^{(n)} = \mathbb{E}_{\vect{X}_n,{V}, \vectcheck{Z}_{n}} \left[ \mathcal{D}_{{V}}\left({v}, g_h^{(n)}\left( g_e^{(n)}(\vect{x}_n) + \vectcheck{z}_{n} \right) \right) \right. \left. - \log(Z_n) - \lambda\log \left( r\left( \vecthat{y}_n\right) \right)\right.\\
	+ \left. \beta D_{KL}\left(q_h^{(n)}(w| \hatvect{y}_n; \gamma) || q_d^{(n)}(w|\hatvect{y}_n; \gamma) \right) \right],
	\label{eqn:ika_loss_func}
\end{multline}
where $\vectcheck{z}_n$ is a simulated channel. \Cref{fig:awgn_stage3_classifier} shows the training of the encoder and helper decoder at sensor node-$n$. The feedback happens every $E\gg 1$ iterations.

\begin{figure}[h]
	\centering
	\includegraphics[width=0.9\linewidth]{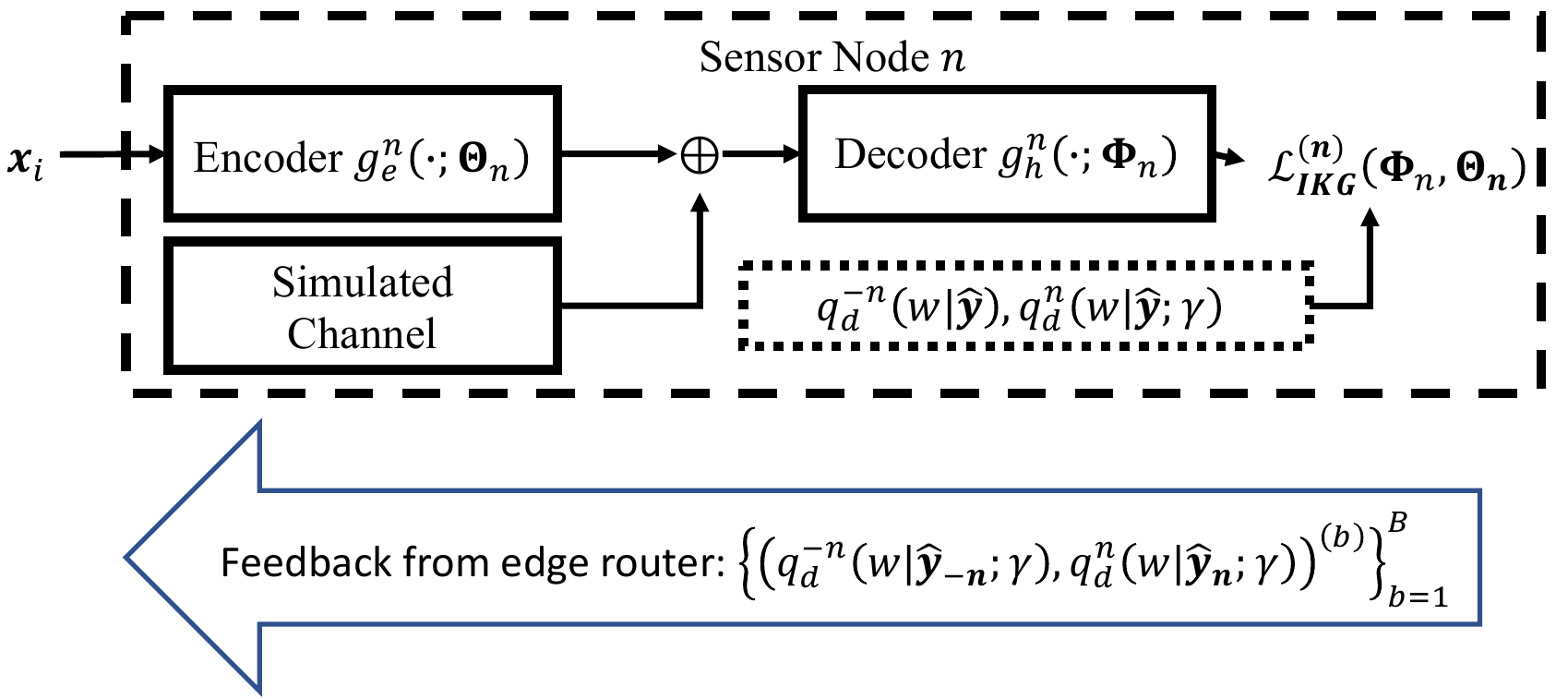}
	\caption{Stage 3: Training the encoder at the sensor node-$n$ for distributed classification over AWGN.}
	\label{fig:awgn_stage3_classifier}
	\vspace{-7mm}
\end{figure}

\subsection{Convergence Results for distributed training}

\subsubsection{Noiseless Gradient Approximation}

We use $n \in \{1,\dots,N+1\}$ to denote the nodes, with $N+1$ denoting the edge router. We assume that each node-$n$ updates its parameter block $\vect{\Theta}_n$ by performing gradient descent using $\nabla_{\vect{\Theta}_n} \Lloss (\vect{\Theta}_1,\dots,\vect{\Theta}_{N+1})$. To facilitate the computation of this gradient, we assume a modified setup where every node-$n$ broadcasts its latest parameters to the other nodes after $E$ steps of gradient descent. Let $\paramstep{\vect{\Theta}}{s} \defeq [ \paramstep{\vect{\Theta}_1}{s},\dots,\paramstep{\vect{\Theta}_{N+1}}{s} ]$ denote the vector containing the latest parameters from all nodes at some step $s$. The step number counts the total number of parameter updates across all parameter blocks so far. By definition, an increment of $s$ corresponds to an update of one block of parameters. Let us define a function $r(s,n)$ that returns the most recent step $\leq s$ that corresponds to an update of the parameter block $\vect{\Theta}_n$. Define $s' \defeq \floor{\frac{s}{(N+1)E}}(N+1)E$ which denotes the iteration number corresponding to the most recent exchange of parameters between the nodes. Then the current copy of parameters at node-$n$ can be written as $\paramstep{\hatvect{\Theta}}{s,n} = \left[\paramstep{\vect{\Theta}_1}{r(s',1)},\dots,\paramstep{\vect{\Theta}_n}{r(s,n)},\dots,\paramstep{\vect{\Theta}}{r(s',N+1)}\right]$. If block-$n$ was the parameter block updated at step $s$, we can write the global behavior of the algorithm as
\begin{equation}
	\paramstep{\vect{\Theta}}{s+1} = \paramstep{\vect{\Theta}}{s}  - \frac{\eta}{\mathrm{L}} \nabla_{n} \mathcal{L}\left( \paramstep{\hatvect{\Theta}}{s,n} \right).
\end{equation}
Here $\nabla_{n} \mathcal{L} \defeq \left[\vect{0}, \dots, \nabla_{\vect{\Theta}_{n}} \mathcal{L}, \dots, \vect{0}\right]^T$. Since the largest difference between $s$ and $s'$ is $(N+1)E$ and the largest difference between $s'$ and $r(s',\cdot)$ is $NE$, we can conclude that the largest difference between $s$ and $s'$ is $\leq (2N+1)E$. This implies that no node is computing gradients using a parameter block that is more than $(2N+1)E$ steps older than the current step. We denote this bound on the age of the parameter block as $\tau$.

We make the following assumptions.
\begin{itemize}
	\item \textbf{Assumption 1.} The loss function $\mathcal{L}$ is assumed to be non-convex, has $\mathrm{L}$-Lipschitz gradients, and a finite minimum. We define the minima as $\Lloss^*$.
	\item \textbf{Assumption 2.} Let the learning rate be chosen that it satisfies $\eta < \frac{2}{2\tau+1}$. Define $\alpha \defeq \frac{2}{\eta \left(2\tau+1 \right)} > 1$.
\end{itemize}
%

\begin{theorem}
	If $\mathcal{L}$ satisifies Assumption 1 and the learning rate satisfies Assumption 2, then
	\begin{multline}
		\frac{1}{S} \sum_{s=1}^{S} \norm{\nabla \mathcal{L}\left(\paramstep{\vect{\Theta}}{k}\right)}_2^2 \\
		< \frac{\Lloss^{(1)}-\Lloss^*}{S} \left( \frac{\mathrm{L} \alpha^2 (2N+1)E \left(4(N+1)E+1\right)}{\mu^2 \left(\alpha-1\right)} \right. + \left. \frac{2 \mathrm{L} (N+1)(2N+1)E \left(4(2N+1)E-1\right)}{\left(1-\mu^2\right)\left(\alpha-1\right)\left(2(N+1)E +\frac{1}{2}\right)} \right).
		\label{eqn:theo-grad-avg-upperbound-noiseless}
	\end{multline}
	Here, $\mu^2 \in \left(0,1\right)$ and $\Lloss^{(1)}$ is the value of the loss function computed at the initialization point. Further,
	\begin{equation}
		\lim_{s \rightarrow \infty} \norm{\nabla \mathcal{L}\left(\paramstep{\vect{\Theta}}{s}\right)}_2^2 = 0.
	\end{equation}
	\label{theo-grad-avg-upperbound-noiseless}
\end{theorem}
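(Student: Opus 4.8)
The plan is to run the standard $\mathrm{L}$-smoothness descent argument on a single block update, accumulate it over all steps, and handle two nonstandard ingredients: (i) the gradient staleness introduced by the delayed broadcasts, bounded by $\tau=(2N+1)E$, and (ii) the conversion from the per-step \emph{block} gradient actually used into the \emph{full} gradient norm $\norm{\nabla\mathcal{L}(\paramstep{\vect{\Theta}}{s})}_2^2$ that appears on the left-hand side.

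First I would apply the descent lemma to $\paramstep{\vect{\Theta}}{s+1}=\paramstep{\vect{\Theta}}{s}-\frac{\eta}{\mathrm{L}}\nabla_n\mathcal{L}(\paramstep{\hatvect{\Theta}}{s,n})$, writing $g_s\defeq\nabla_{\vect{\Theta}_n}\mathcal{L}(\paramstep{\hatvect{\Theta}}{s,n})$ for the stale block gradient used at step $s$. Because $\nabla_n\mathcal{L}$ is supported on block $n$, the linear term collapses to $\innerprod{\nabla_{\vect{\Theta}_n}\mathcal{L}(\paramstep{\vect{\Theta}}{s})}{g_s}$, i.e.\ the true block gradient at the current iterate paired with the stale one. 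Splitting this with the polarization identity $\innerprod{a}{b}=\tfrac12(\norm{a}^2+\norm{b}^2-\norm{a-b}^2)$ produces a favorable $-\frac{\eta}{2\mathrm{L}}\norm{\nabla_{\vect{\Theta}_n}\mathcal{L}(\paramstep{\vect{\Theta}}{s})}_2^2$ term, a $\norm{g_s}_2^2$ term whose coefficient merges with the quadratic $\frac{\eta^2}{2\mathrm{L}}\norm{g_s}_2^2$ of the descent lemma into $-\frac{\eta(1-\eta)}{2\mathrm{L}}\norm{g_s}_2^2$, and a \emph{staleness error} $\frac{\eta}{2\mathrm{L}}\norm{\nabla_{\vect{\Theta}_n}\mathcal{L}(\paramstep{\vect{\Theta}}{s})-g_s}_2^2$. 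I would bound the last term by $\mathrm{L}$-Lipschitzness, $\norm{\nabla_{\vect{\Theta}_n}\mathcal{L}(\paramstep{\vect{\Theta}}{s})-g_s}_2^2\le\mathrm{L}^2\norm{\paramstep{\vect{\Theta}}{s}-\paramstep{\hatvect{\Theta}}{s,n}}_2^2$, expand $\paramstep{\vect{\Theta}}{s}-\paramstep{\hatvect{\Theta}}{s,n}$ as a telescoping sum of the at most $\tau$ increments inside the delay window, and apply Cauchy--Schwarz to get a factor $\tau$ times a window sum of $\frac{\eta^2}{\mathrm{L}^2}\norm{g_j}_2^2$. Summing the per-step inequality from $s=1$ to $S$ telescopes the left side to $\Lloss^*-\Lloss^{(1)}$, and swapping the order of summation in the windowed double sum charges each $\norm{g_j}_2^2$ at most $\tau$ times, yielding an overall $\tau^2$ factor.

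The main obstacle is converting the accumulated block gradients into the full gradient norm on the left-hand side. Over one broadcast epoch of length $(N+1)E$ every block is updated, so I would group steps into epochs and use Lipschitz continuity to move each block-gradient evaluation to a common reference iterate of the epoch, then sum the $N+1$ block contributions to reassemble $\norm{\nabla\mathcal{L}}_2^2$; this is where the $(N+1)E$ and $(2N+1)E$ factors in the stated constant arise. The free parameter $\mu^2\in(0,1)$ enters through a weighted Young's inequality $\innerprod{a}{b}\le\frac{\mu^2}{2}\norm{a}_2^2+\frac{1}{2\mu^2}\norm{b}_2^2$ applied to each cross term, splitting it into a full-gradient part scaled by $1/\mu^2$ and an increment/used-gradient part scaled by $1/(1-\mu^2)$, which is precisely the two-summand structure of the bound.

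Assumption 2 ($\alpha>1$) then keeps the coefficient multiplying $\sum_s\norm{\nabla\mathcal{L}(\paramstep{\vect{\Theta}}{s})}_2^2$ strictly positive once the used-gradient terms are absorbed; dividing through by $S$ and by this coefficient (which carries the $\alpha-1$ denominator) gives the claimed inequality. For the limit, the bound shows $\sum_{s=1}^{S}\norm{\nabla\mathcal{L}(\paramstep{\vect{\Theta}}{s})}_2^2$ is upper bounded by a constant independent of $S$, so this nonnegative series converges and its terms must vanish, giving $\lim_{s\to\infty}\norm{\nabla\mathcal{L}(\paramstep{\vect{\Theta}}{s})}_2^2=0$. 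I expect the bookkeeping in the block-to-full-gradient aggregation, together with the simultaneous tracking of staleness and the $\mu^2$ balance, to be the delicate part that fixes the exact constants.
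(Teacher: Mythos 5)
Your strategy is sound and delivers the same qualitative statement, but it is a genuinely different route from the paper's. The paper does not run a per-step descent-lemma recursion with direct absorption. Instead it introduces a Lyapunov function $\zeta_s = \mathcal{L}\left(\paramstep{\vect{\Theta}}{s}\right) - \Lloss^* + \frac{\mathrm{L}}{2\epsilon}\sum_{i}\left(i-(s-\tau)+1\right)\norm{\paramstep{\vect{\Delta}}{i}}_2^2$ with a \emph{weighted} staleness window, proves the monotone-decrease property $\zeta_s - \zeta_{s+1} \geq \frac{\mathrm{L}(\alpha-1)}{2}\left(\tau+\frac{1}{2}\right)\norm{\paramstep{\vect{\Delta}}{s}}_2^2 \geq 0$ (following the asynchronous block-coordinate-descent analysis of \cite{sun2017asynchronous}), and then, for \emph{every} step $s$ and every block $n$, bounds $\norm{\nabla_n \Lloss\left(\paramstep{\vect{\Theta}}{s}\right)}_2^2$ via the $\mu^2$-weighted Young split into the stale gradient actually used at the most recent update of block $n$ — which equals $\frac{\mathrm{L}^2}{\eta^2}\norm{\paramstep{\vect{\Delta}}{r(s,n)}}_2^2$ and is therefore a Lyapunov difference — plus an increment-window term over the last $2\tau$ steps; counting repetitions (each Lyapunov difference recurs at most $\tau$ times) and telescoping $\sum_s \zeta_s - \zeta_{s+1} \leq \zeta_1 = \Lloss^{(1)}-\Lloss^*$ produces the stated constants. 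Your $\mu^2$ split is identical to the paper's, and your closing summability argument for the limit is exactly the paper's. What differs is the engine: your polarization of $\innerprod{\nabla_{\vect{\Theta}_n}\Lloss\left(\paramstep{\vect{\Theta}}{s}\right)}{g_s}$ followed by absorbing the $\tau^2$-inflated window sums into the negative used-gradient terms replaces the Lyapunov machinery with direct summation, and your epoch-reference reassembly of the full gradient requires an extra Lipschitz transfer that the paper avoids by decomposing $\norm{\nabla \Lloss\left(\paramstep{\vect{\Theta}}{s}\right)}_2^2 = \sum_n \norm{\nabla_n \Lloss\left(\paramstep{\vect{\Theta}}{s}\right)}_2^2$ at every step directly. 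Your route buys elementariness (no auxiliary potential); the paper's buys the explicit $(\alpha-1)^{-1}$ dependence and validity across the whole step-size range.

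One concrete caveat: your absorption step as sketched does not close on all of Assumption 2. After summing and double-counting the windowed staleness terms, you need the coefficient $1-\eta-\eta^2\tau^2$ on $\sum_s \norm{g_s}_2^2$ to be nonnegative. Writing $\eta = \frac{2}{\alpha(2\tau+1)}$, this requires $\alpha^2(2\tau+1)^2 - 2\alpha(2\tau+1) - 4\tau^2 \geq 0$, and this expression equals $-1$ at $\alpha = 1$; so for $\alpha$ in a sliver just above $1$ (roughly $\alpha < 1 + \frac{1}{4\tau(2\tau+1)}$) the residual used-gradient term is positive and cannot simply be dropped. The weighted window inside the paper's Lyapunov function, together with the choice of $\epsilon$ satisfying $\epsilon + \frac{1}{\epsilon} = 1 + \frac{1}{\tau}\left(\frac{1}{\eta}-\frac{1}{2}\right)$, is precisely the device that makes the argument valid for every $\eta < \frac{2}{2\tau+1}$. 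To repair your version you would either shrink the admissible step size slightly (losing a negligible boundary sliver of Assumption 2, and changing the constants) or import that weighting into your window bounds.
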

\vspace{-15mm}
We provide a sketch of the proof for \cref{theo-grad-avg-upperbound-noiseless} in \cref{subsec:proof-theo-grad-avg-upperbound-noiseless}. This theorem shows that the convergence rate is upper bound by the square of the number of sensor nodes. It also shows that the algorithm converges to a stationary point of $\Lloss$. \Cref{remark:comm_rounds_noiseless} shows an upper bound on the number of communication rounds required.
\begin{remark}
	Let $S=(N+1)ET+S'$ where $T \defeq \floor{\frac{S}{(N+1)E}}$. Let $\min_{k=1,\dots,K} \norm{\nabla \mathcal{L}\left(\paramstep{\vect{\Theta}}{k}\right)}_2^2 = \delta$. Then, from \Cref{theo-grad-avg-upperbound-noiseless} $T < \mathcal{O}\left(\left(\Lloss^{(1)}-\Lloss^*\right)\frac{N^2}{\delta}\right)$.
	\label{remark:comm_rounds_noiseless}
\end{remark}

\subsubsection{Noisy Gradient Approximation}

In the previous theorem, we assumed that each node could compute gradients of the loss function, albeit with an older set of parameters. In this subsection we assume that the the gradients from the approximated loss function are an unbiased but noisy estimate of the actual loss gradients, where the noise has bounded variance.

Formally, the global parameter update is carried out in the following fashion,
\begin{equation}
	\paramstep{\vect{\Theta}}{s+1} = \paramstep{\vect{\Theta}}{s}  - \frac{\eta}{\mathrm{L}} \widetilde{\nabla}_{n} \mathcal{L}\left( \paramstep{\hatvect{\Theta}}{s,n} \right).
\end{equation}
Here $\widetilde{\nabla}_{n} \mathcal{L} \defeq {\nabla}_{n} \mathcal{L} + \vect{\epsilon}^{(s)}_n$ and $\vect{\epsilon}^{(k)}_l$ noise. Note that $\vect{\epsilon}^{(k)}_l$ is zero for all indices corresponding to parameters that are not in block-$n$, and the $s$ is used to indicate the parameter step at which it is acting. If we directly share all the parameters with all the sensors and the edge router, then $\vect{\epsilon}_l=\vect{0}$. Instead, we share some other information as feedback. We assume this feedback results in noisy but unbiased gradients at the individual nodes where the noise has bounded variance. Overall, we make the following assumptions
\begin{itemize}
	 \item \textbf{Assumption 3.} $\expectation \left[\widetilde{\nabla}_{l} \mathcal{L} \right] = \nabla_{l} \mathcal{L}$ and  $\expectation \left[ \norm{\widetilde{\nabla}_{l} \mathcal{L} - \nabla_{l} \mathcal{L}}^2_2\right] = \sigma^2_l < \infty$.
	 \item \textbf{Assumption 4.} The noise is independent across both parameter blocks and parameter steps, i.e., $\vect{\epsilon}^{(i)}_l \independent \vect{\epsilon}^{(j)}_m \, \forall i \neq j \text{ and } l,m \in \{1,\dots,N+1\}$.
	 \item \textbf{Assumption 5.} The learning rate is assumed to be $\eta \leq \frac{1}{\tau+1}$. Let $\beta \defeq \eta (\tau+1)$.
\end{itemize}

\begin{theorem}
	If $\mathcal{L}$ satisifies Assumption 2, the noise satsifies Assumption 3, and Assumption 4, and the learning rate satisfies Assumption 5, then
	\begin{multline}
		\frac{1}{S}\sum_{s=1}^{S} \expectation \left[\norm{\nabla \Lloss \left(\paramstep{\vect{\Theta}}{s}\right)}_2^2\right] \\
		< \frac{\Lloss^{(1)}-\Lloss^*}{S} \left( \frac{\tau \mathrm{L}}{\mu^2 \eta(1-\eta(\tau+1))} + \frac{2 \eta \mathrm{L} (N+1) (2\tau + 1)(4\tau - 1)}{(1-\mu^2)(4 - \eta (3\tau+4))} \right)
		\\
		+ \eta \sigma^2 \left( \frac{\left( \tau + 2 \right)}{2 \mu^2 (N+1)(1-\eta(\tau+1))} + \frac{2 \eta (N+1) (2\tau + 1)}{(1-\mu^2)(4 - \eta (3\tau+4))} \left( \frac{8 E\left(\tau -\frac{1}{2}\right)}{S} + 4 E \right) \right).
		\label{eqn:theo-grad-avg-upperbound-noisy-v2}
	\end{multline}
	Here, $\mu^2 \in \left(0,1\right)$, $N+1$ is the number of nodes, $E$ is the number of local iterations between parameter exchanges, $L^{(1)}$ is the loss at the initial point, and $\Lloss^*$ is the minimum loss possible.
	\label{theo-grad-avg-upperbound-noisy-v2}
\end{theorem}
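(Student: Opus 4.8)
The plan is to adapt the descent-lemma argument that establishes \Cref{theo-grad-avg-upperbound-noiseless}, replacing each deterministic inequality by its conditional-expectation counterpart and tracking the two channels through which the gradient noise $\vect{\epsilon}^{(s)}_n$ enters the bound. First I would invoke the $\mathrm{L}$-smoothness of $\mathcal{L}$ (Assumption~1) on the noisy update to obtain the per-step inequality
\begin{equation*}
	\mathcal{L}\!\left(\paramstep{\vect{\Theta}}{s+1}\right) \leq \mathcal{L}\!\left(\paramstep{\vect{\Theta}}{s}\right) - \frac{\eta}{\mathrm{L}} \left\langle \nabla_n \mathcal{L}\!\left(\paramstep{\vect{\Theta}}{s}\right), \widetilde{\nabla}_n \mathcal{L}\!\left(\paramstep{\hatvect{\Theta}}{s,n}\right) \right\rangle + \frac{\eta^2}{2\mathrm{L}} \norm{\widetilde{\nabla}_n \mathcal{L}\!\left(\paramstep{\hatvect{\Theta}}{s,n}\right)}_2^2,
\end{equation*}
and then condition on the filtration $\mathcal{F}_s$ of all randomness up to step $s$. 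Because the current noise is zero-mean (Assumption~3), the cross term collapses to the deterministic inner product $\langle \nabla_n \mathcal{L}(\paramstep{\vect{\Theta}}{s}), \nabla_n \mathcal{L}(\paramstep{\hatvect{\Theta}}{s,n})\rangle$, while the squared term separates into the squared stale gradient plus the variance $\sigma_n^2$; this is the first place the noise floor appears.

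Next I would linearize the staleness. Applying the polarization identity to the cross term exposes a discrepancy $\norm{\nabla_n \mathcal{L}(\paramstep{\vect{\Theta}}{s}) - \nabla_n \mathcal{L}(\paramstep{\hatvect{\Theta}}{s,n})}_2^2$, which by smoothness is at most $\mathrm{L}^2\norm{\paramstep{\vect{\Theta}}{s} - \paramstep{\hatvect{\Theta}}{s,n}}_2^2$. The displacement $\paramstep{\vect{\Theta}}{s} - \paramstep{\hatvect{\Theta}}{s,n}$ is a sum of at most $\tau$ in-flight updates, each of the form $-\tfrac{\eta}{\mathrm{L}}\widetilde{\nabla}$; bounding this sum by Cauchy--Schwarz across the delay window and taking expectations turns it into $\tfrac{\eta^2}{\mathrm{L}^2}\,\tau\sum_{j}\!\left(\expectation\norm{\nabla_{j}\mathcal{L}}_2^2 + \sigma^2\right)$, where $\nabla_j\mathcal{L}$ denotes the stale block gradient applied at step $j$. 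This is the second, and dominant, place the variance enters: every delayed update carries its own $\sigma^2$, which is exactly why the noise coefficient in \eqref{eqn:theo-grad-avg-upperbound-noisy-v2} scales through $\tau=(2N+1)E$. Assumption~4 guarantees that noise cross-terms across distinct steps vanish in expectation, so no correlation penalty is incurred.

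I would then sum the conditional inequalities over $s=1,\dots,S$, take total expectation, and telescope the loss differences to $\mathcal{L}^{(1)}-\mathcal{L}^*$. The complication is that the displacement term itself contains gradient norms at neighbouring steps, so the resulting inequality is self-referential in $\sum_s\expectation\norm{\nabla\mathcal{L}(\paramstep{\vect{\Theta}}{s})}_2^2$. The free parameter $\mu^2\in(0,1)$ is introduced precisely to close this recursion: it is the weight of a Young/Cauchy--Schwarz split that charges part of the delayed gradient content back against the current-step descent gain while keeping the remainder as error, and its two complementary factors are why both $1/\mu^2$ and $1/(1-\mu^2)$ surface in \eqref{eqn:theo-grad-avg-upperbound-noisy-v2}. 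Taking $\eta\le\tfrac{1}{\tau+1}$ (Assumption~5, tighter than the rate needed in the noiseless case) keeps the net coefficient on $\sum_s\expectation\norm{\nabla\mathcal{L}}_2^2$ strictly positive; dividing by $S$ then yields the two-part bound, with the telescoped loss gap producing the $\tfrac{\mathcal{L}^{(1)}-\mathcal{L}^*}{S}$ term and the accumulated variances producing the $\eta\sigma^2$ term.

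The step I expect to be the main obstacle is not the noise but the block-coordinate bookkeeping already present in the noiseless analysis. Each per-step inequality only yields the block gradient $\norm{\nabla_n\mathcal{L}(\paramstep{\vect{\Theta}}{s})}_2^2$, whereas the theorem bounds the full gradient $\norm{\nabla\mathcal{L}(\paramstep{\vect{\Theta}}{s})}_2^2=\sum_n\norm{\nabla_n\mathcal{L}}_2^2$. Reassembling the full gradient forces one to aggregate block gradients over an entire exchange cycle of $(N+1)E$ steps and to control, again via smoothness, how far each block gradient drifts within that cycle; this aggregation is what manufactures the delicate $N$- and $E$-dependent constants and, for the noisy case, the $\tfrac{8E(\tau-1/2)}{S}+4E$ factor multiplying the variance. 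Since \Cref{theo-grad-avg-upperbound-noiseless} already performs this cycle-level aggregation, I expect to reuse its combinatorial lemma essentially verbatim and merely thread the extra per-step and per-delay $\sigma^2$ contributions through it.
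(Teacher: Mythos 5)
Your plan is correct and follows essentially the same route as the paper's proof: smoothness plus conditional expectation (Assumptions 3--4) to isolate the per-step variance, a Young split with weight $\mu^2$ relating the current block gradient to the stale one plus an $\mathrm{L}^2$-weighted displacement, the displacement bounded by a window of at most $2\tau$ in-flight updates each carrying its own $\sigma^2$, and multiplicity counting over the $(N+1)E$ exchange cycle to reassemble the full gradient and telescope to $\Lloss^{(1)}-\Lloss^*$. The only presentational difference is that the paper closes your ``self-referential recursion'' by packaging it into an explicit Lyapunov function $\xi_s$ (via \Cref{lemma-lyapunov-diff-grad-square-noisy-special} and \Cref{corr-lyapunov-diff-delta-square-noisy-special}), which also supplies the bound $\expectation\left[\xi_s\right] \leq \xi_1 + \frac{\eta}{\mathrm{L}}\sum_{h=1}^{s-1}\sigma^2_{m_h}$ that replaces the monotonicity of the noiseless Lyapunov sequence in the cycle-level counting you propose to reuse.
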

Sketch of the proof is provided in \cref{sec:proof-theo-grad-avg-upperbound-noisy}. Similar to \Cref{theo-grad-avg-upperbound-noiseless}, \Cref{theo-grad-avg-upperbound-noisy-v2} also shows that the convergence rate is upper bound by the square of the number of sensor nodes. Similar to results corresponding to the convergence of Stochastic Gradient Descent, to reduce the effect of noise at convergence, it is necessary to reduce the learning rate. For small learning rates, we get the same convergence results as \Cref{theo-grad-avg-upperbound-noiseless}.

\section{Experiments}
\subsection{Experimental settings}
\begin{figure}[htbp]
	\begin{center}
		\begin{subfigure}[b]{0.49\textwidth}
			\centering
			\includegraphics[width=0.6\linewidth]{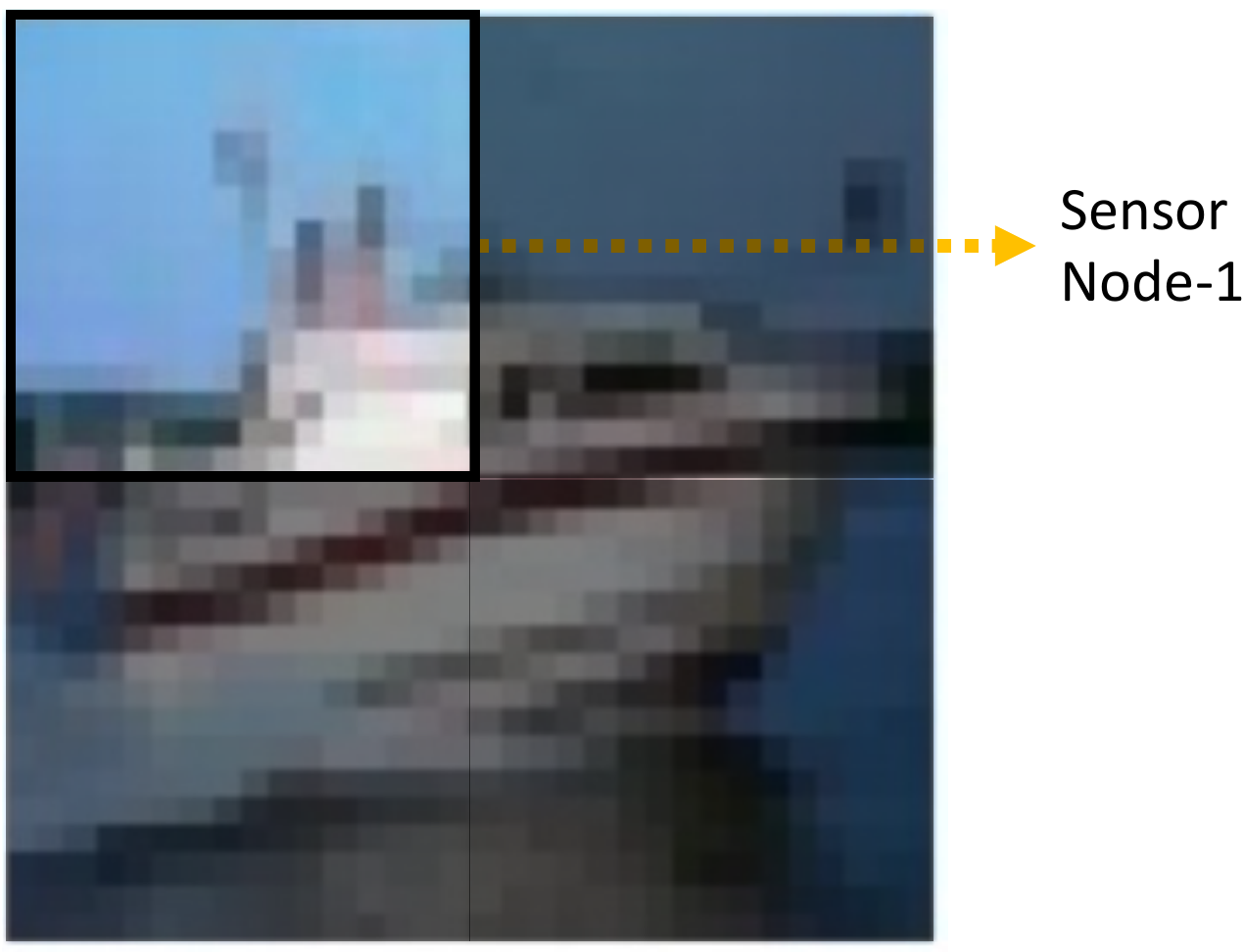}
			\caption{Sensor node-$1$}
			\vspace{-3mm}
			\label{fig:sensor1of9}
		\end{subfigure} \hfill
		\begin{subfigure}[b]{0.49\textwidth}
			\centering
			\includegraphics[width=0.6\linewidth]{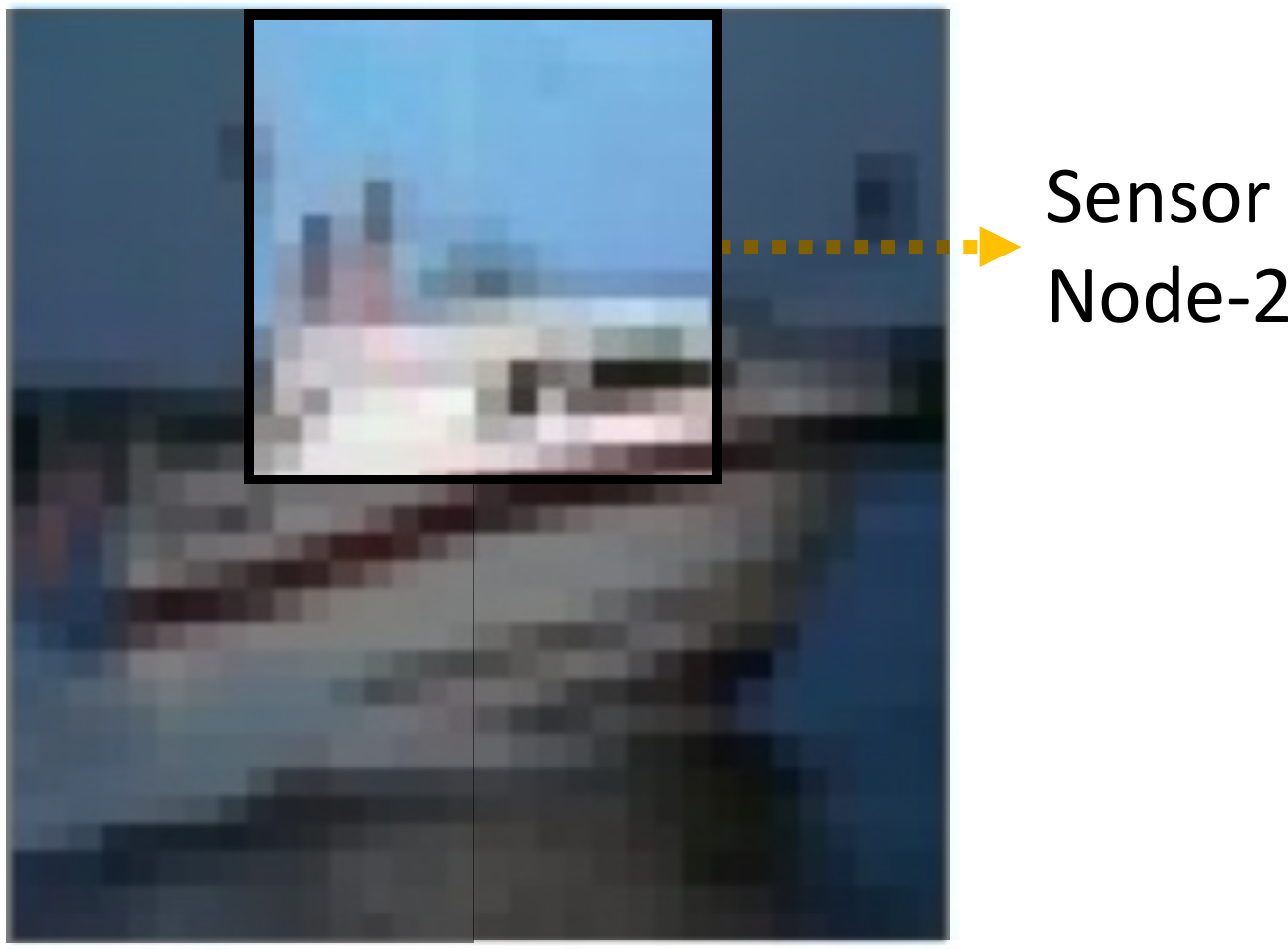}
			\caption{Sensor node-$2$}
			\vspace{-3mm}
			\label{fig:sensor2of9}
		\end{subfigure}
	\end{center}
	\vspace{-5mm}
	\caption{Inputs shown to sensor nodes in a $N=9$ setup.}
	\label{fig:cifar_setup_intersect_9}
	\vspace{-7mm}
\end{figure}

We use the CIFAR-10 dataset to perform experiments \cite{krizhevsky2009learning}. When $N=4$ nodes, each node observes a unique quadrant of the image, with the first node observing the top left quadrant, the second observing the top right quadrant, and so on. As the number of encoders increases, the $16 \times 16$ image patches are allowed to intersect, as shown in \Cref{fig:cifar_setup_intersect_9}. Our objective is to recover the classification label.

\begin{table*}[t]
	\centering
	\caption{Architecture of the DNNs used for the CIFAR10 dataset.}
	\vspace{-2mm}
	\begin{tabular}{|c||c|}
		\hline
		Name & Architecture Details \\
		\hline
		\hline
		VGG\_BLK($F$) & [Conv($F$,$3\times3$),Conv($F$,$3\times3$),MaxPool($2 \times 2$)]\\
		\hline
		Encoder & [VGG\_BLK($64$),VGG\_BLK($128$),VGG\_BLK($512$),VGG\_BLK($512$),FCN($1024$),FCN($512$),FCN($K$)] \\
		\hline
		Decoder & [FCN($512$),FCN($1024$),FCN($2048$),FCN($128$),FCN($10$)] \\
		\hline
	\end{tabular}
	\label{tab:arch_dets}
	\vspace{-5mm}
\end{table*}

\Cref{tab:arch_dets} shows the architectural details of the encoders and the decoder used. We use the Adam optimizer \cite{kingma2014adam} and regardless of the initial learning rate, we decay the learning rate by $0.5$ when the validation loss saturates. In the end-to-end training, stage-1, and stage-2 of the three-stage training, the initial learning rate is $10^{-3}$. In stage-3, the initial learning rate is $2.5 \times 10^{-4}$. We use 45000 images for training, 5000 for validation, and 10000 for testing with a batch size of 64. The classification accuracy presented here corresponds to ten repetitions over the test set. For all experiments, we assume $P_T^{(n)}=P_T$, and in the case of \eqref{eqn:sl_loss_function}, $\lambda_n=\lambda \, \forall n \in \{1,\dots,N\}$. We model the distribution of the received $\vecthat{y}$ as a product of independent Gaussian distributions for every dimension. Each component has zero mean and common variance. We fold the variance learning into the $\lambda$ selection problem. We set $E=30$ epochs, $\beta=0.1$, $\gamma=2$ for AWGN, and $\gamma=3$ for GMAC. 

Finally, to avoid the growing number of decoder expert networks at the edge router as $N$ increases in the PoE setup in \cref{subsubsec:awgn_stage3},  we implement $g_d^{(n)}(\cdot) = g_{dc}(\cdot,n)$, where $g_{dc}(\cdot)$ is a common network which gets an additional input $n$. Further, for the orthogonal AWGN channels  we can use only one feedback $\vect{q}_d(w|\vecthat{y})$ to approximate the other two \cref{subsubsec:awgn_stage3}, thus further reducing communication.

\subsection{Simulation results for orthogonal AWGN channels}
\subsubsection{Varying the channel capacity}
\begin{table}[h]
	\centering
	\caption{Classification performance over AWGN Channels for $N=4$ and various channel capacities $C$}
	\begin{tabular}{|c||c|c|c|c|c|}
		\hline
		Method & C=$12$ & C=$16$ & C=$20$  \\
		\hline
		\hline
		JPEG2000+Classifier (P2P) & 10\% & 10\% & 10\% \\
		\hline
		JSCC \cite{saidutta2021joint} +Classifier (P2P) & 32.87\% & 36.04\% & 39.03\% \\
		\hline
		IB (P2P) & 90.24\% & 90.25\% & 90.60\% \\
		\hline
		\hline
		NN-REG \cite{hanna2020distributed} & 68.07\% & 73.43\% & 78.12\% \\		
		\hline
		NN-GBI \cite{hanna2020distributed} & 65.16\% & 71.57\% & 81.18\% \\		
		\hline
		\hline
		Autoencoder & 69.55\% & 71.4\%3 & 73.74\% \\
		\hline
		Lagrange Method &  79.62\% & 80.97\% & 81.79\% \\
		\hline
		Information Bottleneck & 79.89\% & 81.79\% & 83.23\% \\
		\hline
	\end{tabular}
	\label{tab:awgn_varying_channel_cap}
	\vspace{-5mm}
\end{table}

We first study the performance of the system over varying channel conditions. $C$ represents the total capacity from all sensors to the edge router. \Cref{tab:awgn_varying_channel_cap} shows the performance of the system. The first three methods are point-to-point (P2P), i.e., the entire image is observed by one sensor. In the JPEG200 based scheme, we compress the image with lossy compression and use a capacity achieving code to transmit the compressed data. Since even at the highest capacity the compression ratio is $1228.8$, this scheme breaks down. In the second row we use a machine learning based Joint Source-Channel Coding scheme \cite{saidutta2021joint} followed by a classifier. Even though this system is better than the JPEG2000 based system, its performance is poor compared to the functional compression schemes showcased from the fourth row onwards.  In the third row the performance of a P2P system trained using the Information Bottleneck principle is shown. The P2P setup models a scenario where the distributed sensors are allowed to coordinate with each other prior to their communication with the edge router. Since our problem setup does not allow such coordination, the IB (P2P) results are an upper bound to the performance of a distributed setup. The other two comparison methods are got from \cite{hanna2020distributed}, who also assume a distributed setup similar to ours for quantization. However, since their setup is digital, we assume that they are operating along with a short length capacity achieving code. Our Information Bottleneck-based training \eqref{eqn:ib_loss_func} outperforms all other benchmarks (except the upper bound) and is the best amongst the three training loss functions presented in \cref{sec:tale_of_three_loss_functions}.

\Cref{fig:awgn_robustness} shows the robustness of the learned encoders and decoders by varying the channel conditions. We use the Information Bottleneck-based loss function for training. $C_{tr}$ is the channel capacity assumed while training and $C_{te}$ the  capacity at test time. We see that even when $|C_{tr}-C_{te}|=8$ bits, the performance loss is only around $1\%$.

\begin{minipage}{.47\textwidth}
	\centering
	\includegraphics[width=.8\linewidth]{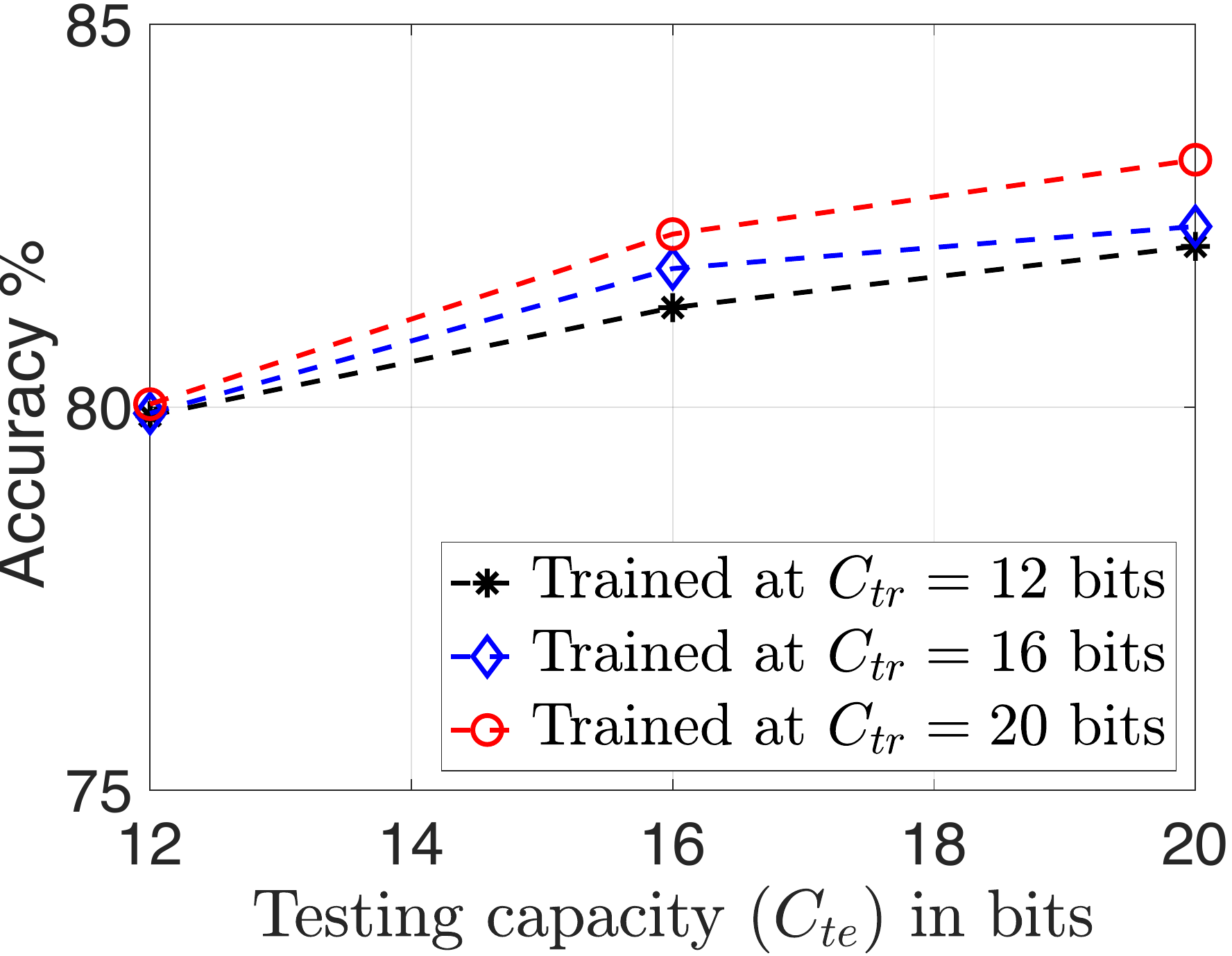}
	\captionof{figure}{Performance when $C_{tr}\neq C_{te}$ over AWGN.}
	\label{fig:awgn_robustness}
\end{minipage}\hfill
\begin{minipage}{.47\textwidth}
	\centering
	\includegraphics[width=.8\linewidth]{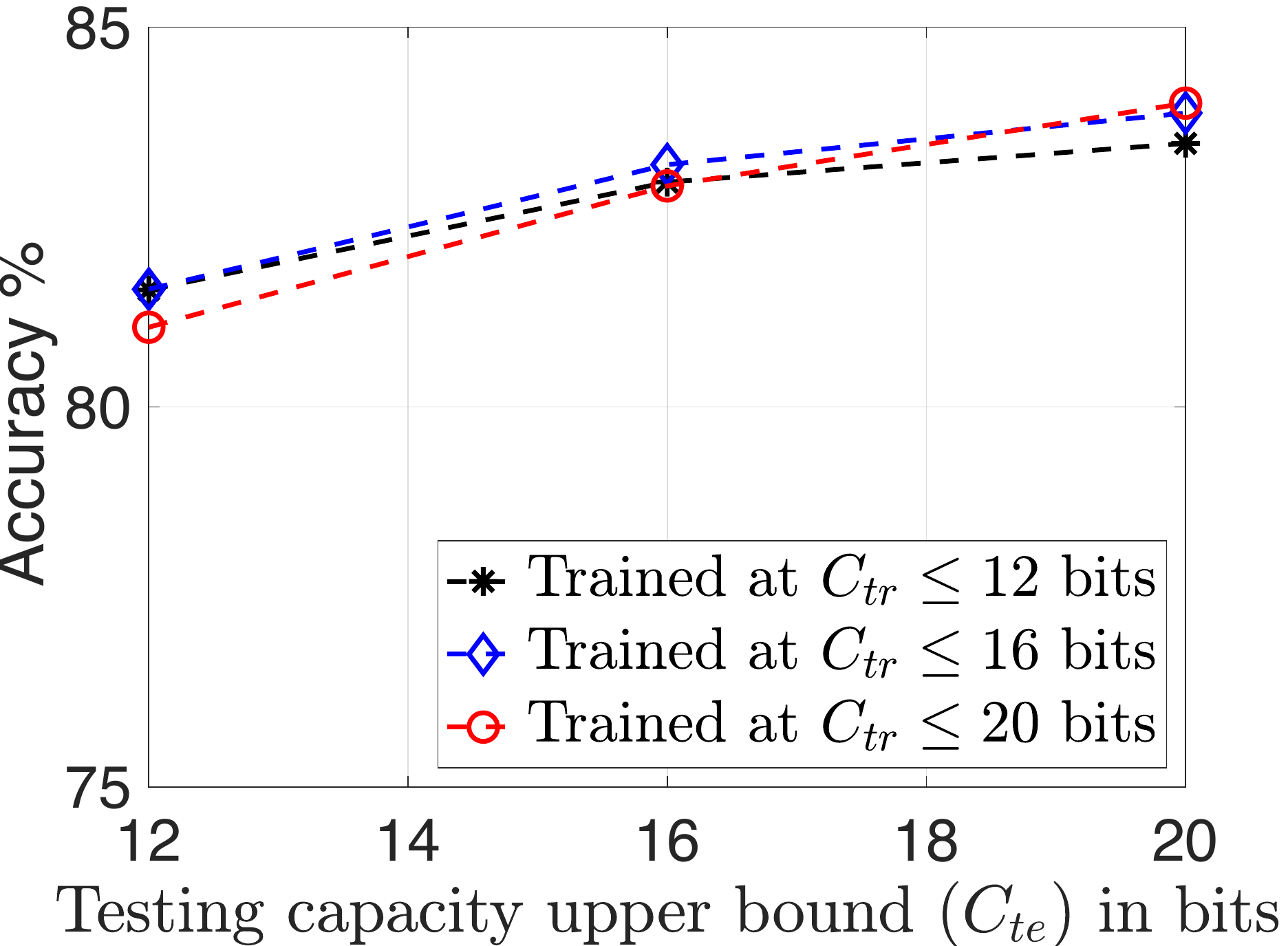}
	\captionof{figure}{Performance when $C_{tr}\neq C_{te}$ over GMAC.}
	\label{fig:gmac_robustness}
\end{minipage}
\vspace{-5mm}

\subsubsection{Varying the number of sensors}
\label{subsubsec:awgn_varying_number_sensors}
\begin{table}[h]
	\centering
	\caption{AWGN Performance at $C=20$ and varying $N$.}
	\begin{tabular}{|c||c|c|c|c|c|}
		\hline
		Method & $N=4$ & $N=9$ & $N=16$  \\
		\hline
		\hline
		IB+3S & 83.23\% & 83.43\% & 83.36\% \\
		\hline
		IB+3S+E2E & 83.44\% & 83.26\% & 82.80\%* \\
		\hline
	\end{tabular}
	\label{tab:awgn_varying_sensors}
\end{table}
\Cref{tab:awgn_varying_sensors} shows the performance of the three-stage learning schemes when the number of sensors increases. The IB+3S refers to the three-stage training scheme when the third stage uses the method described in \cref{subsubsec:awgn_stage3}, IB+3S+E2E refers to the training scheme described in \cref{subsec:three_stage_gen}. Both schemes use the Information Bottleneck-based loss function. We fix the total channel capacity from all encoders to the edge router to $C=20$ bits, with each sensor getting $C/N$ bits. Both the algorithms have similar performance and scale well with $N$. For $N=16$, the IB+3S+E2E scheme had not converged even after 500 epochs.

\begin{table}[h]
	\centering
	\caption{Channel uses for varying $N$ over AWGN.}
	\label{tab:awgn_channel_use_varying_sensors}
	\begin{tabular}{|c|c||c|c|c|}
		\hline
		\multicolumn{2}{|c||}{Method} & CT & IB+3S+E2E & IB+3S \\ 
		\hline
		\hline
		\multirow{2}{*}{$N=4$}  & S$\rightarrow$R  & $7.3e8$  & $4.6e7$  & $3.2e6$ \\ \cline{2-5} 
		& R$\rightarrow$S  & $3.6e7$   & $6.4e5$ & $2.0e6$  \\ 
		\hline \hline
		\multirow{2}{*}{$N=9$}  & S$\rightarrow$R  & $1.9e9$ & $2.0e8$ & $8.0e6$  \\ \cline{2-5} 
		& R$\rightarrow$S  & $8.1e7$  & $2.8e6$ & $4.5e6$ \\
		\hline \hline
		\multirow{2}{*}{$N=16$}  &  S$\rightarrow$R & $6.0e9$ &  $>8.0e8$ & $1.9e7$ \\ \cline{2-5} 
		&  R$\rightarrow$S & $1.4e8$ & $>1.1e7$ & $6.0e6$ \\
		\hline
	\end{tabular}
	\vspace{-5mm}
\end{table}

We compare the total number of channel uses across all nodes in \Cref{tab:awgn_channel_use_varying_sensors}. ``S" represents the sensor node and ``R" the edge router. In the Cloud Training (CT) setup, we use lossless JPEG2000 and a capacity-achieving codeto transmit the training data from the sensor to the edge router. We train the complete system at the edge router using the sensory data and transmit the trained weights to the sensor nodes. Each sensor uses $K$ channels to convey $C/N$ bits of information (on average) per data sample where $K=4,2,2$ for $N=4,9,16$ respectively. The sensor to router channel is called uplink, and the reverse is called the downlink. The downlink is assumed to be a $32$-bit capacity channel operating with a capacity-achieving code. The number of channel uses for CT in the uplink is computed using the compressed data size. The downlink usage is computed using the number of parameters of the neural network model and assuming a 32-bit representation for each parameter. The uplink in both the IB-based methods consumes $KBT$ channel uses, where $T$ varies for the two methods. We compute the downlink communication for IB+3S+E2E as $KBT/64$, assuming a 32-bit representation for each gradient value. Similarly, the downlink for IB+3S is used $BT|\mathcal{V}|$ times. Compared to the CT scheme, both our methods show a significant reduction in communication. However, amongst the two, IB+3S is the most impressive, with at least two orders of magnitude reduction in the number of channel uses for the uplink and four orders of magnitude in the downlink. Since we can send the gradients averaged across data points in a batch, the downlink communication in IB+3S+E2E is more efficient. However, as $N$ increases, the increasing number of communication rounds ($T$) renders the IB+3S more efficient even in the uplink. Additionally, the number of communication rounds for IB+3S in the third stage is $T=3,8,11$ for $N=4,9,16$, respectively. This closely follows the relation predicted by \cref{remark:comm_rounds_noiseless}.

\subsubsection{Sensor outages during testing}
\label{subsubsec:awgn_sensor_dropout}
\begin{figure}[htbp]
	\begin{center}
		\begin{subfigure}[b]{0.49\textwidth}
			\centering
			\includegraphics[width=0.7\linewidth]{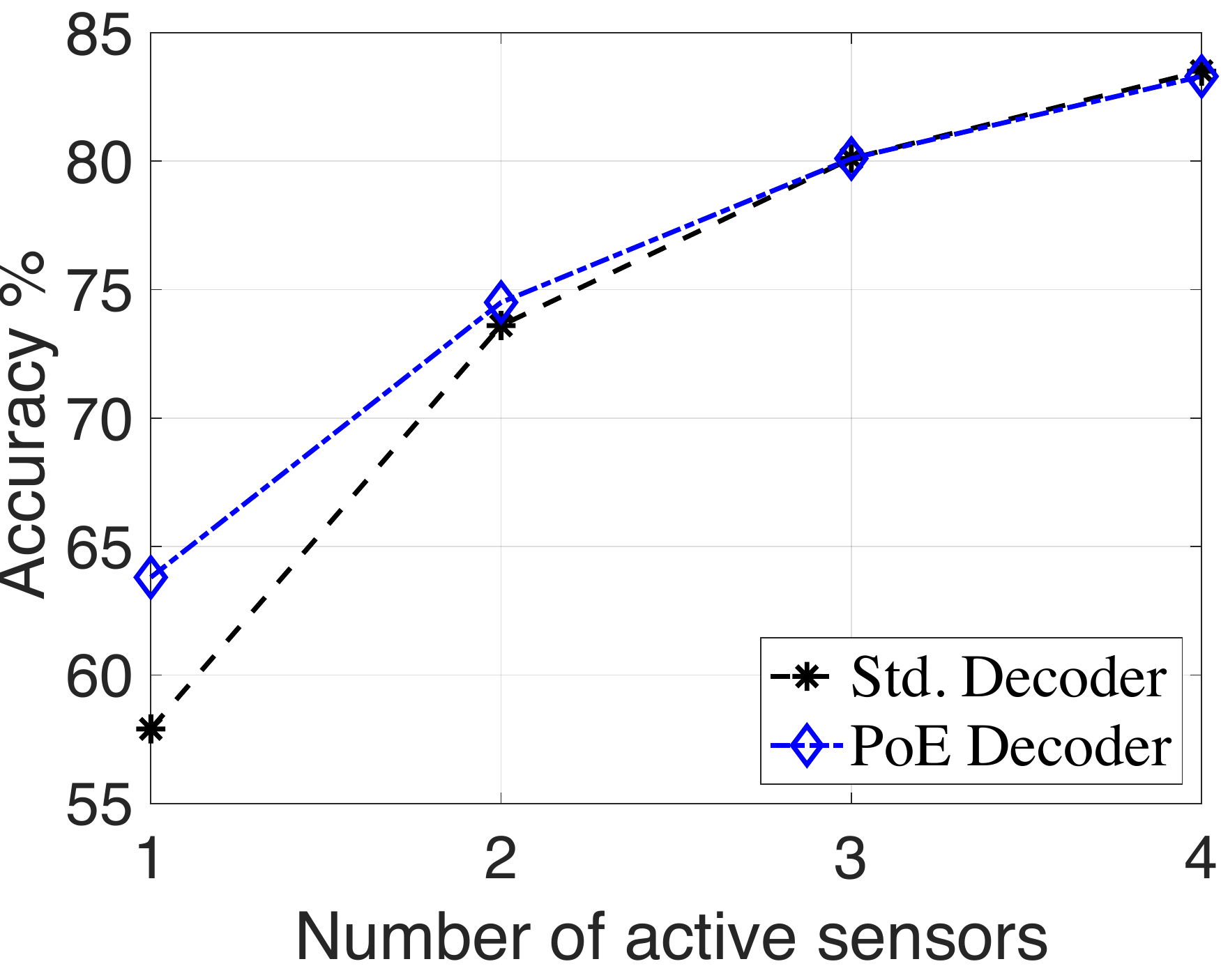}
			\caption{$N=4$ sensors.}
			\label{fig:awgn_test_dropout_N4}
		\end{subfigure} \hfill
		\begin{subfigure}[b]{0.49\textwidth}
			\centering
			\includegraphics[width=0.7\linewidth]{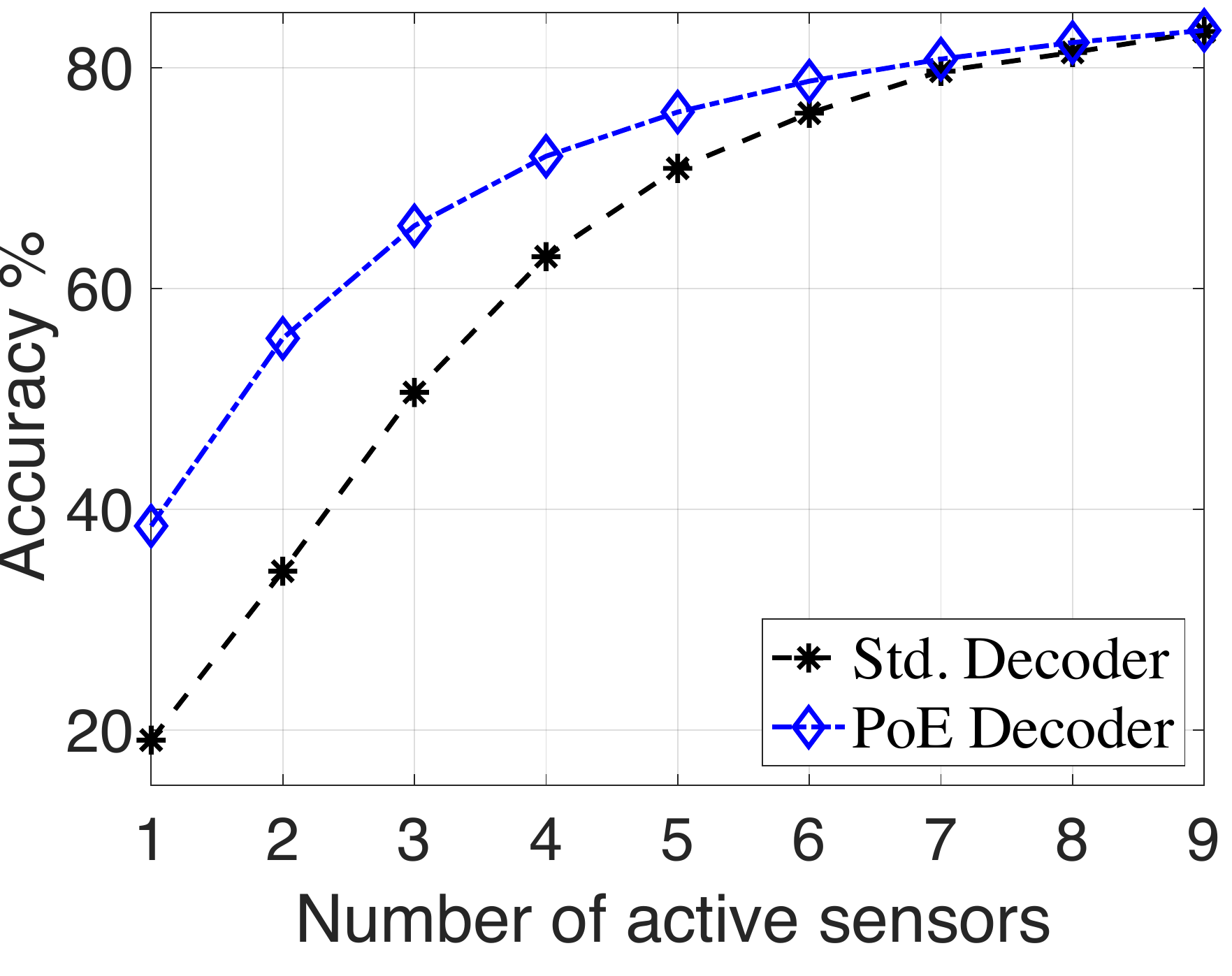}
			\caption{$N=9$ sensors.}
			\label{fig:awgn_test_dropout_N9}
		\end{subfigure}
	\end{center}
	\vspace{-7mm}
	\caption{Sensor outage during test time over AWGN channel.}
	\label{fig:awgn_test_dropout}
	\vspace{-7mm}	
\end{figure}

\Cref{fig:awgn_test_dropout} shows the performance of the system as a function of the number of active sensors. We initially train the system for $N$ sensors but assume that only a subset is active during testing. We compare our PoE decoder with the standard decoder. In the standard decoder, we concatenate the received transmissions of all sensors into a single vector, which forms the input to the decoder network. If a sensor drops out, then its corresponding part of the input is replaced with zeros. However, as the number of active sensors decreases, the input space of the decoder at the test time differs from the input space at train time. Thus, the performance degrades. In a PoE decoder, when a group of sensors $\mathcal{S}$ cannot send information to the edge router, the output is computed as $\prod_{n=1,n \notin \mathcal{S}}^{N}g_d(\vecthat{y}_n,n)$, i.e., we ignore sensors that did not transmit. Thus the PoE decoder performs better. Also, the performance of the standard decoder is very close to the PoE decoder when all nodes are active, indicating no loss of performance due to the PoE assumption.
\vspace{-5mm}

\subsection{Simulation results for Gaussian MAC}
\subsubsection{Varying upper bound on channel capacity}
Since the channel capacity of GMAC is unknown, following the work of \cite{lapidoth2010sending}, we upper bound it using the capacity of an AWGN channel as $(K/2)\log_2 \left(1+((N^2-N+1)P_T)/\sigma^2_z\right)$. \Cref{tab:gmac_varying_channel_cap} shows the performance of the systems trained using the three loss functions. The information bottleneck outperforms all other methods. The GMAC systems perform better than their AWGN counterparts, especially at lower capacities, probably because the superposition yields more protection from noise.

\begin{table}
	\vspace{-3mm}	
	\centering
	\caption{Classification performance for $N=4$ over GMAC for various channel capacity \emph{upper bounds}.}
	\vspace{-5mm}
	\begin{tabular}{|c||c|c|c|c|c|}
		\hline
		Method & C$\leq 12$ & C$\leq 16$ & C$\leq 20$  \\
		\hline
		\hline
		Autoencoder & 65.75\% & 71.72\%  & 74.00\% \\
		\hline
		Lagrange Method & 80.61\%  & 81.93\% & 82.51\% \\
		\hline
		Information Bottleneck & 81.54\%  & 83.19\%  & 84.00\% \\
		\hline
	\end{tabular}
	\label{tab:gmac_varying_channel_cap}
	\vspace{-7mm}
\end{table}

\Cref{fig:gmac_robustness} shows the robustness of the learned encoders and decoders by varying the channel conditions. $C_{tr}$ indicates the training capacity and $C_{te}$ the  capacity at test time. We see that even when $|C_{tr}-C_{te}|=8$ bits, the loss of performance is only around $0.5\%$ w.r.t. a system trained at $C_{te}$.
\vspace{-5mm}

\subsubsection{Varying number of sensors}
\begin{table}[h]
	\centering
	\caption{GMAC performance for varying $N$ with $C\leq 20$.}
	\begin{tabular}{|c||c|c|c|c|c|}
		\hline
		Method & $N=4$ & $N=9$ & $N=16$  \\
		\hline
		\hline
		IB+3S & 84.00\% & 83.52\% & 79.83\% \\
		\hline
		IB+3S+E2E & 83.92\% & 83.57\% & 79.21\% \\
		\hline
	\end{tabular}
	\label{tab:gmac_varying_sensors}
	\vspace{-7mm}
\end{table}
\Cref{tab:gmac_varying_sensors} shows the performance as the number of sensors increases. Notice that for $N=16$, the performance is lower than $N=4$. This is because the upper bound on the capacity of the GMAC channel becomes looser with increasing $N$, thus increasingly underestimating the required $P_T$. The learned solution has a capacity lower than $15$ bits which results in lower performance.

\begin{table}[h]
	\centering
	\caption{Number of channel uses for varying $N$ over GMAC with $C\leq 20$ bits.}
	\label{tab:gmac_channel_use_varying_sensors}
	\begin{tabular}{|c|c||c|c|c|}
		\hline
		\multicolumn{2}{|c||}{Method} & CT & IB+3S+E2E & IB+3S \\ 
		\hline
		\hline
		\multirow{2}{*}{$N=4$}  & S$\rightarrow$R  & $3.6e8$  & $2.9e7$ & $3.2e6$ \\ \cline{2-5} 
		& R$\rightarrow$S  & $3.6e7$   & $4.0e5$ & $7.2e6$ \\ 
		\hline \hline
		\multirow{2}{*}{$N=9$}  & S$\rightarrow$R  & $9.4e8$ & $8.6e7$ & $5.4e6$ \\ \cline{2-5} 
		& R$\rightarrow$S  & $8.1e7$  & $1.2e6$ & $1.1e7$ \\
		\hline \hline
		\multirow{2}{*}{$N=16$}  &  S$\rightarrow$R & $3.0e9$ & $>4.0e8$  & $2.1e7$ \\ \cline{2-5} 
		&  R$\rightarrow$S & $1.4e8$ & $>5.6e6$ & $3.4e7$ \\
		\hline
	\end{tabular}
	\vspace{-7mm}
\end{table}
In \cref{tab:gmac_channel_use_varying_sensors}, we show the channel uses for three methods. In the centralized scheme, since the transmission from the sensor to the router is digital, the nodes have to transmit in a time-sharing fashion. We assume that $K$ channel uses, on average, corresponds to $20$ bits of data transmitted. The values of $K=8,9,16$ for $N=4,9,16$, respectively. Thus each sensor node transmits $C/(KN)$ bits per channel use. We model the router to sensor channel as described in \cref{subsubsec:awgn_varying_number_sensors}.
Using the lossless JPEG2000 based system described in \cref{subsubsec:awgn_varying_number_sensors} for the Cloud Training (CT) scheme, we see that our proposed training schemes are more efficient in communication. The IB+3S (described in \cref{subsubsec:gmac_stage3}) is more efficient than IB+3S+E2E  for the uplink transmission by nine times to $>19$ times as $N$ increases. Similar to the AWGN case in \cref{subsubsec:awgn_varying_number_sensors}, IB+3S+E2E is more efficient in the downlink communication. However, the efficiency of IB+3S+E2E over IB+3S reduces from $18$ times to $<6$ times as $N$ increases. The increase in number of iterations as $N$ increases, allows IB+3S to catch up. Additionally, the number of communication rounds for IB+3S in the third stage is $T=6,11,25$ for $N=4,9,16$, respectively. This closely follows the relation predicted by \cref{remark:comm_rounds_noiseless}. The number of channel uses for everything except the downlink in IB+3S is computed using the mechanism described in \cref{subsubsec:awgn_varying_number_sensors}. We compute the downlink uses for IB+3S as $BT(|\mathcal{V}|+K)$.

\subsubsection{Sensor outages during testing}

\begin{figure}[htbp]
	\vspace{-3mm}	
	\begin{center}
		\begin{subfigure}[b]{0.49\textwidth}
			\centering
			\includegraphics[width=0.7\linewidth]{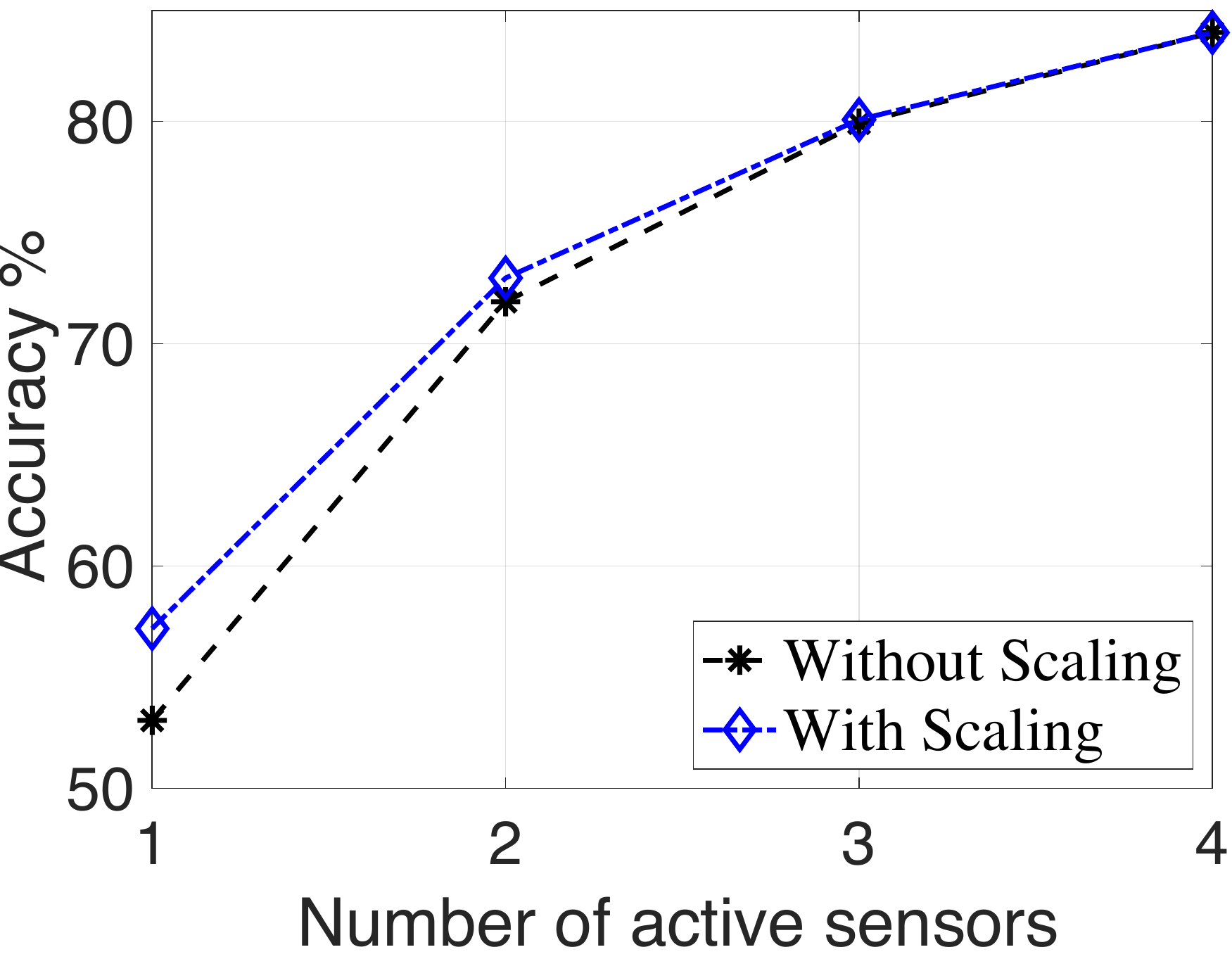}
			\caption{$N=4$ sensors.}
			\label{fig:gmac_test_dropout_N4}
		\end{subfigure} \hfill
		\begin{subfigure}[b]{0.49\textwidth}
			\centering
			\includegraphics[width=0.7\linewidth]{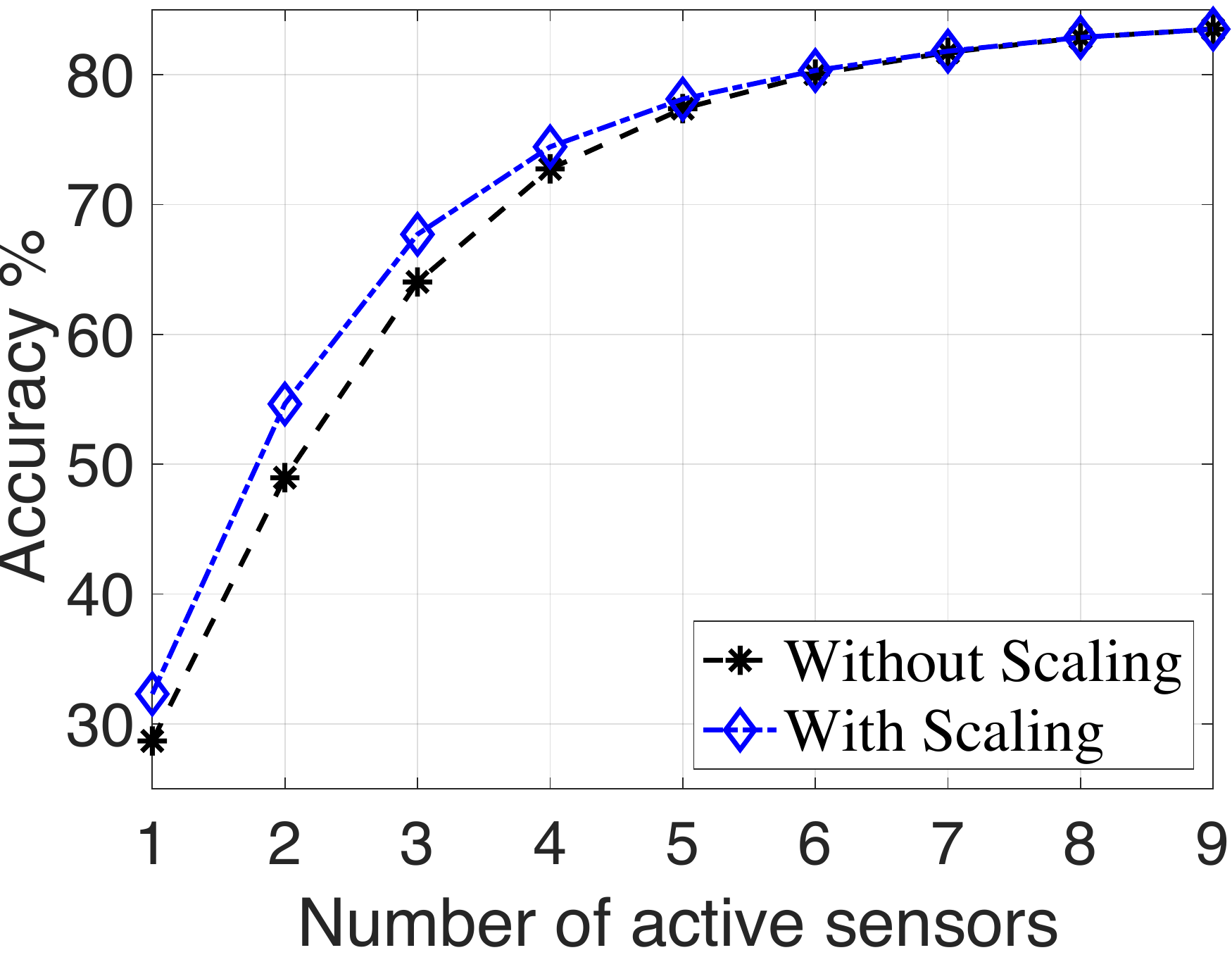}
			\caption{$N=9$ sensors.}
			\label{fig:gmac_test_dropout_N9}
		\end{subfigure}
	\end{center}
	\vspace{-7mm}	
	\caption{Sensor dropout during test time for $N=4$ and $N=9$ sensors over the GMAC.}
	\label{fig:gmac_test_dropout}
	\vspace{-8mm}	
\end{figure}

Unlike the orthogonal AWGN case, we cannot use a PoE decoder because the received $\vecthat{y}$ is the sum of all the transmitted $\{\vect{y}\}_{n=1}^N$. However, when only $|\mathcal{S}|$ nodes transmit, we can scale the received signal by $\frac{N}{|\mathcal{S}|}$. This scaled version performs better than the unscaled case, as seen in \Cref{fig:gmac_test_dropout}.

\section{Conclusion}
In this paper, we developed the first machine learning framework for distributed functional compression over wireless channels like GMAC and orthogonal AWGN in IoT settings. The sensor nodes observe the data in a distributed fashion and communicate without coordination to an edge router that approximates the function value. We looked at three different loss functions where the training is end-to-end. However, such training requires continuous communication between the sensor nodes and the edge router. Especially during the beginning, the encoder transmissions and the decoder feedback are not informative, and a lot of communication bandwidth is wasted. To overcome this, we proposed a three-stage training framework. The first two stages ensure that the encoder transmissions and the gradient feedback from the edge decoder are informative about the target function when the actual communication begins. When the target function is classification, we further formulated an improved training scheme that exploits the channel structure to remove the need for end-to-end training. For the orthogonal AWGN channel, we leveraged product-of-experts to design a decoder that is inherently robust to sensor outage. We provided convergence guarantees and a bound on the number of communication rounds for this training scheme. Our simulations showed that both the distributed training frameworks significantly reduce communication requirements compared to a cloud-based setup. Additionally, the proposed framework significantly outperforms traditional methods using Joint source-channel Coding. Finally, we showed that the learned encoders and decoders are robust to change in channel conditions and sensor outage.

\bibliographystyle{IEEEtran}
\bibliography{paper}

\clearpage
\newpage
\appendix

\subsection{Proof of \Cref{theo:IB_SL_AU_IRD_objective}}
\label{subsec:proof_theo_IB_SL_AU_IRD_objective}
\begin{proof}
	$\vect{X}^N \leftrightarrow \vect{Y}^N \leftrightarrow \vecthat{Y} \leftrightarrow \vecthat{V}$ is a Markov Chain. From data processing inequality, $I(\vect{X}^N;\vecthat{V}) \leq I(\vect{X}^N;\vecthat{Y}) \leq I(\vect{X}^N;\vect{Y}^N)$. Since channel noise is independent and additive, $I(\vect{X}^N;\vecthat{Y}) = H(\vecthat{Y}) - H(\vect{Z})$, where $H$ represents differential entropy. Since we do not know the distribution of $p(\vecthat{y})$, we use a variational approximation $r(\vecthat{y})$ to get \eqref{eqn:theo_IB}. As the encoders are deterministic, $I(\vect{X}^N;\vect{Y}^N)= H(\vect{Y}^N)$. We use variational approximations of the form $\prod_{n=1}r(\vect{y}_n)$. In the autoencoders, since all $\vect{y}_n$ have L2-norm $K_n P_T^{(n)}$, if we assume $r(\vecthat{y}_n)$ to be a uniform distribution on the surface of the $K_n$-dimensional hypersphere of radius $\sqrt{K_n P_T^{(n)}}$, we get \eqref{eqn:theo_AU}. In the Lagrange multiplier method, if we assume that $r_{L}(\vect{y}_n) = \mathcal{N}(\vect{y}_n;\vect{0},K_nP_T^{(n)}I)$, we get \eqref{eqn:theo_SL}. 
\end{proof}

\subsection{Proof of \Cref{theo-grad-avg-upperbound-noiseless}}
\label{subsec:proof-theo-grad-avg-upperbound-noiseless}
We use the following standard equalities/inequalities. For any $\vect{u}_1, \vect{u}_2, \dots, \vect{u}_M \in \mathbb{R}^P$, and $0<\mu^2<1$:
\begin{subequations}
	\begin{equation}
		2\innerprod{\vect{u}_i}{\vect{u}_j} \leq \mu^2 \norm{\vect{u}_i}^2_2 + \frac{1}{\mu^2} \norm{\vect{u}_j}^2_2
		\label{eqn:ip_sum_ineq}
	\end{equation}
	\begin{equation}
		\norm{\sum_{i=1}^{M} \vect{u}_i}_2^2 = \sum_{i=1}^{M} \norm{\vect{u}_i}^2_2 \text{ if } \innerprod{\vect{u}_i}{\vect{u}_j} = 0 \forall i \neq j
		\label{eqn:normsq_of_sumeq}
	\end{equation}
\end{subequations}

Let $\maxone{x} \defeq \max \left(1,x\right)$.

Let $\paramstep{\vect{\Delta}}{s} \defeq \paramstep{\vect{\Theta}}{s+1}-\paramstep{\vect{\Theta}}{s}$. By \cref{eqn:normsq_of_sumeq} and using triangle inequality we can show that, $\norm{\paramstep{\vect{\Theta}}{s}-\paramstep{\vecthat{\Theta}}{r(s,n),n}}_2 \leq \sum_{i=\maxone{s-2\tau}}^{s-1} \norm{\paramstep{\vect{\Delta}}{i}}_2$. Since $ab \leq \frac{1}{2}(a^2+b^2)$,
\begin{equation}
	\norm{\paramstep{\vect{\Theta}}{s}-\paramstep{\vecthat{\Theta}}{r(s,n),n}}^2_2 \leq \left(\tau + \frac{1}{2}\right)\sum_{i=\maxone{s-2\tau}}^{s-1} \norm{\paramstep{\vect{\Delta}}{i}}^2_2.
	\label{eqn:corr-diff-delta-sq-twotau}
\end{equation} 

Define a Lyapunov function of the form
\begin{equation}
	\zeta_s = \mathcal{L}\left(\paramstep{\vect{\Theta}}{s}\right) - \mathcal{L^*} + \frac{\mathrm{L}}{2\epsilon} \sum_{i=\maxone{s-\tau}}^{s-1} (i-(s-\tau)+1) \norm{\paramstep{\Delta}{i}}_2^2,
	\label{eqn:lyapunov_def}
\end{equation}
 where $\epsilon>0$ satisfies $\epsilon + \frac{1}{\epsilon} = 1 + \frac{1}{\tau}\left(\frac{1}{\eta}-\frac{1}{2}\right)$.

\begin{lemma}
	If assumptions 1 and 2 hold, then
	\begin{equation}
		\zeta_s - \zeta_{s+1} \geq \frac{\mathrm{L}\left(\alpha-1\right)}{2} \left(\tau + \frac{1}{2}\right) \norm{\paramstep{\vect{\Delta}}{s}}_2^2 \geq 0.
		\label{eqn:lemma-lyapunovdiff-delta}
	\end{equation}
	\label{lemma-lyapunovdiff-delta}
	\vspace{-12mm}
\end{lemma}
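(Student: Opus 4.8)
The plan is to turn the one–step change of the Lyapunov function $\zeta_s$ in \eqref{eqn:lyapunov_def} into a descent inequality driven by $\mathrm L$-smoothness, and then to show that the triangular weighted sum appearing in $\zeta_s$ is engineered precisely to pay off the ``debt'' created by the stale gradients. First I would apply the descent lemma guaranteed by Assumption~1 to $\paramstep{\vect{\Theta}}{s}$ and $\paramstep{\vect{\Theta}}{s+1}$, which differ only in block $n$. Writing $\paramstep{\vect{\Delta}}{s} = -\tfrac{\eta}{\mathrm L}\,\nabla_{\vect{\Theta}_n}\Lloss(\paramstep{\vecthat{\Theta}}{s,n})$, the inner product that appears carries the \emph{true} gradient $\nabla_{\vect{\Theta}_n}\Lloss(\paramstep{\vect{\Theta}}{s})$ while the step itself uses the \emph{stale} gradient $\nabla_{\vect{\Theta}_n}\Lloss(\paramstep{\vecthat{\Theta}}{s,n})$. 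I would therefore split the true gradient as the stale gradient plus a mismatch term, so that the stale part produces the familiar self term $\bigl(\tfrac{\mathrm L}{\eta}-\tfrac{\mathrm L}{2}\bigr)\norm{\paramstep{\vect{\Delta}}{s}}_2^2$ in the lower bound for $\Lloss(\paramstep{\vect{\Theta}}{s})-\Lloss(\paramstep{\vect{\Theta}}{s+1})$, and the mismatch $\innerprod{\nabla_{\vect{\Theta}_n}\Lloss(\paramstep{\vect{\Theta}}{s})-\nabla_{\vect{\Theta}_n}\Lloss(\paramstep{\vecthat{\Theta}}{s,n})}{\paramstep{\vect{\Delta}}{s}}$ is isolated for separate control.

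Next I would bound the mismatch term. Applying the elementary inequality \eqref{eqn:ip_sum_ineq} with its splitting parameter tied to $\epsilon$ trades the cross term for a small multiple of $\norm{\paramstep{\vect{\Delta}}{s}}_2^2$ plus a multiple of the squared gradient mismatch. The $\mathrm L$-Lipschitz property converts that mismatch into $\mathrm L^2\,\norm{\paramstep{\vect{\Theta}}{s}-\paramstep{\vecthat{\Theta}}{r(s,n),n}}_2^2$, and \eqref{eqn:corr-diff-delta-sq-twotau} then bounds it by $(\tau+\tfrac12)\sum_i \norm{\paramstep{\vect{\Delta}}{i}}_2^2$ over the staleness window. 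The net result is a lower bound on $\Lloss(\paramstep{\vect{\Theta}}{s})-\Lloss(\paramstep{\vect{\Theta}}{s+1})$ of the form $c_0\,\norm{\paramstep{\vect{\Delta}}{s}}_2^2 - c_1\sum_i\norm{\paramstep{\vect{\Delta}}{i}}_2^2$, where the negative window-sum is exactly the quantity the auxiliary part of $\zeta_s$ is designed to service.

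Then I would compute the difference of the auxiliary sums directly. Shifting the index from $s$ to $s+1$ telescopes the triangular weights $(i-(s-\tau)+1)$, so the auxiliary difference collapses to $\tfrac{\mathrm L}{2\epsilon}\bigl(\sum_{i=\maxone{s-\tau}}^{s-1}\norm{\paramstep{\vect{\Delta}}{i}}_2^2 - \tau\norm{\paramstep{\vect{\Delta}}{s}}_2^2\bigr)$; the positive window-sum here cancels the debt $c_1\sum_i\norm{\paramstep{\vect{\Delta}}{i}}_2^2$, while $-\tfrac{\mathrm L\tau}{2\epsilon}\norm{\paramstep{\vect{\Delta}}{s}}_2^2$ merges with the main term. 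Collecting the coefficient of $\norm{\paramstep{\vect{\Delta}}{s}}_2^2$ and imposing the defining relation $\epsilon+\tfrac1\epsilon = 1 + \tfrac1\tau\bigl(\tfrac1\eta-\tfrac12\bigr)$ is what removes the $\epsilon$-dependence. Using $2\tau+1 = 2(\tau+\tfrac12)$, so that $\alpha(\tau+\tfrac12)=\tfrac1\eta$ for $\alpha=\tfrac{2}{\eta(2\tau+1)}$ from Assumption~2, the surviving coefficient reduces to $\tfrac{\mathrm L}{2}\bigl(\tfrac1\eta-\tau-\tfrac12\bigr)=\tfrac{\mathrm L(\alpha-1)}{2}(\tau+\tfrac12)$, and $\alpha>1$ finally delivers nonnegativity.

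I expect the main obstacle to be the careful bookkeeping that aligns the two windows: the gradient-mismatch bound \eqref{eqn:corr-diff-delta-sq-twotau} ranges over the full staleness horizon (up to $2\tau$ past increments), whereas the auxiliary sum in $\zeta_s$ spans only $\tau$ steps with triangular weights. Making the debt accumulated by the stale-gradient term match, term by term, the credit generated by the telescoping sum — so that the residual coefficient is exactly $\tfrac{\mathrm L(\alpha-1)}{2}(\tau+\tfrac12)$ rather than merely some nonnegative quantity — is the delicate part, and it is precisely what dictates both the specific value of $\epsilon$ and the specific triangular weighting chosen in \eqref{eqn:lyapunov_def}.
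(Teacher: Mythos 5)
Your route --- the descent lemma from Assumption~1, the split of the true gradient into the stale gradient plus a mismatch, AM--GM on the cross term, telescoping the triangular weights of \eqref{eqn:lyapunov_def}, and the closing arithmetic $\tfrac{\mathrm{L}}{2}\bigl(\tfrac{1}{\eta}-\tau-\tfrac{1}{2}\bigr)=\tfrac{\mathrm{L}(\alpha-1)}{2}\bigl(\tau+\tfrac{1}{2}\bigr)$ via $\alpha(\tau+\tfrac12)=\tfrac1\eta$ and the defining relation for $\epsilon$ --- is exactly the argument behind the paper's one-line proof, which simply defers to Lemma~1 of Sun et al.\ (\texttt{sun2017asynchronous}); your telescoped auxiliary difference $\tfrac{\mathrm{L}}{2\epsilon}\bigl(\sum_{i=\maxone{s-\tau}}^{s-1}\norm{\paramstep{\vect{\Delta}}{i}}_2^2-\tau\norm{\paramstep{\vect{\Delta}}{s}}_2^2\bigr)$ and the self term $\bigl(\tfrac{\mathrm{L}}{\eta}-\tfrac{\mathrm{L}}{2}\bigr)\norm{\paramstep{\vect{\Delta}}{s}}_2^2$ both check out. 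However, there is one concrete misstep, and it is precisely the ``obstacle'' you flag at the end without resolving: you invoke \eqref{eqn:corr-diff-delta-sq-twotau} to control the mismatch. That inequality bounds $\norm{\paramstep{\vect{\Theta}}{s}-\paramstep{\vecthat{\Theta}}{r(s,n),n}}_2^2$ over a $2\tau$-step window and is the tool for the \emph{theorem} (where one compares against the gradient computed at node~$n$'s last update step $r(s,n)$), not for this lemma. The step actually taken at time $s$ is $\paramstep{\vect{\Delta}}{s}=-\tfrac{\eta}{\mathrm{L}}\nabla_{n}\mathcal{L}\bigl(\paramstep{\vecthat{\Theta}}{s,n}\bigr)$, whose staleness is at most $\tau$, so the correct tool is the $\tau$-window bound $\norm{\paramstep{\vect{\Theta}}{s}-\paramstep{\vecthat{\Theta}}{s,n}}_2\leq\sum_{i=\maxone{s-\tau}}^{s-1}\norm{\paramstep{\vect{\Delta}}{i}}_2$ (the deterministic inequality \eqref{eqn:lemma1}, which needs no noise assumptions). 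With the $2\tau$ window your promised cancellation genuinely fails: the Lyapunov credit spans only $\maxone{s-\tau},\dots,s-1$ and cannot absorb a debt spanning $\maxone{s-2\tau},\dots,s-1$; no choice of $\epsilon$ repairs this, since the Lyapunov function itself would have to be redefined.

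The fix also requires a second adjustment to your paragraph two: do not square the mismatch first and then apply a window bound, because that perturbs the constants and breaks the exact term-by-term matching. Instead apply AM--GM to each cross term with parameter $\epsilon$,
\begin{equation*}
	\innerprod{\nabla_n\Lloss\bigl(\paramstep{\vect{\Theta}}{s}\bigr)-\nabla_n\Lloss\bigl(\paramstep{\vecthat{\Theta}}{s,n}\bigr)}{\paramstep{\vect{\Delta}}{s}}
	\leq \mathrm{L}\sum_{i=\maxone{s-\tau}}^{s-1}\norm{\paramstep{\vect{\Delta}}{i}}_2\norm{\paramstep{\vect{\Delta}}{s}}_2
	\leq \frac{\mathrm{L}}{2\epsilon}\sum_{i=\maxone{s-\tau}}^{s-1}\norm{\paramstep{\vect{\Delta}}{i}}_2^2+\frac{\mathrm{L}\epsilon\tau}{2}\norm{\paramstep{\vect{\Delta}}{s}}_2^2.
\end{equation*}
Now the window sum cancels \emph{exactly} against the telescoped auxiliary credit, and the surviving coefficient is $\tfrac{\mathrm{L}}{\eta}-\tfrac{\mathrm{L}}{2}-\tfrac{\mathrm{L}\tau}{2}\bigl(\epsilon+\tfrac1\epsilon\bigr)$, which the relation $\epsilon+\tfrac1\epsilon=1+\tfrac1\tau\bigl(\tfrac1\eta-\tfrac12\bigr)$ collapses to $\tfrac{\mathrm{L}(\alpha-1)}{2}\bigl(\tau+\tfrac12\bigr)$, exactly as you computed; nonnegativity follows from $\alpha>1$, and Assumption~2 is also what guarantees a valid $\epsilon>0$ exists, since $\epsilon+\tfrac1\epsilon\geq2$ forces $\tfrac1\eta\geq\tau+\tfrac12$, i.e.\ $\alpha\geq1$. (The truncated windows at $s\leq\tau$ are harmless since the dropped terms are nonnegative.) In short: your skeleton and final arithmetic are right, but the lemma only closes once the mismatch is bounded by the $\tau$-window inequality for $\paramstep{\vecthat{\Theta}}{s,n}$, with \eqref{eqn:corr-diff-delta-sq-twotau} reserved for its actual role in the proof of \Cref{theo-grad-avg-upperbound-noiseless}.
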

\begin{proof}
	Proof follows similar to \cite[Lemma 1]{sun2017asynchronous}.
\end{proof}

\begin{proof} \emph{for \cref{theo-grad-avg-upperbound-noiseless}}.
	From \eqref{eqn:normsq_of_sumeq}, $\norm{\nabla \Lloss \left(\paramstep{\vect{\Theta}}{s}\right)}_2^2 = \sum_n \norm{\nabla_n \Lloss \left(\paramstep{\vect{\Theta}}{s}\right)}_2^2$.
	From assumption 1 and  \eqref{eqn:ip_sum_ineq}, we get
	\begin{equation}
		\norm{\nabla_n \Lloss \left(\paramstep{\vect{\Theta}}{s}\right)}_2^2 \leq \frac{1}{\mu^2}\norm{\nabla_n \Lloss \left(\paramstep{\vecthat{\Theta}}{r(s,n),n}\right)}_2^2 + \frac{\mathrm{L}^2}{1-\mu^2}	\norm{\paramstep{\vect{\Theta}}{s}- \paramstep{\vecthat{\Theta}}{r(s,n),n}}_2^2,
		\label{eqn:theo1_step1}
	\end{equation}
	where $\mu^2 \in \{0,1\}$. Using \cref{lemma-lyapunovdiff-delta} to bound the first term, and \eqref{eqn:corr-diff-delta-sq-twotau} and \cref{lemma-lyapunovdiff-delta} on the second term, we get
	\begin{multline}
		\norm{\nabla_n \Lloss \left(\paramstep{\vect{\Theta}}{s}\right)}_2^2 \leq \frac{1}{\mu^2} \left(\frac{2\mathrm{L} \alpha^2}{ \left(\alpha-1\right)}\left(\tau+\frac{1}{2}\right)\right)\left(\zeta_{r(s,n)} - \zeta_{r(s,n)+1}\right) \\
		+ \frac{4\mathrm{L}\tau}{\left(1-\mu^2\right)\left(\alpha-1\right)\left(2\tau+1\right)}	\sum_{i=\maxone{s-2\tau}}^{s-1} \zeta_i -\zeta_{i+1}.
		\label{eqn:theo1_bound_on_L2}
	\end{multline}
	Let us define $A_1$ as the summation of the first term of \eqref{eqn:theo1_bound_on_L2} over $\sum_{n=1}^{N+1}\sum_{s=1}^{S}$. Similarly, define $A_2$ for the second term.
		
	To bound $A_1$, note that the term $\zeta_{r(s,n)} - \zeta_{r(s,n)+1} \geq 0$ can repeat at most $\tau$ times. Thus $\sum_{s=1}^{S} \left(\zeta_{r(s,n)} - \zeta_{r(s,n)+1}\right) < \tau \sum_{s=1:r(s,n)\neq r(s-1,n)}^{S} \left(\zeta_{r(s,n)} - \zeta_{r(s,n)+1}\right)$. The double summation after reintroducing $\sum_{n=1}^{N+1}$ reduces to a single summation of the form $\sum_{s=1}^{S} \zeta_s - \zeta_{s+1}$, because one parameter block gets updated in an iteration step.
	We can split $\sum_{s=1}^{S} \sum_{i=\maxone{s-2\tau}}^{s-1} \zeta_i -\zeta_{i+1}$ into two summations over $s=1$ to $2\tau$ and other $s=2\tau+1$ to $S$.
	We can show that the first summation is $\leq (2\tau-1)\zeta_1$ and the second summation is $\leq 2\tau \zeta_1$. Thus $A_2$ is bounded.
	
	Using above and replacing $\tau$ by $(2N+1)E$ we get \eqref{eqn:theo-grad-avg-upperbound-noiseless}. Further, since $\norm{\nabla 	\mathcal{L}\left(\paramstep{\vect{\Theta}}{s}\right)}_2^2$ is a summable non-negative sequence, it converges to $0$.
\end{proof}

\subsection{Proof of \Cref{theo-grad-avg-upperbound-noisy-v2}.}
\label{sec:proof-theo-grad-avg-upperbound-noisy}
Note that $\vect{E}_n$ represents the random variable representing the noise in the gradient approximation at node-$n$ and $E$ represents the number of local iterations. 

\begin{lemma}
	Define $\paramstep{\vect{\Delta}}{k} \defeq \paramstep{\vect{\Theta}}{k+1}-\paramstep{\vect{\Theta}}{k}$. Then,
	\begin{equation}
		\norm{\paramstep{\vect{\Theta}}{k}-\paramstep{\vecthat{\Theta}}{k,l}}_2 \leq \sum_{i=\maxone{k-\tau}}^{k-1} \norm{\paramstep{\vect{\Delta}}{i}}_2
		\label{eqn:lemma1}
	\end{equation} 
	\label{lemma-diff-delta}
	\vspace{-12mm}
\end{lemma}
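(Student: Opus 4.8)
The plan is to exploit the block-separable structure of the parameter update together with the staleness bound $\tau$, mirroring the argument behind \eqref{eqn:corr-diff-delta-sq-twotau} in the noiseless proof. First I would observe that, by construction, node-$l$'s local copy $\paramstep{\vecthat{\Theta}}{k,l}$ agrees with the true global vector $\paramstep{\vect{\Theta}}{k}$ block by block up to staleness: for each parameter block $m$, the copy held at node-$l$ is $\paramstep{\vect{\Theta}_m}{t_m}$ for some earlier step $t_m \leq k$, and the bound on the age of any parameter block guarantees $t_m \geq \maxone{k-\tau}$. Since a block changes only at the steps where it is the updated block, I can telescope the blockwise difference as $\paramstep{\vect{\Theta}_m}{k}-\paramstep{\vect{\Theta}_m}{t_m} = \sum_{i=t_m}^{k-1}\left[\paramstep{\vect{\Delta}}{i}\right]_m$, where $\left[\cdot\right]_m$ denotes restriction to block $m$.

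Next I would assemble the full difference by summing over all blocks. The key structural fact is that each increment $\paramstep{\vect{\Delta}}{i}=\paramstep{\vect{\Theta}}{i+1}-\paramstep{\vect{\Theta}}{i}$ is supported on exactly one block, namely the single block updated at step $i$; consequently $\left[\paramstep{\vect{\Delta}}{i}\right]_m$ equals either the full vector $\paramstep{\vect{\Delta}}{i}$ (when block $m$ was the one updated at step $i$) or $\vect{0}$. Collecting the blockwise telescoping sums therefore expresses the total difference as $\paramstep{\vect{\Theta}}{k}-\paramstep{\vecthat{\Theta}}{k,l}=\sum_{i\in\mathcal{I}}\paramstep{\vect{\Delta}}{i}$ for some index set $\mathcal{I}$, and the disjointness of the per-step block updates (the orthogonality relation \eqref{eqn:normsq_of_sumeq}) makes $\mathcal{I}$ a genuine \emph{subset} of $\{\maxone{k-\tau},\dots,k-1\}$ rather than a multiset. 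The conclusion then follows from a single application of the triangle inequality, after enlarging the summation range from $\mathcal{I}$ to the full window.

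The one point requiring care is confirming that each increment enters the difference at most once, so that $\mathcal{I}$ is truly a subset of the staleness window; this is precisely where the single-block support of $\paramstep{\vect{\Delta}}{i}$ is used, and it is also what pins the window length at $\tau$ (here the stale copy is indexed at the current step $k$, so it lags by at most $\tau$) rather than at $2\tau$ as in the noiseless case. Everything else is routine: once the subset structure is in place, the telescoping and the triangle inequality are immediate, so I expect the proof to be short and essentially identical in spirit to \cite{sun2017asynchronous}, differing only in that the target here is the non-squared $\ell_2$ norm.
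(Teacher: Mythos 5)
Your proposal is correct and is essentially the paper's own argument: the paper likewise splits $\paramstep{\vect{\Theta}}{k}-\paramstep{\vecthat{\Theta}}{k,l}$ block by block (cases $m\neq l$ and $m=l$), telescopes each block's difference into the $\paramstep{\vect{\Delta}}{i}$ for the steps where that block was updated, uses the staleness bound $\maxone{k-\tau}\leq r(k',m)$ to place all contributing indices in the window, and collapses the double sum over blocks to a single sum precisely because each step updates exactly one block. The only cosmetic difference is ordering—you assemble the difference as $\sum_{i\in\mathcal{I}}\paramstep{\vect{\Delta}}{i}$ over a genuine subset of the window and apply the triangle inequality once, whereas the paper applies the blockwise triangle inequality first and then merges the sums—and your closing remark correctly identifies why this lemma gets a $\tau$-window while \Cref{corr-diff-delta-twotau} needs $2\tau$.
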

\begin{proof}
	\begin{equation}
		\norm{\paramstep{\vect{\Theta}}{k}-\paramstep{\vecthat{\Theta}}{k,l}}_2 \leq \sum_{m=1}^{N+1} \norm{\paramstep{\vect{\Theta}}{k}_m-\paramstep{\vecthat{\Theta}}{k,l}_m}_2
		\label{eqn:lemma1_step1}
	\end{equation}
	
	\textbf{Case (1): $m \neq l$ -} Let $k' = \floor{\frac{k}{(N+1)E}}(N+1)E$. It follows that	\begin{equation}
		\norm{\paramstep{\vect{\Theta}}{k}_m-\paramstep{\vecthat{\Theta}}{k,l}_m}_2 =  	\norm{\paramstep{\vect{\Theta}}{k}_m-\paramstep{\vect{\Theta}}{r(k',m)}_m}_2 = \norm{\sum_{i=r(k',m)}^{k-1} \paramstep{\vect{\Theta}}{i+1}_m-\paramstep{\vect{\Theta}}{i}_m}_2 = 
		\norm{\sum_{i=r(k',m): i+1=r(i+1,m)}^{k-1} \paramstep{\vect{\Delta}}{i}}_2.
	\end{equation}
	The last step follows because we only need to sum over those $i$ when $\vect{\Theta}_m$ is updated. Thus it follows that,
	\begin{equation}
		\norm{\paramstep{\vect{\Theta}}{k}_m-\paramstep{\vecthat{\Theta}}{k,l}_m}_2 \leq \sum_{i=r(k',m): i+1=r(i+1,m)}^{k-1}  \norm{\paramstep{\vect{\Delta}}{i}}_2 \leq \sum_{i=\maxone{k-\tau}: i+1=r(i+1,m)}^{k-1}  \norm{\paramstep{\vect{\Delta}}{i}}_2
		\label{eqn:lemma1_step2_mneql}
	\end{equation}
	Here, the first inequality follows from triangle inequality. The second inequality follows because $\maxone{k-\tau} \leq r(k',m)$ and $\norm{\cdot}_2 \geq 0$. 
	
	\textbf{Case (2): $m=l$ -} It follows that
	\begin{equation}
		\norm{\paramstep{\vect{\Theta}}{k}_m-\paramstep{\vecthat{\Theta}}{k,l}_m}_2 = 
		\norm{\paramstep{\vect{\Delta}}{r(k,l)}}_2 \leq \sum_{i=\maxone{k-\tau}: i+1=r(i+1,m)}^{k-1}  \norm{\paramstep{\vect{\Delta}}{i}}_2
		\label{eqn:lemma1_step2_meql}
	\end{equation}

	Thus using \eqref{eqn:lemma1_step2_mneql} and \eqref{eqn:lemma1_step2_meql} in \eqref{eqn:lemma1_step1}, we have 
	\begin{equation}
		\norm{\paramstep{\vect{\Theta}}{k}-\paramstep{\vecthat{\Theta}}{k,l}}_2 \leq \sum_{m=1}^{N+1} \left( \sum_{i=\maxone{k-\tau}: i+1=r(i+1,m)}^{k-1}  \norm{\paramstep{\vect{\Delta}}{i}}_2 \right)
		\label{eqn:lemma1_step3}
	\end{equation}
	Since every step in the parameter update only updates one block of parameters, it follows that RHS of \eqref{eqn:lemma1_step3} is the same as the RHS of \eqref{eqn:lemma1}.
\end{proof}


\begin{corr}
	\begin{equation}
		\norm{\paramstep{\vect{\Theta}}{k}-\paramstep{\vecthat{\Theta}}{r(k,l),l}}_2 \leq 	\sum_{i=\maxone{k-2\tau}}^{k-1} \norm{\paramstep{\vect{\Delta}}{i}}_2
		\label{eqn:corr2}
	\end{equation} 
	\label{corr-diff-delta-twotau}
	\vspace{-12mm}
\end{corr}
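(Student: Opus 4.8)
The plan is to reduce the corollary to \Cref{lemma-diff-delta} by inserting the intermediate global iterate $\paramstep{\vect{\Theta}}{r(k,l)}$ and applying the triangle inequality. First I would split the quantity of interest as
\begin{equation}
	\norm{\paramstep{\vect{\Theta}}{k}-\paramstep{\vecthat{\Theta}}{r(k,l),l}}_2 \leq \norm{\paramstep{\vect{\Theta}}{k}-\paramstep{\vect{\Theta}}{r(k,l)}}_2 + \norm{\paramstep{\vect{\Theta}}{r(k,l)}-\paramstep{\vecthat{\Theta}}{r(k,l),l}}_2,
\end{equation}
so that the two pieces can be handled independently: the first by a telescoping argument over the global updates, and the second by a direct reuse of the lemma at the stale step $r(k,l)$.

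For the first piece, I would telescope the global trajectory, $\paramstep{\vect{\Theta}}{k}-\paramstep{\vect{\Theta}}{r(k,l)} = \sum_{i=r(k,l)}^{k-1}\paramstep{\vect{\Delta}}{i}$, and apply the triangle inequality to obtain $\norm{\paramstep{\vect{\Theta}}{k}-\paramstep{\vect{\Theta}}{r(k,l)}}_2 \leq \sum_{i=r(k,l)}^{k-1}\norm{\paramstep{\vect{\Delta}}{i}}_2$. For the second piece, the key observation is that $\paramstep{\vecthat{\Theta}}{r(k,l),l}$ is exactly the local copy that node-$l$ holds at step $r(k,l)$, so \Cref{lemma-diff-delta} applies with $k$ replaced by $r(k,l)$, yielding $\norm{\paramstep{\vect{\Theta}}{r(k,l)}-\paramstep{\vecthat{\Theta}}{r(k,l),l}}_2 \leq \sum_{i=\maxone{r(k,l)-\tau}}^{r(k,l)-1}\norm{\paramstep{\vect{\Delta}}{i}}_2$.

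The final step is to merge the two sums. Since the index ranges $[r(k,l),k-1]$ and $[\maxone{r(k,l)-\tau},r(k,l)-1]$ are adjacent and disjoint, their union is $[\maxone{r(k,l)-\tau},k-1]$. Invoking the staleness bound $r(k,l) \geq \maxone{k-\tau}$ --- no parameter block is more than $\tau$ steps stale, so the most recent update of block $l$ lies within $\tau$ steps of $k$ --- I get $r(k,l)-\tau \geq k-2\tau$ and hence $\maxone{r(k,l)-\tau} \geq \maxone{k-2\tau}$. Because the summands are nonnegative, enlarging the range to start at $\maxone{k-2\tau}$ only increases the sum, giving $\sum_{i=\maxone{r(k,l)-\tau}}^{k-1}\norm{\paramstep{\vect{\Delta}}{i}}_2 \leq \sum_{i=\maxone{k-2\tau}}^{k-1}\norm{\paramstep{\vect{\Delta}}{i}}_2$, which is the claimed bound.

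I expect the only delicate point to be the index bookkeeping: confirming that the two summation ranges exactly tile $[\maxone{r(k,l)-\tau},k-1]$ with no overlap and no gap, and verifying the staleness inequality $r(k,l)\geq\maxone{k-\tau}$ that licenses pushing the lower limit down to $\maxone{k-2\tau}$. Everything else is a mechanical consequence of \Cref{lemma-diff-delta}, telescoping, and the triangle inequality.
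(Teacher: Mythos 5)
Your proof is correct, but it takes a different route from the paper's. The paper proves the corollary by re-running the internal argument of \Cref{lemma-diff-delta}: setting $k''=r(k,l)$, it composes the staleness bounds $\maxone{k''-\tau+1}\leq r(k'',l)\leq k''$ and $\maxone{k-\tau+1}\leq r(k,l)\leq k$ to get $\maxone{k-2\tau+1}\leq r(r(k,l),l)\leq k$, and then repeats the block-by-block case analysis ($m=l$ versus $m\neq l$) of \eqref{eqn:lemma1_step2_mneql} and \eqref{eqn:lemma1_step2_meql} with the summation limits widened from $\maxone{k-\tau}$ to $\maxone{k-2\tau}$. You instead treat \Cref{lemma-diff-delta} as a black box: you insert the intermediate global iterate $\paramstep{\vect{\Theta}}{r(k,l)}$, bound $\norm{\paramstep{\vect{\Theta}}{k}-\paramstep{\vect{\Theta}}{r(k,l)}}_2$ by telescoping over $[r(k,l),k-1]$, bound the second piece by the lemma evaluated at step $r(k,l)$ (which is legitimate, since $\paramstep{\vecthat{\Theta}}{r(k,l),l}$ is by definition node-$l$'s local copy at step $r(k,l)$), and merge the two adjacent, disjoint index ranges using the same staleness fact $r(k,l)\geq\maxone{k-\tau}$ and monotonicity of $\maxone{\cdot}$. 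Both arguments rest on identical bookkeeping, but your modular reduction avoids reopening the lemma's per-block case analysis and makes the ``$\tau$ staleness twice'' structure of the $2\tau$ window explicit, whereas the paper's rerun keeps everything at the level of individual block-update times (and in fact supports the marginally sharper lower limit $\maxone{k-2\tau+1}$, which the stated corollary does not need). Your index bookkeeping checks out, including the degenerate case $r(k,l)=k$, where your first sum is empty and the bound reduces to the lemma itself.
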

\begin{proof}
	Denote $k''=r(k,l)$. W.k.t. $\maxone{k''-\tau+1}\leq r(k'',l) \leq k''$ and $\maxone{k-\tau+1}\leq r(k,l) \leq k$. Thus, $\maxone{k-2\tau+1} \leq r(r(k,l),l)\leq k$. Following the same methodology as the proof of \cref{lemma-diff-delta} and changing the limits of the summations in \eqref{eqn:lemma1_step2_mneql} and \eqref{eqn:lemma1_step2_meql}, we get the result.
\end{proof}

%

From \Cref{lemma-diff-delta} and the the fact that $(a-b)^2 \geq 0 \implies ab \leq \frac{1}{2}\left( a^2+b^2 \right)$ we can show that
\begin{equation}
	\norm{\paramstep{\vect{\Theta}}{k}-\paramstep{\vecthat{\Theta}}{k,l}}^2_2 \leq \frac{\tau+1}{2}\sum_{i=\maxone{k-\tau}}^{k-1} \norm{\paramstep{\vect{\Delta}}{i}}^2_2.
	\label{corr-diff-delta-sq}
\end{equation} 
By following the same steps but starting from \Cref{corr-diff-delta-twotau} we can show that
\begin{equation}
	\norm{\paramstep{\vect{\Theta}}{k}-\paramstep{\vecthat{\Theta}}{r(k,l),l}}^2_2 \leq \left(\tau + \frac{1}{2}\right)\sum_{i=\maxone{k-2\tau}}^{k-1} \norm{\paramstep{\vect{\Delta}}{i}}^2_2
	\label{corr-diff-delta-sq-twotau}
\end{equation} 


\begin{definition}
	A Lyapunov function $\xi_s$ is defined as
	\begin{equation}
		\xi_s = \Lloss\left(\paramstep{\vect{\Theta}}{s}\right) - \Lloss^* + \frac{\eta \rho_0 \mathrm{L} (\tau+1)}{4} \sum_{i=\maxone{s-\tau}}^{s-1} (i-(s-\tau)+1) \norm{\paramstep{\vect{\Delta}}{i}}_2^2,
	\end{equation}
	where $\rho_0 > 0$ will be determined later.
\end{definition}

\begin{lemma}
	If Assumptions 1, 3, 4, and 5 are satisfied then,
	\begin{equation}
		\frac{\mathrm{L}}{\eta(1-\eta(\tau+1))}\expectation \left[ \xi_s - \xi_{s+1} \right] + \frac{\eta \left( \tau + 2 \right)}{4(1-\eta(\tau+1))} \sigma^2_n  \geq \expectation \left[ \norm{\nabla_{n} \mathcal{L}\left( \paramstep{\hatvect{\Theta}}{s,n} \right)}_2^2 \right] \geq 0.
	\end{equation}
	\label{lemma-lyapunov-diff-grad-square-noisy-special}
	\vspace{-10mm}
\end{lemma}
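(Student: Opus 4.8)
The plan is to treat this as the noisy analogue of \Cref{lemma-lyapunovdiff-delta} (the deterministic Lyapunov descent lemma), following the asynchronous-descent template of \cite{sun2017asynchronous} but carrying the stochastic gradient noise through in expectation. First I would invoke the $\mathrm{L}$-Lipschitz descent inequality for the single-block update $\paramstep{\vect{\Delta}}{s}=-\frac{\eta}{\mathrm{L}}\widetilde{\nabla}_n\Lloss(\paramstep{\hatvect{\Theta}}{s,n})$, giving $\Lloss(\paramstep{\vect{\Theta}}{s+1})-\Lloss(\paramstep{\vect{\Theta}}{s})\leq\innerprod{\nabla_n\Lloss(\paramstep{\vect{\Theta}}{s})}{\paramstep{\vect{\Delta}}{s}}+\frac{\mathrm{L}}{2}\norm{\paramstep{\vect{\Delta}}{s}}_2^2$. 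Writing $\widetilde{\nabla}_n\Lloss=\nabla_n\Lloss(\paramstep{\hatvect{\Theta}}{s,n})+\vect{\epsilon}^{(s)}_n$ and taking conditional expectation over $\vect{\epsilon}^{(s)}_n$, Assumption 4 makes $\vect{\epsilon}^{(s)}_n$ independent of the history, so by Assumption 3 the inner product with the zero-mean noise vanishes and $\expectation\norm{\paramstep{\vect{\Delta}}{s}}_2^2=\frac{\eta^2}{\mathrm{L}^2}(\expectation\norm{\nabla_n\Lloss(\paramstep{\hatvect{\Theta}}{s,n})}_2^2+\sigma_n^2)$. This yields the one-step bound $\expectation[\Lloss(\paramstep{\vect{\Theta}}{s})-\Lloss(\paramstep{\vect{\Theta}}{s+1})]\geq\frac{\eta}{\mathrm{L}}\expectation\innerprod{\nabla_n\Lloss(\paramstep{\vect{\Theta}}{s})}{\nabla_n\Lloss(\paramstep{\hatvect{\Theta}}{s,n})}-\frac{\eta^2}{2\mathrm{L}}(\expectation\norm{\nabla_n\Lloss(\paramstep{\hatvect{\Theta}}{s,n})}_2^2+\sigma_n^2)$.

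The second step converts the inner product of the \emph{current}-parameter gradient with the \emph{stale} gradient into a clean $\norm{\nabla_n\Lloss(\paramstep{\hatvect{\Theta}}{s,n})}_2^2$ plus a staleness penalty. I would split the inner product by Young's inequality with a carefully chosen weight $\theta=\eta(\tau+1)$, bound the gradient mismatch by Lipschitzness as $\norm{\nabla_n\Lloss(\paramstep{\vect{\Theta}}{s})-\nabla_n\Lloss(\paramstep{\hatvect{\Theta}}{s,n})}_2^2\leq\mathrm{L}^2\norm{\paramstep{\vect{\Theta}}{s}-\paramstep{\hatvect{\Theta}}{s,n}}_2^2$, and then control the latter by the squared staleness bound \eqref{corr-diff-delta-sq}, i.e. $\leq\mathrm{L}^2\frac{\tau+1}{2}\sum_{i=\maxone{s-\tau}}^{s-1}\norm{\paramstep{\vect{\Delta}}{i}}_2^2$. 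The resulting weighted sum of past increments is exactly what the correction term in $\xi_s$ is designed to absorb: a direct computation of $\xi_s-\xi_{s+1}$ shows the telescoping weights collapse to $\frac{\eta\rho_0\mathrm{L}(\tau+1)}{4}\big(\sum_{i=\maxone{s-\tau}}^{s-1}\norm{\paramstep{\vect{\Delta}}{i}}_2^2-\tau\norm{\paramstep{\vect{\Delta}}{s}}_2^2\big)$, uniformly in $s$ once the $\maxone{\cdot}$ convention handles the boundary. Choosing $\rho_0=\frac{1}{\eta(\tau+1)}$ (so the correction coefficient is simply $\mathrm{L}/4$) makes the $\sum_i\norm{\paramstep{\vect{\Delta}}{i}}_2^2$ terms cancel exactly against the staleness penalty.

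What remains is the residual $-\frac{\mathrm{L}\tau}{4}\expectation\norm{\paramstep{\vect{\Delta}}{s}}_2^2$, which I would rewrite as $-\frac{\eta^2\tau}{4\mathrm{L}}(\expectation\norm{\nabla_n\Lloss(\paramstep{\hatvect{\Theta}}{s,n})}_2^2+\sigma_n^2)$ using the update rule and Assumption 3. Collecting all $\norm{\nabla_n\Lloss(\paramstep{\hatvect{\Theta}}{s,n})}_2^2$ contributions gives a coefficient $\frac{\eta}{4\mathrm{L}}\big(4-\eta(3\tau+4)\big)$, while the noise contributions collapse to exactly $-\frac{\eta^2(\tau+2)}{4\mathrm{L}}\sigma_n^2$. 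Since $3\tau+4\leq4(\tau+1)$, the gradient coefficient is bounded below by $\frac{\eta}{\mathrm{L}}(1-\eta(\tau+1))$, which is positive by Assumption 5; using nonnegativity of $\expectation\norm{\nabla_n\Lloss(\paramstep{\hatvect{\Theta}}{s,n})}_2^2$ to pass to this weaker constant, then dividing through and rearranging, produces precisely the stated inequality, while the right-hand $\geq 0$ is immediate.

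The delicate part is the simultaneous tuning of the Young weight $\theta$ and the Lyapunov weight $\rho_0$: they must be chosen together so that the past-increment sums cancel in the telescoping \emph{and} the leftover noise coefficient lands exactly on $\frac{\eta(\tau+2)}{4}$. I expect the main obstacle to be this bookkeeping --- verifying the telescoping identity for $\xi_s-\xi_{s+1}$ including the small-$s$ boundary, and confirming that the clean stated constant follows from the sharper intermediate coefficient via $3\tau+4\leq4(\tau+1)$ (the sharper $4-\eta(3\tau+4)$ form is in fact what surfaces in the denominator of \Cref{theo-grad-avg-upperbound-noisy-v2}).
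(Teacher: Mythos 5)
Your proposal matches the paper's own proof essentially step for step: the $\mathrm{L}$-Lipschitz descent inequality with the zero-mean noise eliminated in expectation via Assumptions 3--4, Young's inequality on the stale-versus-current gradient inner product, the staleness bound \eqref{corr-diff-delta-sq} absorbed exactly by the Lyapunov correction term with $\rho_0=\frac{1}{\eta(\tau+1)}$, and the intermediate coefficients $\frac{\eta}{4\mathrm{L}}\left(4-\eta(3\tau+4)\right)$ and $\frac{\eta^2(\tau+2)}{4\mathrm{L}}\sigma_n^2$ relaxed to the stated form via $3\tau+4\leq 4(\tau+1)$ are precisely those the paper leaves implicit in its compressed derivation. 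Your bookkeeping is correct (including the observation that the sharper $4-\eta(3\tau+4)$ denominator resurfaces in \Cref{corr-lyapunov-diff-delta-square-noisy-special}), so this is the same argument, merely written out in fuller detail than the paper provides.
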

\begin{proof} 
	\begin{equation}
		\xi_s - \xi_{s+1} \\
		= \Lloss\left(\paramstep{\vect{\Theta}}{s}\right)  - \Lloss\left(\paramstep{\vect{\Theta}}{s+1}\right) + \frac{\eta \rho_0 \mathrm{L} (\tau+1)}{4} \sum_{i=\maxone{s-\tau}}^{s-1} \norm{\paramstep{\vect{\Delta}}{i}}_2^2 - \frac{\eta \rho_0 \mathrm{L} \tau (\tau+1)}{4} \norm{\paramstep{\Delta}{s}}_2^2.
		\label{eqn:lemma-lyapunov-diff-grad-square-noisy-special-step1}
	\end{equation}
	By Assumption 1 and taking expectation $\expectation_{\vect{E}_n^{(s)}}$ on both sides, we get
	\begin{equation}
		\innerprod{\nabla_{\vect{\Theta}}\Lloss\left( \paramstep{\vect{\Theta}}{s} \right)}{\expectation_{\vect{E}_n^{(s)}} \left[ \paramstep{\Delta}{s} \right]} = \innerprod{\nabla_{\vect{\Theta}}\Lloss\left( \paramstep{\vect{\Theta}}{s} \right)}{-\frac{\eta}{\mathrm{L}}\nabla_{n} \mathcal{L}\left( \paramstep{\hatvect{\Theta}}{s,n} \right)}.
	\end{equation}
	By applying \eqref{eqn:ip_sum_ineq} (using $\rho_0 > 0$ instead of $\mu^2$) and Assumption 1, we get
	\begin{equation}
		\innerprod{\nabla_{\vect{\Theta}}\Lloss\left( \paramstep{\vect{\Theta}}{s} \right)}{\expectation_{\vect{E}_n^{(s)}} \left[ \paramstep{\Delta}{s} \right]} \leq \frac{\eta \mathrm{L} \rho_0}{2} \norm{\paramstep{\vect{\Theta}}{s} - \paramstep{\hatvect{\Theta}}{s,n}}_2^2 + \frac{\eta}{\mathrm{L}}\left(\frac{1}{2\rho_0}-1\right) \norm{\nabla_{n} \mathcal{L}\left( \paramstep{\hatvect{\Theta}}{s,n} \right)}_2^2.
	\end{equation}
	Let $\expectation$ represent the expectation w.r.t. all previous steps from $1,\dots,s$. Then,
	\begin{multline}
		\expectation \left[ \Lloss\left(\paramstep{\vect{\Theta}}{s}\right)  - \Lloss\left(\paramstep{\vect{\Theta}}{s+1}\right) \right] \geq - \frac{\eta \mathrm{L} \rho_0}{2} \expectation \left[ \norm{\paramstep{\vect{\Theta}}{s} - \paramstep{\hatvect{\Theta}}{s,n}}_2^2 \right] \\
		+ \frac{\eta}{\mathrm{L}}\left(1-\frac{1}{2\rho_0}\right) \expectation \left[ \norm{\nabla_{n} \mathcal{L}\left( \paramstep{\hatvect{\Theta}}{s,n} \right)}_2^2 \right] - \frac{\mathrm{L}}{2} \expectation \left[ \norm{\paramstep{\Delta}{s}}_2^2 \right].
	\end{multline}
	From, \Cref{corr-diff-delta-sq} and \eqref{eqn:lemma-lyapunov-diff-grad-square-noisy-special-step1} we get,
	\begin{equation}
		\expectation \left[ \xi_s - \xi_{s+1} \right] \geq 	\frac{\eta}{\mathrm{L}}\left(1-\frac{1}{2\rho_0}\right) \expectation \left[ \norm{\nabla_{n} \mathcal{L}\left( \paramstep{\hatvect{\Theta}}{s,n} \right)}_2^2 \right] \\
		- \frac{\mathrm{L}}{2} \expectation \left[ \norm{\paramstep{\Delta}{s}}_2^2 \right] - \frac{\eta \rho_0 \mathrm{L} \tau (\tau+1)}{4} \expectation \left[  \norm{\paramstep{\Delta}{s}}_2^2 \right].
	\end{equation}
	Since, $\paramstep{\Delta}{s} = -\frac{\eta}{\mathrm{L}} \left(\nabla_n \Lloss \left(\paramstep{\vecthat{\Theta}}{s,n}\right) + \vect{\epsilon}_n^{(s)}\right)$, we get
	By using definition of $\paramstep{\Delta}{s}$, setting $\rho_0 = \frac{1}{\eta(\tau+1)} \geq 1$, and using Assumption 5, we get
	\begin{equation}
		\frac{\mathrm{L}}{\eta(1-\eta(\tau+1))}\expectation \left[ \xi_s - 	\xi_{s+1} \right] + \frac{\eta \left( \tau + 2 \right)}{4(1-\eta(\tau+1))} \sigma^2_n  \geq \expectation \left[ \norm{\nabla_{n} \mathcal{L}\left( \paramstep{\hatvect{\Theta}}{s,n} \right)}_2^2 \right] \geq 0.
	\end{equation}
\end{proof}

\begin{corr}
	If Assumptions 1, 3, and 5 hold then,
	\begin{equation}
		\frac{4 \eta}{\mathrm{L} (4 - \eta (3\tau+4))} \expectation \left[ \xi_s - \xi_{s+1} \right] + \frac{4 \eta^2}{\mathrm{L}^2 (4 - \eta (3\tau+4))} \sigma^2_n \geq \expectation \left[ \norm{\paramstep{\Delta}{s}}_2^2 \right] \geq 0.
	\end{equation}	
	\label{corr-lyapunov-diff-delta-square-noisy-special}
	\vspace{-12mm}
\end{corr}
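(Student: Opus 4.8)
The plan is to recognize \Cref{corr-lyapunov-diff-delta-square-noisy-special} as a restatement of \Cref{lemma-lyapunov-diff-grad-square-noisy-special} in which the controlled quantity $\expectation[\norm{\nabla_n\Lloss(\paramstep{\hatvect{\Theta}}{s,n})}_2^2]$ is traded for $\expectation[\norm{\paramstep{\vect{\Delta}}{s}}_2^2]$. Because the update writes $\paramstep{\vect{\Delta}}{s}$ as a scalar multiple of the noisy gradient, the two quantities are rigidly linked, so the corollary follows from a single substitution together with bookkeeping.

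First I would establish the bridging identity. From the update $\paramstep{\vect{\Delta}}{s} = -\frac{\eta}{\mathrm{L}}\widetilde{\nabla}_n\Lloss(\paramstep{\hatvect{\Theta}}{s,n})$ with $\widetilde{\nabla}_n\Lloss = \nabla_n\Lloss + \vect{\epsilon}_n^{(s)}$, expanding the squared norm and taking expectation yields
\begin{equation}
	\expectation\left[\norm{\paramstep{\vect{\Delta}}{s}}_2^2\right] = \frac{\eta^2}{\mathrm{L}^2}\left(\expectation\left[\norm{\nabla_n\Lloss}_2^2\right] + \sigma^2_n\right),
\end{equation}
since the cross term $\expectation[\innerprod{\nabla_n\Lloss}{\vect{\epsilon}_n^{(s)}}]$ vanishes by the zero-mean property of the current-step noise (Assumption 3) and its second moment equals $\sigma^2_n$. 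Inverting this gives $\expectation[\norm{\nabla_n\Lloss}_2^2] = \frac{\mathrm{L}^2}{\eta^2}\expectation[\norm{\paramstep{\vect{\Delta}}{s}}_2^2] - \sigma^2_n$.

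Rather than operate on the final display of \Cref{lemma-lyapunov-diff-grad-square-noisy-special}, I would insert this identity into the intermediate inequality appearing in its proof,
\begin{equation}
	\expectation\left[\xi_s - \xi_{s+1}\right] \geq \frac{\eta}{\mathrm{L}}\left(1 - \frac{1}{2\rho_0}\right)\expectation\left[\norm{\nabla_n\Lloss}_2^2\right] - \left(\frac{\mathrm{L}}{2} + \frac{\eta\rho_0\mathrm{L}\tau(\tau+1)}{4}\right)\expectation\left[\norm{\paramstep{\vect{\Delta}}{s}}_2^2\right],
\end{equation}
with the same choice $\rho_0 = \frac{1}{\eta(\tau+1)}$, so that the $\norm{\paramstep{\vect{\Delta}}{s}}_2^2$ term is retained rather than re-eliminated. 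Collecting the $\expectation[\norm{\paramstep{\vect{\Delta}}{s}}_2^2]$ contributions, the $\frac{\mathrm{L}}{2}$ term and the $\frac{\eta\rho_0\mathrm{L}\tau(\tau+1)}{4} = \frac{\mathrm{L}\tau}{4}$ term combine with the gradient factor $\frac{\mathrm{L}}{\eta}\left(1 - \frac{\eta(\tau+1)}{2}\right)$ so that the coefficient collapses to $\frac{\mathrm{L}}{4\eta}\left(4 - \eta(3\tau+4)\right)$, which is strictly positive because Assumption 5 forces $\eta(\tau+1)\leq 1$ and hence $4 - \eta(3\tau+4) \geq 1 - \eta > 0$. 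Rearranging for $\expectation[\norm{\paramstep{\vect{\Delta}}{s}}_2^2]$ produces the first claimed term $\frac{4\eta}{\mathrm{L}(4-\eta(3\tau+4))}\expectation[\xi_s - \xi_{s+1}]$, together with a noise term whose coefficient works out to $\frac{4\eta^2}{\mathrm{L}^2(4-\eta(3\tau+4))}\left(1 - \frac{\eta(\tau+1)}{2}\right)$.

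Finally, since $1 - \frac{\eta(\tau+1)}{2} \leq 1$, this noise coefficient is bounded by $\frac{4\eta^2}{\mathrm{L}^2(4-\eta(3\tau+4))}$, matching the corollary, and nonnegativity of $\expectation[\norm{\paramstep{\vect{\Delta}}{s}}_2^2]$ is immediate. I expect the only real obstacle to be purely computational: verifying that the two $\norm{\paramstep{\vect{\Delta}}{s}}_2^2$ contributions and the gradient factor collapse to exactly $4 - \eta(3\tau+4)$, and tracking signs so that the inequality direction is preserved after the substitution of a negative-sign identity and the final division.
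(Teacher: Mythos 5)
Your proposal is correct and is essentially the paper's own proof: the paper likewise starts from the intermediate inequality in the proof of \Cref{lemma-lyapunov-diff-grad-square-noisy-special}, uses the definition of $\paramstep{\vect{\Delta}}{s}$ to trade $\expectation\left[\norm{\nabla_{n}\Lloss(\paramstep{\hatvect{\Theta}}{s,n})}_2^2\right]$ for $\frac{\mathrm{L}^2}{\eta^2}\expectation\left[\norm{\paramstep{\vect{\Delta}}{s}}_2^2\right]-\sigma^2_n$, keeps $\rho_0=\frac{1}{\eta(\tau+1)}$, and rearranges using Assumption 5 to guarantee $4-\eta(3\tau+4)\geq 0$. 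Your coefficient bookkeeping (collapse to $\frac{\mathrm{L}}{4\eta}\left(4-\eta(3\tau+4)\right)$, then dropping the factor $1-\frac{\eta(\tau+1)}{2}\leq 1$ on the noise term) matches the paper's computation exactly.
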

\begin{proof}
	From proof of \Cref{lemma-lyapunov-diff-grad-square-noisy-special} w.k.t.,
	\begin{multline}
		\expectation \left[ \xi_s - \xi_{s+1} \right] \geq 	\frac{\eta}{\mathrm{L}}\left(1-\frac{1}{2\rho_0}\right) \expectation \left[ \norm{\nabla_{n} \mathcal{L}\left( \paramstep{\hatvect{\Theta}}{s,n} \right)}_2^2 \right] \\
		- \frac{\mathrm{L}}{2} \expectation \left[ \norm{\paramstep{\Delta}{s}}_2^2 \right] - \frac{\eta \rho_0 \mathrm{L} \tau (\tau+1)}{4} \expectation \left[  \norm{\paramstep{\Delta}{s}}_2^2 \right].
	\end{multline}
	By using definition of $\paramstep{\Delta}{s}$, we get
	\begin{equation}
		\expectation \left[ \xi_s - \xi_{s+1} \right] \geq 	\left(\frac{\mathrm{L}}{\eta}\left(1-\frac{1}{2\rho_0}\right) - \frac{\mathrm{L}}{2} - \frac{\eta \rho_0 L \tau (\tau+1)}{4} \right) \expectation \left[ \norm{\paramstep{\Delta}{s}}_2^2 \right] - \frac{\eta}{\mathrm{L}}\left(1-\frac{1}{2\rho_0}\right) \sigma^2_n.
	\end{equation}	
	Note that $1 \geq \eta \frac{3\tau+4}{4}$ by Assumption 5. Thus,
	\begin{equation}
		\frac{4 \eta}{\mathrm{L} (4 - \eta (3\tau+4))} \expectation \left[ \xi_s - \xi_{s+1} \right] + \frac{4 \eta^2}{\mathrm{L}^2 (4 - \eta (3\tau+4))} \sigma^2_n \geq \expectation \left[ \norm{\paramstep{\Delta}{s}}_2^2 \right] \geq 0.
	\end{equation}
\end{proof}

Now we can the begin proof of \Cref{theo-grad-avg-upperbound-noisy-v2}.
\begin{proof}
	From \eqref{eqn:normsq_of_sumeq} we have
	\begin{equation}
		\norm{\nabla \mathcal{L}\left(\paramstep{\vect{\Theta}}{s}\right)}_2^2 = \sum_{l=1}^{N+1} \norm{\nabla_{n} \mathcal{L}\left(\paramstep{\vect{\Theta}}{s}\right)}_2^2.
		\label{eqn:theo-grad-avg-upperbound-noisy-v2-grad-norm-bound}
	\end{equation}
	This is because $\innerprod{\nabla_{n} \mathcal{L}\left(\paramstep{\vect{\Theta}}{s}\right)}{\nabla_{m} \mathcal{L}\left(\paramstep{\vect{\Theta}}{s}\right)}$ have non-zero values in mutually exclusive indices for $l \neq m$.

	We shift our focus to bounding $\norm{\nabla_{n} \mathcal{L}\left(\paramstep{\vect{\Theta}}{s}\right)}_2^2$. Based on Assumption 1, we can simplify above as
	\begin{multline}
		\norm{\nabla_n \Lloss \left(\paramstep{\vect{\Theta}}{s}\right)}_2^2 \leq -\norm{\nabla_n \Lloss \left(\paramstep{\vecthat{\Theta}}{r(s,n),n}\right)}_2^2 + 2\innerprod{\nabla_n \Lloss \left(\paramstep{\vect{\Theta}}{s}\right)}{\nabla_n \Lloss \left(\paramstep{\vecthat{\Theta}}{r(s,n),n}\right)} \\
		+ \mathrm{L}^2 	\norm{\paramstep{\vect{\Theta}}{s}- \paramstep{\vecthat{\Theta}}{r(s,n),n}}_2^2.
	\end{multline}
	We bound $2\innerprod{\nabla_n \Lloss \left(\paramstep{\vect{\Theta}}{s}\right)}{\nabla_n \Lloss \left(\paramstep{\vecthat{\Theta}}{r(s,n),n}\right)}$ using \eqref{eqn:ip_sum_ineq}. Note, $0 < \mu^2 < 1$. Thus,
	\begin{equation}
		\left(1-\mu^2\right)\norm{\nabla_n \Lloss \left(\paramstep{\vect{\Theta}}{s}\right)}_2^2 \leq \left(\frac{1}{\mu^2}-1\right)\norm{\nabla_n \Lloss \left(\paramstep{\vecthat{\Theta}}{r(s,n),n}\right)}_2^2 + \mathrm{L}^2 	\norm{\paramstep{\vect{\Theta}}{s}- \paramstep{\vecthat{\Theta}}{r(s,n),n}}_2^2.
		\label{eqn:theo-grad-avg-upperbound-noisy-v2-step1}
	\end{equation}
	Using \cref{corr-diff-delta-sq-twotau} on \eqref{eqn:theo-grad-avg-upperbound-noisy-v2-step1} we get
	\begin{equation}
		\norm{\nabla_n \Lloss \left(\paramstep{\vect{\Theta}}{s}\right)}_2^2 \\
		\leq \frac{1}{\mu^2}\norm{\nabla_n \Lloss \left(\paramstep{\vecthat{\Theta}}{r(s,n),n}\right)}_2^2 + \frac{\mathrm{L}^2(\tau + \frac{1}{2})}{1-\mu^2}	\sum_{i=\maxone{s-2\tau}}^{s-1} \norm{\paramstep{\vect{\Delta}}{i}}^2_2.
	\end{equation}
	Taking expectation $\expectation$ w.r.t. all. noise in steps $1,\dots,s-1$ and using \cref{lemma-lyapunov-diff-grad-square-noisy-special} we get,
	\begin{multline}
		\expectation \left[ \norm{\nabla_n \Lloss \left(\paramstep{\vect{\Theta}}{s}\right)}_2^2 \right]
		\leq \frac{1}{\mu^2}\left(\frac{\mathrm{L}}{\eta(1-\eta(\tau+1))}\expectation \left[ \xi_{r(s,n)} - \xi_{r(s,n)+1} \right] + \frac{\eta \left( \tau + 2 \right)}{4(1-\eta(\tau+1))} \sigma^2_n \right) \\
		+ \frac{\mathrm{L}^2(\tau + \frac{1}{2}) }{1-\mu^2}	\sum_{i=\maxone{s-2\tau}}^{s-1} \expectation \left[ \norm{\paramstep{\vect{\Delta}}{i}}^2_2 \right].
	\end{multline}
	Employing \cref{corr-lyapunov-diff-delta-square-noisy-special} we get,
	\begin{multline}
		\expectation \left[ \norm{\nabla_n \Lloss \left(\paramstep{\vect{\Theta}}{s}\right)}_2^2 \right] \leq\underbrace{ \frac{1}{\mu^2}\left(\frac{\mathrm{L}}{\eta(1-\eta(\tau+1))}\expectation \left[ \xi_{r(s,n)} - \xi_{r(s,n)+1} \right] + \frac{\eta \left( \tau + 2 \right)}{4(1-\eta(\tau+1))} \sigma^2_n \right)}_{a_1} \\
		+ \underbrace{\frac{2 \eta \mathrm{L} (2\tau + 1)}{(1-\mu^2)(4 - \eta (3\tau+4))}  \sum_{i=\maxone{s-2\tau}}^{s-1} \left( \expectation \left[\xi_s - \xi_{s+1} \right] + \frac{\eta}{\mathrm{L}}  \sigma^2_{m_i}\right)}_{a_2}	.
	\end{multline}
	Here, ${m_i} \in \{1,\dots,N+1\}$ denotes the block that was updated in the $i^{\text{th}}$ step. Let us denote, 
	\begin{equation}
		\frac{1}{S}\sum_{s=1}^{S} \sum_{l=1}^{N+1}\expectation \left[\norm{\nabla_n \Lloss 	\left(\paramstep{\vect{\Theta}}{s}\right)}_2^2\right] \leq A_1 + A_2
		\label{eqn:theo-grad-avg-upperbound-noisy-v2-grad-blockl-norm-bound}
	\end{equation}
	where $A_1 \defeq \frac{1}{S}\sum_{s=1}^{S} \sum_{l=1}^{N+1} a_1$ and $A_2 \defeq \frac{1}{S}\sum_{s=1}^{S} \sum_{l=1}^{N+1} a_2$.
	
	\textbf{Bounding $A_1$:} Bounding $A_1$ requires us to bound the sum
	\begin{equation}
		\sum_{s=1}^{S} \left(\frac{\mathrm{L}}{\eta(1-\eta(\tau+1))}\expectation \left[ \xi_{r(s,n)} - \xi_{r(s,n)+1} \right] + \frac{\eta \left( \tau + 2 \right)}{4(1-\eta(\tau+1))} \sigma^2_n \right).
	\end{equation}
	The number of times a particular term $ \left(\xi_{r(s,n)} - \xi_{r(s,n)+1}\right)$ is repeated in the summation depends on how many update steps the value of $r(s,n)$ can remain the same. In our setup we know that for every $N'E$ parameter update steps, $E$ of them have to be updates corresponding to block-$l$. So the maximum gap between changes in $r(s,n)$ is bounded by $2NE < \tau$. Thus we have
	\begin{multline}
		\sum_{s=1}^{S} \left(\frac{\mathrm{L}}{\eta(1-\eta(\tau+1))}\expectation \left[ \xi_{r(s,n)} - \xi_{r(s,n)+1} \right] + \frac{\eta \left( \tau + 2 \right)}{4(1-\eta(\tau+1))} \sigma^2_n \right) \\
		< \tau \sum_{s=1:r(s,n)\neq r(s-1,n)}^{S} \left(\frac{\mathrm{L}}{\eta(1-\eta(\tau+1))}\expectation \left[ \xi_{s} - \xi_{s+1} \right] + \frac{\eta \left( \tau + 2 \right)}{4(1-\eta(\tau+1))} \sigma^2_n \right).
		\label{eqn:theo-grad-avg-upperbound-noisy-v2-step3}
	\end{multline}
	Note that the term inside the summation is $\geq 0$ due to \cref{lemma-lyapunov-diff-grad-square-noisy-special}. Since only one block-$l$ is updated during a step $s$, $\sum_{l=1}^{N+1} \sum_{s=1:r(s,n)\neq r(s-1,n)}^{S} $ becomes $\sum_{s=1}^{S}$. Thusx $\sum_{l=1}^{N+1} \sum_{s=1:r(s,n)\neq r(s-1,n)}^{S} \sigma^2_n$ becomes  $\sum_{s=1}^{S} \sigma^2_{m_s}$ where $m_s$ is the block updated at the $s^{\text{th}}$ iteration.
	\begin{multline}
		\sum_{l=1}^{N+1} \sum_{s=1}^{S} \left(\frac{\mathrm{L}}{\eta(1-\eta(\tau+1))}\expectation \left[ \xi_{r(s,n)} - \xi_{r(s,n)+1} \right] + \frac{\eta \left( \tau + 2 \right)}{4(1-\eta(\tau+1))} \sigma^2_n \right) \\
		< \tau \sum_{s=1}^{S} \left(\frac{\mathrm{L}}{\eta(1-\eta(\tau+1))}\expectation \left[ \xi_{s} - \xi_{s+1} \right] + \frac{\eta \left( \tau + 2 \right)}{4(1-\eta(\tau+1))} \sigma^2_{m_s} \right).
		\label{eqn:theo-grad-avg-upperbound-noisy-v2-step3b}
	\end{multline}
	The first term is a telescopic sum and $\xi_s \geq 0$ by definition. Thus
	\begin{equation}
		A_1 < \frac{\tau}{S\mu^2} \left( \frac{\mathrm{L}}{\eta(1-\eta(\tau+1))}\xi_{1} + \frac{\eta \left( \tau + 2 \right)}{4(1-\eta(\tau+1))} \sum_{s=1}^S \sigma^2_{m_s}\right).
	\end{equation}
	Let us define $T \defeq \floor{\frac{S}{(N+1)E}}$. Then, $\frac{1}{S}\sum_{s=1}^S \sigma^2_{m_s} < \frac{E(T+1)}{S} \sigma^2 < \frac{2}{N+1} \sigma^2$. Thus we can write the bound on $A_1$ as
	\begin{equation}
		A_1 < \frac{\tau \mathrm{L}}{S \mu^2 \eta(1-\eta(\tau+1))}\xi_{1} + 	
		\frac{\eta \left( \tau + 2 \right)}{2 \mu^2 (N+1)(1-\eta(\tau+1))} \sigma^2.
		\label{eqn:theo-grad-avg-upperbound-noisy-v2-A1-bound}
	\end{equation}

	\textbf{Bounding $A_2$:}
	\begin{multline}
		\sum_{s=1}^{S} \sum_{i=\maxone{s-2\tau}}^{s-1}\expectation \left[\xi_s - \xi_{s+1} \right] + \frac{\eta}{\mathrm{L}}  \sigma^2_{m_i} =  \underbrace{\sum_{s=1}^{2\tau} \left(\xi_1 - \expectation \left[\xi_{s}\right] \right)}_{B_1} + \underbrace{\sum_{s=2\tau+1}^{S} \expectation \left[ \xi_{s-2\tau} -\xi_{s} \right] - \frac{\eta}{\mathrm{L}} \sum_{s=1}^{2\tau} \sum_{h=1}^{s-1} \sigma^2_{m_h}}_{B_2} \\
		+ \frac{\eta}{\mathrm{L}} \underbrace{\sum_{s=1}^{2\tau} \sum_{h=1}^{s-1} \sigma^2_{m_h}}_{B_3}+ \frac{\eta}{\mathrm{L}} \underbrace{\sum_{s=1}^{S} \sum_{i=\maxone{s-2\tau}}^{s-1}\sigma^2_{m_i} }_{B_4}
	\end{multline}
	Since, $\xi_s \geq 0 \forall s$
	\begin{equation}
		B_1 \leq (2\tau-1)\xi_1.
		\label{eqn:theo-grad-avg-upperbound-noisy-v2-B1-bound}
	\end{equation}
	When $S \leq 2\tau$, $B_2 = 0$. When $S \leq 2\tau$, $B_2=0$. Let us look at $B_2$ when $S \geq 2\tau+1$. We can write $B_2$ as 
	\begin{equation}
		B_2 = \sum_{j=1}^{S-2\tau} \expectation \left[\xi_{j} \right] - \sum_{s=2\tau+1}^{S} \expectation \left[\xi_{s} \right] -\sum_{s=1}^{2\tau} \sum_{h=1}^{s-1} \sigma^2_{m_h}.
		\label{eqn:theo-grad-avg-upperbound-noisy-v2-B2}
	\end{equation}
	where $j=s-2\tau$. There are further two cases here. 
	\begin{enumerate}
		\item If $S-2\tau \leq 2\tau$. In such a case the two summations in  \eqref{eqn:theo-grad-avg-upperbound-noisy-v2-B2} will not have any common terms. Thus, the second summation of negative terms can be dropped to get an upper bound. Thus, $\sum_{j=1}^{2\tau} \expectation \left[\xi_{j}\right] - \sum_{s=1}^{2\tau} \sum_{h=1}^{s-1} \sigma^2_{m_h}$ is an upper bound.
		\item  If $S-2\tau \geq 2\tau+1$. In this case, define $S-2\tau = 2\tau+1+m$ where $m \in \mathbb{N}$. After cancelling the common terms and dropping the remaining terms in the second summation, we get an upper bound on  \eqref{eqn:theo-grad-avg-upperbound-noisy-v2-B2} as $B_2 \leq \sum_{j=1}^{S-2\tau-m-1} \expectation \left[\xi_{j}\right] - \sum_{s=1}^{2\tau} \sum_{h=1}^{s-1} \sigma^2_{m_h}$. By definition of $m$, $S-2\tau-m-1=2\tau$.
	\end{enumerate}
	By repeatedly applying \Cref{corr-lyapunov-diff-delta-square-noisy-special}, we can show that $\xi_1 + \frac{\eta}{\mathrm{L}} \sum_{h=1}^{s-1} \sigma^2_{m_h} \geq \expectation \left[\xi_s\right]$. Thus,
	\begin{equation}
		B_2 \leq \sum_{j=1}^{2\tau} \expectation \left[\xi_{j} \right] -\frac{\eta}{\mathrm{L}} \sum_{h=1}^{s-1} \sigma^2_{m_j} \leq 2\tau \xi_1.
		\label{eqn:theo-grad-avg-upperbound-noisy-v2-B2-bound}
	\end{equation}
	To bound $B_3$ we recognize that
	\begin{multline}
		B_3 = (2\tau-1) \sigma^2_{m_1} + (2\tau-2) \sigma^2_{m_2} + \dots + \sigma^2_{m_{2\tau-1}}
		\leq (2\tau-1) \left( \sigma^2_{m_1} + \dots +\sigma^2_{m_{2\tau-1}} \right) \leq 8E\left(\tau-\frac{1}{2}\right)\sigma^2.
		\label{eqn:theo-grad-avg-upperbound-noisy-v2-B3-bound}
	\end{multline}
	The last inequality can be reasoned as follows. The period $2\tau-1$ corresponds to $4NE+2E-1$ update steps which is less than $4(N+1)E$. Hence the maximum number of times any block is updated is at most $4E$ updates. Making use of the definition of $\sigma^2 = \sum_{p=1}^{N+1}\sigma^2_p$ we get the bound.
	
	The inner summation of $B_4$ is a summations of $\sigma^2_{m_i}$ corresponding to a maximum length of $2\tau$. Following the same steps as the $B_3$ bound, this inner summation can also be bounded by $4E\sigma^2$. Thus,
	\begin{equation}
		B_4 \leq 4ES\sigma^2.
		\label{eqn:theo-grad-avg-upperbound-noisy-v2-B4-bound}
	\end{equation}
	
	Putting together \eqref{eqn:theo-grad-avg-upperbound-noisy-v2-B1-bound}, \eqref{eqn:theo-grad-avg-upperbound-noisy-v2-B2-bound}, \eqref{eqn:theo-grad-avg-upperbound-noisy-v2-B3-bound}, and \eqref{eqn:theo-grad-avg-upperbound-noisy-v2-B4-bound} we get
	\begin{equation}
		A_2 \leq \frac{2 \eta \mathrm{L} (N+1) (2\tau + 1)}{(1-\mu^2)(4 - \eta (3\tau+4))} \left(\frac{1}{S} \left((4\tau-1)\xi_1 + \frac{8\eta E\left(\tau -\frac{1}{2}\right)}{\mathrm{L}} \sigma^2 \right) + \frac{4 \eta E}{\mathrm{L}}\sigma^2\right)
		\label{eqn:theo-grad-avg-upperbound-noisy-v2-A2-bound}
	\end{equation}
	
	Putting together the bound on $A_1$ from \eqref{eqn:theo-grad-avg-upperbound-noisy-v2-A1-bound}
	and $A_2$ from \eqref{eqn:theo-grad-avg-upperbound-noisy-v2-A2-bound} we get
	\begin{multline}
		\frac{1}{S}\sum_{s=1}^{S} \expectation \left[\norm{\nabla \Lloss \left(\paramstep{\vect{\Theta}}{s}\right)}_2^2\right] < \frac{\tau \mathrm{L}}{S \mu^2 \eta(1-\eta(\tau+1))}\xi_{1} + 	
		\frac{\eta \left( \tau + 2 \right)}{2 \mu^2 (N+1)(1-\eta(\tau+1))} \sigma^2 \\
		+ \frac{2 \eta \mathrm{L} (N+1) (2\tau + 1)}{(1-\mu^2)(4 - \eta (3\tau+4))} \left(\frac{1}{S} \left((4\tau-1)\xi_1 + \frac{8\eta E\left(\tau -\frac{1}{2}\right)}{\mathrm{L}} \sigma^2 \right) + \frac{4 \eta E}{\mathrm{L}}\sigma^2\right).
		\label{eqn:theo-grad-avg-upperbound-noisy-v2-grad-blockl-sum-bound}
	\end{multline}
	Rearranging the terms in above we get
	\begin{multline}
		\frac{1}{S}\sum_{s=1}^{S} \expectation \left[\norm{\nabla \Lloss \left(\paramstep{\vect{\Theta}}{s}\right)}_2^2\right] < \frac{\xi_1}{S} \left( \frac{\tau \mathrm{L}}{\mu^2 \eta(1-\eta(\tau+1))} + \frac{2 \eta \mathrm{L} (N+1) (2\tau + 1)(4\tau - 1)}{(1-\mu^2)(4 - \eta (3\tau+4))} \right)
		\\
		+ \sigma^2 \left( \frac{\eta \left( \tau + 2 \right)}{2 \mu^2 (N+1)(1-\eta(\tau+1))} + \frac{2 \eta (N+1) (2\tau + 1)}{(1-\mu^2)(4 - \eta (3\tau+4))} \left( \frac{8\eta E\left(\tau -\frac{1}{2}\right)}{S} + 4 \eta E \right) \right).
	\end{multline}
\end{proof}


\end{document}